\documentclass[letterpaper,journal]{IEEEtran}
\usepackage[utf8]{inputenc}
\usepackage[T1]{fontenc}
\usepackage{amsmath,amssymb,amsfonts,amsthm}
\usepackage{graphicx}
\usepackage{subcaption}
\usepackage{booktabs,array,multirow,tabularx,threeparttable}
\usepackage{microtype}
\usepackage{xcolor}
\usepackage{bm}
\usepackage{textcomp}
\usepackage{url}

\usepackage[numbers,sort&compress]{natbib}
\usepackage[hidelinks]{hyperref}
\newcommand{\abs}[1]{\lvert#1\rvert}

\newtheorem*{theorem*}{Theorem}
\newtheorem{theorem}{Theorem}
\newtheorem{definition}{Definition}
\newtheorem*{lemma*}{Lemma}
\newtheorem{lemma}{Lemma}
\newtheorem{remark}{Remark}

\newtheorem{assumption}{Assumption}

\begin{document}

\title{Fine-Grained Analysis of Catastrophic Forgetting in Continual Learning via Algorithmic Stability}

\author{Guodong Zheng\textsuperscript{\rm 1},
        Yan Sun\textsuperscript{\rm 2},
        Peng Wang\textsuperscript{\rm 1},
        Enneng Yang\textsuperscript{\rm 1},
        Jiafei Lyu\textsuperscript{\rm 3},
        Xiaochun Cao\textsuperscript{\rm 1},
        Dacheng Tao\textsuperscript{\rm 2},
        Li Shen\textsuperscript{\rm 1}
        
\thanks{\textsuperscript{\rm 1}Guodong Zheng, Peng Wang, Enneng Yang, Xiaochun Cao and Li Shen are with the School of Cyber Science and Technology, Shenzhen Campus of Sun Yat-sen University, Shenzhen, China.
(E-mail: zhenggd3@mail.sysu.edu.cn; wangp389@mail.sysu.edu.cn; ennengyang@gmail.com; caoxiaochun@mail.sysu.edu.cn; mathshenli@gmail.com).}
\thanks{\textsuperscript{\rm 2}Yan Sun and Dacheng Tao are with the College of Computing and Data Science and the Generative AI Lab at Nanyang Technological University, Singapore. (Email: sun.yan.95@outlook.com; dacheng.tao@gmail.com))}%
\thanks{\textsuperscript{\rm 3}Jiafei Lyu is with the Tencent Hunyuan, Shenzhen, China. (Email: dmksjfl@gmail.com}}%

\markboth{Journal of \LaTeX\ Class Files,~Vol.~X, No.~X, XXX~XXX}%
{Shell \MakeLowercase{\textit{et al.}}: A Sample Article Using IEEEtran.cls for IEEE Journals}

\maketitle

\begin{abstract}
Understanding catastrophic forgetting, in which learning new tasks degrades performance on previously learned tasks, remains a fundamental challenge in continual learning (CL). Existing theoretical analyses often rely on specific model or task structures and offer limited insight into how iterative optimization affects forgetting. To address this gap, we develop an algorithmic stability framework for expected population forgetting under SGD without prescribing a particular model architecture. Specifically, we decompose forgetting into intra-task and cross-task generalization gaps and empirical forgetting. To bound the generalization gaps, we track how a sample perturbation affects updates within a task and propagates through subsequent tasks. We consider constant, intra-task decaying, and continuously decaying learning-rate schedules. Comparing the two decaying schedules allows us to characterize how restarting learning-rate decay at task boundaries affects the propagation of historical sample perturbations. For empirical forgetting, we derive bounds for all three schedules by retaining cumulative gradient alignment along the optimization trajectory, without requiring convexity. Combining these results yields upper bounds on expected forgetting that capture the effects of training set size, the number of updates per task, the total number of tasks, learning-rate schedules, and cumulative gradient alignment. Finally, we conduct experiments on Split CIFAR-100 and Split MiniImageNet to examine the empirical implications of our theoretical findings.
\end{abstract}

\begin{IEEEkeywords}
Catastrophic forgetting, algorithmic stability, intra-task generalization gap, cross-task generalization gap, empirical forgetting.
\end{IEEEkeywords}

\section{Introduction}
% 导言区加入：\usepackage{tabularx}
\begin{table*}[t]
\begin{threeparttable}
\centering
\small
\caption{Results for the IGG, CGG, and EF under convex and non-convex loss functions with three learning-rate (LR) schedules: constant (C), intra-task decaying (ID), and continuously decaying (CD). Here, $T$ is the number of tasks and $i\in[T-1]$ is the index of a previous task. $N$ and $n$ denote the number of iterations and training samples per task, respectively. $L$ and $\beta$ are the Lipschitz and smoothness constants. We define $p=\frac{c\beta}{c\beta+1}$, where $c$ is the constant factor in the learning rate. The term $\mathcal{G}_{T,i}$ is cumulative gradient alignment defined in Equation~\eqref{Eqn: EF gradient inner product}. For the non-convex ID setting, the IGG and CGG upper bounds reported in the table apply when $2cL^2<N$ and $2cL^2(eN)^{c\beta(T-i)}<N$, respectively. When a condition is not satisfied, the corresponding upper bound is $N/n$. The full assumptions and detailed piecewise results are provided in Section \ref{sec:theoretical-analysis}.}
\label{tab:results}
\begin{tabularx}{\textwidth}{@{} l l >{\centering\arraybackslash}X >{\centering\arraybackslash}X >{\centering\arraybackslash}c @{}} 
\toprule
\textbf{Case} & \textbf{LR} & \textbf{IGG $\epsilon_{i,i}$} & \textbf{CGG $\epsilon_{T,i}$} & \textbf{EF $\tilde{\epsilon}_{T,i}$} \\
\midrule
\multirow{5}{*}{\rotatebox{90}{\textbf{Convex}}}
    & C  & $\frac{2cL^2N}{n}$ & $\frac{2cL^2N}{n}$ & $\scriptstyle -\mathcal{G}_{T,i}+\frac{\beta c^2L^2}{2}N(T-i)$ \\
    \noalign{\vskip 1mm} 
    \cmidrule{2-5}
    & ID & $\frac{2cL^2\ln(eN)}{n}$ & $\frac{2cL^2\ln(eN)}{n}$ & $\scriptstyle -\mathcal{G}_{T,i}+\frac{\beta c^2L^2}{2}(2-1/N)(T-i)$ \\
    \noalign{\vskip 1mm}
    \cmidrule{2-5}
    & CD & $\frac{2cL^2}{n}\ln\!\left(1+\frac{1}{i-1}\right)$& $\frac{2cL^2}{n}\ln\!\left(1+\frac{1}{i-1}\right)$ & $\scriptstyle -\mathcal{G}_{T,i}+\frac{\beta c^2L^2}{2}(1/(iN)-1/(TN))$  \\
\midrule
\multirow{3}{*}{\rotatebox{90}{\textbf{Non-convex}}}
    & ID & $\begin{gathered}\frac{(2cL^2)^{1-p}N^p}{pn}-\frac{2L^2}{\beta n}\end{gathered}$
         & $e^{p(T-i)}\mathcal{K}_{N^{T-i+1}}-\frac{2L^2(eN)^{c\beta(T-i)}}{\beta n}$
         & $\scriptstyle -\mathcal{G}_{T,i}+\frac{\beta c^2L^2}{2}(2-1/N)(T-i)$ \\
    \cmidrule{2-5}
    & \multirow{2}{*}{CD\tnote{a}}
        & $\mathcal{C}_{1}^1:\ \mathcal{K}_{iN}-\frac{(i-1)N}{n}-\frac{2L^2}{\beta n}$
        & $\mathcal{C}_{2}^1:\ \mathcal{K}_{TN}-\frac{(i-1)N}{n}-\frac{2L^2}{\beta n}(T/i)^{c\beta}$
        & \multirow{2}{*}{$\scriptstyle -\mathcal{G}_{T,i}+\frac{\beta c^2L^2}{2}(1/(iN)-1/(TN))$} \\
        \cmidrule{3-4}
    &
        & $\mathcal{C}_{1}^2:\ \frac{2L^2}{\beta n}\bigl[(\frac{i}{i-1})^{c\beta}-1\bigr]$
        & $\mathcal{C}_{2}^2:\ \frac{2L^2}{\beta n}\bigl[(\frac{T}{i-1})^{c\beta}-(\frac{T}{i})^{c\beta}\bigr]$
        & \\
\bottomrule
\end{tabularx}
\begin{tablenotes}
    \item [a] Let $\mathcal{C}_1=(2cL^2)^{1-p}(iN)^p-(i-1)N$ and $\mathcal{C}_2=(2cL^2)^{1-p}(TN)^p-(i-1)N$. For $j=1,2$, $\mathcal{C}_j^1$ denotes $0<\mathcal{C}_j<N$, and $\mathcal{C}_j^2$ denotes $\mathcal{C}_j\leq0$. When $\mathcal{C}_j\geq N$, the corresponding bound is $N/n$.
\end{tablenotes}
\end{threeparttable}
\vspace{-1em}
\end{table*}
Continual learning (CL) aims to learn from a sequence of non-stationary data streams and tasks over time. However, this classical training paradigm remains vulnerable to a fundamental challenge, known as \textit{catastrophic forgetting} \citep{mccloskey1989catastrophic,mcclelland1995there}, in which the model exhibits a substantial degradation in performance on previously learned tasks as it adapts to newly introduced tasks.

To mitigate catastrophic forgetting, several empirical methods have been proposed and have achieved promising results~\citep{Forgetting_Survey_2024}. However, the theoretical understanding remains much less developed. Existing analyses have largely focused on the \textit{linear regime}~\citep{evron2022catastrophic,lin2023theory,ding2024understanding,zhao2024statistical,banayeeanzade2025theoretical} or the \textit{neural tangent kernel (NTK) regime}~\citep{doan2021theoretical,karakida2022learning,zhengunderstanding}. While linear models yield explicit forgetting bounds, the resulting guarantees depend on the assumed linear structure. NTK-based analyses extend the study to neural networks, but typically rely on infinite-width limits. Moreover, most of these studies analyze forgetting through closed-form solutions to the optimization problem, abstracting away the fine-grained dynamics of iterative updates and yielding coarse bounds. Recently, \citet{zhengunderstanding} analyze \textit{forgetting dynamics} through NTK gradient flow, which captures more detailed trajectories than conventional static analyses. \citet{pmlr-v306-taheri26a} characterize forgetting in two-layer quadratic neural networks in the kernel regime, deriving explicit empirical forgetting bounds for orthogonal XOR-cluster tasks and using algorithmic stability to control delayed generalization. These studies provide insights into how forgetting evolves during training, while their guarantees remain tied to specific kernel approximations, network architectures, or task structures. This motivates an analysis of  training dynamics, without prescribing a particular network architecture, that tracks both within-task updates and the propagation of their effects across tasks.

In this work, we develop an algorithmic stability analysis of expected forgetting in CL. We quantify forgetting as the expected increase in population risk on historical tasks and decompose it into intra-task generalization gaps (IGG), cross-task generalization gaps (CGG), and empirical forgetting (EF). To bound the generalization terms, we characterize stability under a sample perturbation in either the current task or a historical task. We follow SGD trajectories to track how the perturbation affects updates within that task and propagates through subsequent tasks. This analysis yields generalization bounds for constant, intra-task decaying, and continuously decaying learning rate schedules in the convex setting, and for the two decaying schedules in the non-convex setting. Under intra-task decay, the learning rate decreases from the same initial value within each task, whereas under continuous decay, it continues decreasing across task boundaries. Comparing these schedules allows us to characterize how resetting the learning rate at task boundaries affects the propagation of historical sample perturbations. For EF, we analyze changes in empirical risk on historical tasks along the optimization trajectory, retaining the cumulative inner products between gradients of historical tasks and those of the current task. The EF analysis covers all three learning rate schedules without requiring convexity. Combining these results yields bounds on expected population forgetting that explicitly account for both generalization gaps and empirical risk changes.

As summarized in Table~\ref{tab:results}, the bounds reveal distinct effects of learning-rate scheduling on generalization and empirical forgetting. For convex losses with step sizes at most $2/\beta$, the IGG and CGG bounds scale as $\mathcal{O}(N/n)$ under constant learning rates and $\mathcal{O}(\log(eN)/n)$ under intra-task decay, while continuous decay gives bounds independent of $N$ for historical tasks $i>1$, with the other parameters fixed. For non-convex losses and $i>1$, the   IGG and CGG bounds under continuous decay are no larger than those under intra-task decay at the same parameter values. In particular, for fixed $c,\beta,L,T,i$ with $i>1$, the CGG bound under continuous decay is $\mathcal{O}(1/n)$ uniformly in $N$. Thus, increasing the local training length does not impose an additional growth restriction for this bound to vanish as $n$ increases.  These results show how carrying learning-rate decay across task boundaries changes the dependence of the generalization bounds on local training length. The EF bound further captures the joint contribution of cumulative gradient alignment and a second-order term controlled by squared learning rates. With the other parameters fixed, this second-order term scales as $\mathcal{O}(N)$, $\mathcal{O}(1)$, and $\mathcal{O}(1/N)$ under constant, intra-task decaying, and continuously decaying learning rates, respectively. Combining the generalization and EF bounds characterizes how perturbation propagation and trajectory dependent gradient interactions jointly enter the population forgetting bound. Finally, we conduct experiments on Split CIFAR-100 and Split MiniImageNet to examine the empirical implications of our theoretical findings.

Our main contributions can be summarized as follows:
\begin{itemize}
    \item We derive explicit IGG and CGG bounds for continual SGD by tracking sample perturbations within a task and through subsequent tasks. These bounds characterize the effects of sample size, local training length, the total number of tasks, and learning-rate schedules.
    \item We derive an EF bound in terms of cumulative gradient alignment and an explicit second-order term involving squared learning rates. The bound reveals how positive cumulative gradient alignment contributes to controlling empirical forgetting.
    \item We derive upper bounds on expected forgetting through 
    combining the IGG, CGG, and EF bounds, linking  generalization with changes in empirical risk along the continual learning trajectory.
\end{itemize}

\section{Related Work}
\textbf{Empirical studies in CL.}
Continual learning has made substantial progress in empirical methods for mitigating catastrophic forgetting. We divide these methods into three main families: (1) replay-based methods, which either store data from previous tasks and replay it during training on the current task \citep{chaudhry2018efficient,riemer2018learning,buzzega2020dark}, or store gradient or feature information from prior tasks and constrain the current-task gradient to be orthogonal to the stored information \citep{farajtabar2020orthogonal,sahagradient,lin2022trgp,yang2023data,yang2025revisiting_TPAMI_2025}; (2) regularization-based methods, which aim to restrain updates of parameters that are important for previous tasks \citep{kirkpatrick2017overcoming,zenke2017continual}, or maintain consistency of intermediate features by adding a regularization term to the loss \citep{li2017learning}; and (3) architecture-based methods, which dynamically expand or adapt the model architecture to preserve knowledge from previous tasks \citep{rusu2016progressive,yoon2018lifelong,wang2022sparcl}.

\textbf{Theoretical analysis of CL.}
Recent theoretical studies on CL have mainly focused on catastrophic forgetting and generalization. Broadly speaking, the existing literature can be divided into two main lines of work: linear model analysis and NTK-based analysis. Under the linear regime, \citet{evron2022catastrophic,lin2023theory} investigate forgetting and generalization under the assumptions of overparameterized linear models. \citet{ding2024understanding} study forgetting in both overparameterized and underparameterized regimes while relaxing the data assumptions to fourth-moment conditions. Moreover, several works use linear models to analyze concrete methods for mitigating forgetting, including replay-based methods \citep{banayeeanzade2025theoretical,zheng2025towards,deng2025unlocking}, regularization-based methods \citep{li2023fixed,zhao2024statistical,li2026memory,karpel2026optimal}, and mixture-of-experts methods \citep{li2025theory}. 
The other main line of theoretical work relies on the NTK regime. Specifically, \citet{bennani2020generalisation} establish generalization error bounds for CL based on the NTK and Rademacher complexity. \citet{doan2021theoretical} analyze forgetting by introducing the NTK overlap matrix as a task similarity metric. \citet{karakida2022learning} study forgetting and generalization error through learning-curve analysis between two tasks. \citet{zhengunderstanding} further characterize the training-time dynamics of forgetting and generalization in the general CL setting. \citet{pmlr-v306-taheri26a} analyze forgetting in two-layer quadratic neural networks in the kernel regime, deriving explicit empirical forgetting bounds for orthogonal XOR-cluster tasks and using algorithmic stability to control delayed generalization. \citet{liu2026parameter} derive generalization bounds for parameter-efficient fine-tuning in CL. However, these theoretical analyses have several limitations: (1) they rely on restrictive model assumptions, such as linear models, infinite-width models, or infinitesimal learning rates, which make the results less general; and (2) the upper bounds derived in the NTK setting often involve complicated kernel matrices, making them difficult to interpret and generalize. In contrast, we analyze forgetting through a fine-grained characterization of the training trajectory without restrictive model assumptions. Moreover, our bounds depend only on a few readily interpretable quantities, such as the training set size, local training length, learning-rate schedule, and gradient inner products between tasks.

The work most closely related to ours is that of \citet{pmlr-v306-taheri26a}, who analyze forgetting using algorithmic stability. Our analysis differs in three aspects: (1) Their theoretical results are developed for two-layer neural networks with quadratic activations and fixed output-layer weights, whereas our analysis does not prescribe a particular network architecture. (2) Their explicit empirical forgetting guarantees rely on orthogonal XOR-cluster tasks and the specified network model. Our  EF bound requires only that each sample loss be Lipschitz continuous and smooth with respect to the model parameters, without imposing these structural assumptions. Their delayed generalization analysis also uses Lipschitz and smoothness assumptions on the loss. (3) Their main forgetting and generalization theorems adopt constant learning rates, whereas we examine constant, intra-task decaying, and continuously decaying learning rates.

%In addition, \citet{wan2025understanding,dingprovable} leverage feature learning to analyze replay-based methods.

\section{Preliminary}
\subsection{Continual Learning Setting}
We consider the standard CL setting with $T\geq2$ sequential training tasks. For any positive integer $n$, let $[n]:=\{1,\dots,n\}$. For each task $\tau\in[T]$, let $\mathcal{D}_\tau$ denote the data distribution. To simplify the analysis, we assume that each task dataset contains $n$ samples, and write $S_\tau=(z_{\tau,1},\dots,z_{\tau,n})$ for the corresponding training dataset drawn i.i.d.\ from $\mathcal{D}_\tau$. The datasets are independent across tasks. We denote the task sequence by $S_{1:T}:=\{S_1,\ldots,S_T\}$, which collects the datasets from tasks $1$ to $T$. Let $f$ denote a differentiable loss function on the parameter space $\Omega=\mathbb{R}^d$. We adopt stochastic gradient descent (SGD) as our update rule, which can be formulated as
\[
w_{\tau,t+1}=w_{\tau,t}-\eta_{\tau,t}\nabla_w f(w_{\tau,t};z_{\tau,i_t}),
\]
where $\eta_{\tau,t}\geq0$ is the learning rate and $i_t$ is sampled uniformly from $[n]$ with replacement, independently across all updates and independently of the datasets and initialization. Here, $w_{\tau,t}$ denotes the model parameters at step $t$ for task $\tau$. Each task is trained for $N$ steps indexed by $t=0,\ldots,N-1$. We write $w_\tau=w_{\tau,N}$ for the model parameters obtained after training on task $\tau$, with $w_{\tau,0}=w_{\tau-1}$ and an initialization $w_0$ independent of all task datasets.

Furthermore, we define the population risk of $w_\tau$ on $\mathcal{D}_i$ as $R_{\mathcal{D}_i}(w_\tau)$ and the empirical risk of $w_\tau$ on $S_i$ as $R_{S_i}(w_\tau)$, given by
\[
\begin{aligned}
R_{\mathcal{D}_i}(w_\tau)&:=\mathbb{E}_{z\sim\mathcal{D}_i} f(w_\tau;z),\\
R_{S_i}(w_\tau)&:=\frac{1}{n}\sum_{k=1}^{n} f(w_\tau;z_{i,k}).
\end{aligned}
\]

We next adopt the definition of forgetting from \citet{zhengunderstanding,pmlr-v306-taheri26a}, which characterizes the model’s performance over the underlying data distribution. The formal definition is given in Definition~\ref{Def: forgetting}.

\begin{definition}[\textbf{Forgetting in expectation}]\label{Def: forgetting}
For any $i\in[T]$, let $w_i=\mathcal{A}(S_{1:i})$ denote the model obtained by the randomized algorithm $\mathcal{A}$ after training on the task sequence $S_{1:i}$. We define forgetting in expectation as
\begin{equation}\label{Eqn: forgetting}
    F_T = \frac{1}{T-1}\sum_{i=1}^{T-1}\mathbb{E}_{S_{1:T},\mathcal{A}}\big[R_{\mathcal{D}_i}(w_T) - R_{\mathcal{D}_i}(w_i)\big].
\end{equation}
\end{definition}

\textbf{Discussion.}
Definition~\ref{Def: forgetting} generalizes backward transfer (BWT)~\citep{lopez2017gradient}, a conventional metric for evaluating forgetting, in two respects. First, BWT measures forgetting on a test dataset, whereas $F_T$ measures forgetting over the underlying data distribution. Second, $F_T$ takes an expectation over the randomness in both the training dataset and the learning algorithm. These extensions yield a theoretical measure of expected population forgetting, with smaller values of $F_T$ indicating less forgetting.

\subsection{Algorithmic Stability} \label{Sec: algorithmic-stability}
In the non-CL setting, standard algorithmic stability typically concerns perturbations in the current training data only. In CL, by contrast, the model is exposed to a sequence of tasks, so perturbations at different tasks can propagate differently through the subsequent training trajectory. To reflect this sequential structure, we propose intra-task uniform stability for perturbations on the current task and cross-task uniform stability for perturbations on a historical task.

\begin{definition}[\textbf{Intra-task uniform stability}]\label{Def: uniform-stability-ii}
For any $i\in[T]$, a randomized algorithm $\mathcal{A}$ is said to be $\epsilon_{i,i}$-uniformly stable on task $i$ after training on $i$ tasks if, for any datasets $S_{1:i}, S_{1:i}' \in \mathcal{Z}^{i\times n}$ that differ in at most one example in task $i$, we have
\begin{equation}
    \sup_{z}\, \mathbb{E}_{\mathcal{A}}\!\left[f(\mathcal{A}(S_{1:i});z)-f(\mathcal{A}(S_{1:i}');z)\right]\leq \epsilon_{i,i}.
\end{equation}
\end{definition}

\begin{definition}[\textbf{Cross-task uniform stability}]\label{Def: uniform-stability-Ti}
For any $i\in[T-1]$, a randomized algorithm $\mathcal{A}$ is said to be $\epsilon_{T,i}$-uniformly stable on task $i$ after training on $T$ tasks if, for any datasets $S_{1:T}, S_{1:T}' \in \mathcal{Z}^{T\times n}$ that differ in at most one example in task $i$, we have
\begin{equation}
    \sup_{z}\, \mathbb{E}_{\mathcal{A}}\!\left[f(\mathcal{A}(S_{1:T});z)-f(\mathcal{A}(S_{1:T}');z)\right]\leq \epsilon_{T,i}.
\end{equation}
\end{definition}

In the following analysis, $\epsilon_{i,i}$ and $\epsilon_{T,i}$ denote the infima of the nonnegative constants satisfying Definitions~\ref{Def: uniform-stability-ii} and~\ref{Def: uniform-stability-Ti}, respectively.

\begin{remark}
    As shown in Definition~\ref{Def: uniform-stability-ii}, the stability measure $\epsilon_{i,i}$-uniform stability characterizes the algorithm's stability to perturbations in the data of the current task during CL. In contrast, Definition~\ref{Def: uniform-stability-Ti} extends this notion to perturbations in a historical task, thereby capturing how the effect of a local change can propagate through the subsequent training trajectory. This reflects the cumulative of stability in CL, where the influence of earlier tasks may persist and affect the final model after training on all tasks.
\end{remark}

In this work, we follow \citet{hardt2016train,pmlr-v235-wang24bl,chen2025error,liu2025understanding,zhu2026stability} to adopt the standard assumptions used in algorithmic stability analysis:
\begin{assumption}[\textbf{$L$-Lipschitz}]\label{assump: Lipschitz}
For every $z\in\mathcal{Z}$, the function $f(\cdot;z):\Omega\rightarrow\mathbb{R}$ is $L$-Lipschitz, that is, for all $u,v\in\Omega$,
\[
|f(u;z)-f(v;z)|\leq L\|u-v\|.
\]
\end{assumption}

\begin{assumption}[\textbf{$\beta$-smooth}]\label{assump: smooth}
For every $z\in\mathcal{Z}$, the function $f(\cdot;z):\Omega\rightarrow\mathbb{R}$ is $\beta$-smooth, that is, for all $u,v\in\Omega$,
\[
\|\nabla f(u;z)-\nabla f(v;z)\|\leq \beta\|u-v\|.
\]
\end{assumption}

\begin{assumption}[\textbf{Convex}]\label{assump: convex}
For every $z\in\mathcal{Z}$, the function $f(\cdot;z):\Omega\rightarrow\mathbb{R}$ is convex, that is, for all $u,v\in\Omega$,
\[
f(u;z)\geq f(v;z)+\langle \nabla f(v;z),u-v\rangle.
\]
\end{assumption}

% \begin{assumption}[\textbf{Task-similarity}]\label{assump: task_similarity}
% For any two data distributions $\mathcal{D}_i$ and $\mathcal{D}_j$ associated with tasks 
% $i,j\in[T]$, the squared Wasserstein distance between them is uniformly bounded. Specifically, there exists a constant $\sigma^2\ge 0$ such that
% \[
% W_2^2(\mathcal{D}_i,\mathcal{D}_j)\le \frac{1}{3}\sigma^2,
% \qquad \forall\, i,j\in[T].
% \]
% \end{assumption}

\section{Theoretical Analysis}\label{sec:theoretical-analysis}
In this section, we provide a theoretical analysis of catastrophic forgetting. To comprehensively analyze the forgetting measure $F_T$ in Equation~\eqref{Eqn: forgetting}, it is important to derive a fine-grained decomposition that allows us to understand forgetting from multiple perspectives:

\begin{equation}\label{Eqn: forgetting decomposed}
\begin{aligned}
F_T = \frac{1}{T-1}\sum_{i=1}^{T-1}\Bigl[&
\underbrace{\mathbb{E}_{S_{1:T},\mathcal{A}}[R_{\mathcal{D}_i}(w_T)-R_{S_i}(w_T)]}_{\text{Cross-Task Generalization Gap}}\\
+&\underbrace{\mathbb{E}_{S_{1:T},\mathcal{A}}[R_{S_i}(w_T)-R_{S_i}(w_i)]}_{\text{Empirical Forgetting}}\\
+&\underbrace{\mathbb{E}_{S_{1:T},\mathcal{A}}[R_{S_i}(w_i)-R_{\mathcal{D}_i}(w_i)]}_{\text{Intra-Task Generalization Gap}}\Bigr].
\end{aligned}
\end{equation}
Based on the decomposition in Equation~\eqref{Eqn: forgetting decomposed}, we analyze the forgetting measure $F_T$ from the following three aspects.

(1) \textit{Intra-Task Generalization Gap} (Section~\ref{Sec: IGG}). This term captures the generalization behavior of the trained model on the current task. A smaller absolute intra-task generalization gap indicates closer agreement between the empirical risk and the population risk on the current task.

(2) \textit{Cross-Task Generalization Gap} (Section~\ref{Sec: CGG}). This term reflects the generalization behavior of the model trained on later tasks when evaluated on earlier task distributions. A smaller absolute cross-task generalization gap indicates closer agreement between the empirical risk and the population risk on previous tasks.
 
(3) \textit{Empirical Forgetting} (Section~\ref{Sec: EF}). This term captures how much performance on the training set of task $i$ deteriorates after learning $T$ tasks. A smaller empirical forgetting indicates that the model forgets less of the knowledge from the training set of a previous task.

In what follows, we derive upper bounds for these three terms under convex  and non-convex loss functions, respectively, to provide a comprehensive understanding of catastrophic forgetting in CL.

\subsection{Intra-Task Generalization Gap}\label{Sec: IGG}
The intra-task generalization gap (IGG) describes the generalization performance on the current task. We next present an important theorem showing that intra-task uniform stability implies an upper bound on the intra-task generalization gap in expectation. The result is stated in Theorem~\ref{Thm: IGG}, and the corresponding proof is given in Appendix~\ref{Sec: Proof of IGG}.

\begin{theorem}\label{Thm: IGG} For any $i\in[T]$, if $\mathcal{A}$ is $\epsilon_{i,i}$-uniformly stable, then
    \begin{equation*}
        |\mathbb{E}_{S_{1:i},\mathcal{A}}[R_{\mathcal{D}_i}(\mathcal{A}(S_{1:i}))-R_{S_i}(\mathcal{A}(S_{1:i}))]|\leq\epsilon_{i,i}.
    \end{equation*}
\end{theorem}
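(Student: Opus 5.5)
This is the standard Hardt–Recht–Singer replace-one argument, adapted so that only the last task's dataset is perturbed. I would prove it by a symmetrization over the $n$ samples of task $i$.

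\textbf{Ghost sample.} Draw an independent ghost sample $S_i'=(z_{i,1}',\dots,z_{i,n}')$ i.i.d.\ from $\mathcal{D}_i$, independent of $S_{1:i}$ and of the algorithm's randomness. For each $k\in[n]$, let $S_{1:i}^{(k)}$ denote the sequence that agrees with $S_{1:i}$ except that $z_{i,k}$ in task $i$ is replaced by $z_{i,k}'$. This pair differs in at most one example in task $i$, which is exactly the situation covered by Definition~\ref{Def: uniform-stability-ii}.

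\textbf{Rewriting the population risk.} Since $z_{i,k}'$ is independent of $S_{1:i}$ and of $\mathcal{A}$, we have
\begin{equation*}
\mathbb{E}_{S_{1:i},\mathcal{A}}[R_{\mathcal{D}_i}(\mathcal{A}(S_{1:i}))]=\frac{1}{n}\sum_{k=1}^n\mathbb{E}_{S_{1:i},S_i',\mathcal{A}}[f(\mathcal{A}(S_{1:i});z_{i,k}')].
\end{equation*}
Next I swap $z_{i,k}$ and $z_{i,k}'$. This leaves the joint law of $(S_{1:i},S_i')$ invariant, because both are i.i.d.\ from $\mathcal{D}_i$ and independent of $S_{1:i-1}$. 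It also leaves the law of the algorithm's randomness unchanged, since the SGD indices $i_t$ are independent of the data. Under the swap, $S_{1:i}$ becomes $S_{1:i}^{(k)}$ and $z_{i,k}'$ becomes $z_{i,k}$, so
\begin{equation*}
\mathbb{E}[f(\mathcal{A}(S_{1:i});z_{i,k}')]=\mathbb{E}[f(\mathcal{A}(S_{1:i}^{(k)});z_{i,k})].
\end{equation*}
The empirical risk is $\mathbb{E}[R_{S_i}(\mathcal{A}(S_{1:i}))]=\frac1n\sum_k\mathbb{E}[f(\mathcal{A}(S_{1:i});z_{i,k})]$. Subtracting gives
\begin{equation*}
\mathbb{E}[R_{\mathcal{D}_i}-R_{S_i}]=\frac1n\sum_{k}\mathbb{E}_{S_{1:i},S_i'}\Bigl[\mathbb{E}_{\mathcal{A}}\bigl[f(\mathcal{A}(S_{1:i}^{(k)});z_{i,k})-f(\mathcal{A}(S_{1:i});z_{i,k})\bigr]\Bigr].
\end{equation*}

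\textbf{Applying stability in both directions.} Fix the data. The inner $\mathbb{E}_{\mathcal{A}}$ term is at most $\sup_z\mathbb{E}_{\mathcal{A}}[f(\mathcal{A}(S^{(k)}_{1:i});z)-f(\mathcal{A}(S_{1:i});z)]\le\epsilon_{i,i}$. Since the definition quantifies over all pairs differing in one task-$i$ example, it also applies with the roles of $S_{1:i}$ and $S_{1:i}^{(k)}$ exchanged, so the same term is at least $-\epsilon_{i,i}$. Averaging over $k$ and taking the outer expectation yields the two-sided bound, i.e.\ the absolute-value statement.

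\textbf{Main obstacle.} The only delicate step is justifying the exchangeability swap in the CL setting. The swap must leave $S_{1:i-1}$ fixed and must not interact with the algorithm's randomness; the assumed independence of tasks, of the initialization $w_0$, and of the sampling indices from the data provides exactly this. One also needs integrability for the expectations to be well defined; I would note that this is implicitly assumed, or that it follows, for instance, from bounded losses.
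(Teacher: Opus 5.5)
Your proposal is correct and follows essentially the same route as the paper: a ghost-sample replace-one symmetrization over the $n$ examples of task $i$, followed by a direct application of intra-task uniform stability. It is slightly more explicit than the paper's proof, since you justify the exchangeability swap and note that the symmetric form of Definition~\ref{Def: uniform-stability-ii} supplies the lower bound $-\epsilon_{i,i}$ needed for the absolute value, whereas the paper simply asserts $|\delta|\le\epsilon_{i,i}$.
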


Theorem~\ref{Thm: IGG} shows that an intra-task uniformly stable algorithm will have a small IGG. This suggests that we can control the uniform stability itself, rather than directly bounding the generalization gap. We then provide the corresponding  upper bound under both convex and non-convex loss functions. 

First, we establish an upper bound for the case where the loss function is \textit{convex} in Theorem~\ref{Thm: IGG bound convex}, and the proof of Theorem~\ref{Thm: IGG bound convex} is given in Appendix~\ref{Sec: proof of IGG bound convex}.

\begin{theorem}\label{Thm: IGG bound convex}
Assume that the loss function $f(\cdot;z)$ satisfies Assumptions~\ref{assump: Lipschitz}, \ref{assump: smooth}, and \ref{assump: convex}. Suppose that SGD is run for $N$ steps on each task $\tau\in[i]$ with step sizes $\eta_{\tau,t}\leq \frac{2}{\beta}$. Then,
\begin{equation}\label{Eqn: bound e_ii convex}
    \epsilon_{i,i}\leq \frac{2L^2}{n}\sum_{t=0}^{N-1} \eta_{i,t}.
\end{equation}
\end{theorem}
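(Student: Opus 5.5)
This is the Hardt et al.\ growth-recursion argument applied to the last task only; the key observation is that earlier tasks contribute nothing. Fix $S_{1:i}$ and $S_{1:i}'$ differing only in one example $z_{i,k}$ versus $z_{i,k}'$ of task $i$. Run the two SGD trajectories with the same initialization $w_0$ and the same index sequence $(i_t)$. This couples the randomness of $\mathcal{A}$.

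\textbf{Reduction to the last task.} Since tasks $1,\dots,i-1$ use identical data and identical indices, the trajectories coincide up to $w_{i-1}=w_{i-1}'$. Hence $\delta_0:=\|w_{i,0}-w_{i,0}'\|=0$. By Assumption~\ref{assump: Lipschitz},
\[
\sup_z \mathbb{E}_{\mathcal{A}}[f(w_i;z)-f(w_i';z)]\le L\,\mathbb{E}\,\delta_N,
\qquad \delta_t:=\|w_{i,t}-w_{i,t}'\|.
\]
So it suffices to show $\mathbb{E}\,\delta_N\le \frac{2L}{n}\sum_{t=0}^{N-1}\eta_{i,t}$.

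\textbf{One-step recursion.} At step $t$ of task $i$, distinguish two cases.
\begin{itemize}
\item With probability $1-1/n$ the sampled index is not $k$. Both runs then apply the same gradient map $G(w)=w-\eta_{i,t}\nabla f(w;z)$. Since $f(\cdot;z)$ is convex and $\beta$-smooth and $\eta_{i,t}\le 2/\beta$, this map is $1$-expansive. The justification is co-coercivity: $\langle\nabla f(u)-\nabla f(v),u-v\rangle\ge \frac1\beta\|\nabla f(u)-\nabla f(v)\|^2$. Expanding $\|G(u)-G(v)\|^2$ then gives at most $\|u-v\|^2$. So $\delta_{t+1}\le\delta_t$.
\item With probability $1/n$ the sampled index is $k$, and the two runs use different examples. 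Then by the triangle inequality and $\|\nabla f\|\le L$,
\[
\delta_{t+1}\le \delta_t+\eta_{i,t}\|\nabla f(w_{i,t};z_{i,k})-\nabla f(w'_{i,t};z'_{i,k})\|\le \delta_t+2\eta_{i,t}L.
\]
\end{itemize}
The index $i_t$ is independent of $\delta_t$. Taking expectations therefore gives $\mathbb{E}\delta_{t+1}\le \mathbb{E}\delta_t+\frac{2L\eta_{i,t}}{n}$.

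\textbf{Conclusion.} Unrolling from $\delta_0=0$ over $t=0,\dots,N-1$ gives $\mathbb{E}\delta_N\le \frac{2L}{n}\sum_t\eta_{i,t}$. Multiplying by $L$ yields \eqref{Eqn: bound e_ii convex}.

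The only delicate point is the non-expansiveness of the gradient step under $\eta\le 2/\beta$, which rests on co-coercivity. This is standard and I would cite \citet{hardt2016train} for it. Observe also that the step-size hypothesis on tasks $\tau<i$ is not actually needed, because $\delta_0=0$ regardless.
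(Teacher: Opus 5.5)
Your proposal is correct and follows the paper's proof essentially step for step. Both use the same coupling (shared initialization and sampled indices) and get $\delta_{i,0}=0$ because tasks $1,\dots,i-1$ coincide. Both use the two-case recursion, applying $1$-expansivity with probability $1-1/n$ and $\eta_{i,t}L$-boundedness with probability $1/n$, and both finish with the Lipschitz step. Your extra remarks, the co-coercivity justification of non-expansiveness and the observation that the step-size condition on tasks $\tau<i$ is unnecessary, are accurate refinements rather than a different route.
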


\begin{remark}
Theorem~\ref{Thm: IGG bound convex} implies that, under convex loss functions, the IGG can be bounded in terms of the cumulative learning rate within task $i$. This bound is consistent with the classical non-CL result of \citet{hardt2016train}. Hence, in the convex setting, the bound on $\epsilon_{i,i}$ is independent of previous tasks and depends only on the optimization process within the current task. Moreover, smaller learning rates and larger numbers of training samples can further reduce this upper bound.
\end{remark}
Moreover, for different learning rate schedules, including the constant learning rate $\eta_{\tau,t}=c$, the intra-task decaying learning rate $\eta_{\tau,t}=\frac{c}{t+1}$, and the continuously decaying learning rate $\eta_{\tau,t}=\frac{c}{(\tau-1)N+t+1}$, Equation~\eqref{Eqn: bound e_ii convex} can be further written as
\begin{equation*}
    \epsilon_{i,i}\leq
\begin{cases}
\frac{2cL^2N}{n}, 
& \text{if } \eta_{\tau,t}=c, \\[2ex]
\frac{2cL^2\ln(eN)}{n}, 
& \text{if } \eta_{\tau,t}=\frac{c}{t+1}, \\[2ex]
\frac{2cL^2}{n}\ln\!\left(1+\frac{1}{i-1}\right), 
& \text{if } \eta_{\tau,t}=\frac{c}{(\tau-1)N+t+1},i>1.
\end{cases}
\end{equation*}
\begin{remark}\label{Rem: IGG convex diff. lr}
The upper bound on $\epsilon_{i,i}$ varies across learning-rate schedules. Under a constant learning rate, the bound grows linearly with the number of SGD steps $N$ on the current task. Under an intra-task decaying learning rate, this growth is reduced to $\mathcal{O}(\ln(eN))$. With other parameters fixed, reducing the number of training steps yields a smaller upper bound under both schedules, providing a theoretical motivation for early stopping. Under the continuously decaying learning-rate schedule, the bound coincides with that under intra-task decay when $i=1$. For $i>1$, the displayed upper bound is independent of $N$ and is instead controlled by the task index $i$.
\end{remark}

Before presenting the IGG bounds for non-convex losses, we introduce the following notation for a recurring term in the subsequent analysis:
\[
\mathcal{K}_N:=\frac{c\beta+1}{nc\beta}(2cL^2)^{\frac{1}{c\beta+1}}N^{\frac{c\beta}{c\beta+1}}.
\]
The subscript specifies the number of iterations, with $c$, $\beta$, $L$, and $n$ held fixed. In particular, for any positive integer $i$, 
\[
\mathcal{K}_{iN}
=\frac{c\beta+1}{nc\beta}(2cL^2)^{\frac{1}{c\beta+1}}(iN)^{\frac{c\beta}{c\beta+1}}.
\]

Then, we derive an upper bound for the more challenging \textit{non-convex} case in Theorem~\ref{Thm: IGG bound non-convex}, and the proof of Theorem~\ref{Thm: IGG bound non-convex} is given in Appendix~\ref{Sec: proof of IGG bound non-convex}. 

\begin{theorem}\label{Thm: IGG bound non-convex}
Assume that the loss function $f(\cdot;z)\in[0,1]$ satisfies Assumptions~\ref{assump: Lipschitz} and \ref{assump: smooth}, with $n\geq2$ and $c,\beta>0$. Suppose that SGD is run for $N$ steps on each task $\tau\in[i]$ with learning rates $\eta_{\tau,t}$. Then,

(a) under intra-task decaying learning rates $\eta_{\tau,t}=\frac{c}{t+1}$:
    \[
    \epsilon_{i,i}\leq 
    \begin{cases} 
          \mathcal{K}_N-\frac{2L^2}{\beta n},&\text{if }2cL^2<N,\\[1ex]
          \frac{N}{n},&\text{if }2cL^2\geq N.
    \end{cases}
    \]
    
(b) under continuously decaying learning rates $\eta_{\tau,t}=\frac{c}{(\tau-1) N+t+1}$ and $i>1$:
    \[
    \epsilon_{i,i}\leq 
    \begin{cases} 
          \frac{2L^2}{\beta n}\left[\left(\frac{i}{i-1}\right)^{c\beta}-1\right],&\text{if }2cL^2\leq \mathcal{C}_1^{\mathrm{CD}},\\[1ex]
          \mathcal{K}_{iN}-\frac{(i-1)N}{n}-\frac{2L^2}{\beta n},
          &\text{if }\mathcal{C}_1^{\mathrm{CD}}<2cL^2<iN,\\[1ex]
          \frac{N}{n},&\text{if }2cL^2\geq iN,
    \end{cases}
    \]
    where 
    \[
    \begin{aligned}
        \mathcal{C}_1^{\mathrm{CD}}&=(i-1)N\left(\frac{i-1}{i}\right)^{c\beta}.
    \end{aligned}
    \]
\end{theorem}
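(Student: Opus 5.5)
We will adapt the non-convex stability argument of \citet{hardt2016train} (their Theorem 3.12) to the continual setting. Fix $S_{1:i}$ and $S'_{1:i}$ that differ in one example of task $i$. Run SGD on both with the same index sequence, and let $\delta_t=\|w_{i,t}-w'_{i,t}\|$. Tasks $1,\dots,i-1$ are identical, so $w_{i,0}=w'_{i,0}$ and $\delta_0=0$. In the non-CL argument the starting point is global step $0$. Here task $i$ starts at global step $(i-1)N$, so the learning-rate offset becomes important under continuous decay. For any $t_0\in\{0,\dots,N\}$, we condition on the event $\delta_{t_0}=0$. Since $f\in[0,1]$ and $f$ is $L$-Lipschitz,
\[
\mathbb{E}|f(w_{i,N};z)-f(w'_{i,N};z)|\le \frac{t_0}{n}+L\,\mathbb{E}[\delta_N\mid\delta_{t_0}=0].
\]
The first term uses the fact that the differing index is sampled with probability at most $t_0/n$ before step $t_0$.

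For the recursion we use the two standard per-step facts. With probability $1-1/n$ the two runs sample the same example, and by $\beta$-smoothness the update is $(1+\eta_t\beta)$-expansive. With probability $1/n$ they sample different examples, and Lipschitzness gives $\delta_{t+1}\le\delta_t+2\eta_tL$. This yields
\[
\mathbb{E}\delta_{t+1}\le\bigl(1+(1-\tfrac1n)\eta_t\beta\bigr)\mathbb{E}\delta_t+\frac{2\eta_tL}{n}\le \exp(\eta_t\beta)\,\mathbb{E}\delta_t+\frac{2\eta_t L}{n}.
\]
Unrolling from $t_0$ and substituting the schedule gives the two cases.

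For (a), take $\eta_t=c/(t+1)$. Then $\prod_{k=t+1}^{N-1}\exp(\eta_k\beta)\le (N/(t+1))^{c\beta}$, so
\[
\mathbb{E}\delta_N\le\frac{2cL}{n}\sum_{t=t_0}^{N-1}\frac{1}{t+1}\Bigl(\frac{N}{t+1}\Bigr)^{c\beta}\le\frac{2L}{\beta n}\Bigl[\Bigl(\frac{N}{t_0}\Bigr)^{c\beta}-1\Bigr].
\]
The bound is therefore $\frac{t_0}{n}+\frac{2L^2}{\beta n}[(N/t_0)^{c\beta}-1]$. Minimizing over real $t_0$ gives $t_0=(2cL^2)^{1/(c\beta+1)}N^{c\beta/(c\beta+1)}$. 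Substituting it gives exactly $\mathcal{K}_N-\frac{2L^2}{\beta n}$. This value is admissible, that is $t_0<N$, precisely when $2cL^2<N$. When $2cL^2\ge N$ we take $t_0=N$ instead, which gives $N/n$.

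For (b), the schedule is $\eta_{i,t}=c/((i-1)N+t+1)$. We write $s=(i-1)N+t$ for the global step and run the same computation with the product taken up to $iN$. This gives
\[
\frac{t_0}{n}+\frac{2L^2}{\beta n}\Bigl[\Bigl(\frac{iN}{(i-1)N+t_0}\Bigr)^{c\beta}-1\Bigr].
\]
We then optimize over $t_0\in[0,N]$. The unconstrained minimizer in the global variable $s_0=(i-1)N+t_0$ is $s_0=(2cL^2)^{1/(c\beta+1)}(iN)^{c\beta/(c\beta+1)}$, and this splits into three regimes. If $s_0\le(i-1)N$, we take $t_0=0$ and obtain $\frac{2L^2}{\beta n}[(i/(i-1))^{c\beta}-1]$. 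A short algebraic check should show that the condition $s_0\le (i-1)N$ is equivalent to $2cL^2\le\mathcal{C}_1^{\mathrm{CD}}=(i-1)N((i-1)/i)^{c\beta}$. If $(i-1)N<s_0<iN$, the interior optimum gives $\mathcal{K}_{iN}-\frac{(i-1)N}{n}-\frac{2L^2}{\beta n}$. If $s_0\ge iN$, which is equivalent to $2cL^2\ge iN$, we take $t_0=N$ and obtain $N/n$.

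The main obstacle is the bookkeeping in the conditioning step and the regime conversion in (b). The conditioning step has to account for the fact that the perturbed index can only be encountered during task $i$. The regime conversion requires verifying that the thresholds on $s_0$ translate exactly to the stated conditions on $2cL^2$. I would also double-check the integral comparison $\sum_{t}\frac{1}{t+1}(t+1)^{-c\beta}\le\frac{1}{c\beta}t_0^{-c\beta}$, including the edge case $t_0=0$ in (b), where the offset $(i-1)N>0$ keeps the sum finite because $i>1$.
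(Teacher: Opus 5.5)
Your overall route matches the paper's: the conditioning on $\delta_{t_0}=0$, the $(1+\eta\beta)$-expansive / $2\eta L$-bounded recursion, unrolling with the offset $(i-1)N$, and the three-regime split in (b). Your stationary points and threshold conversions are correct, for example $s_0\le(i-1)N\iff 2cL^2\le\mathcal{C}_1^{\mathrm{CD}}$, and the endpoint branches are fine because they use $t_0=0$ or $t_0=N$.

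The gap is in the two interior branches: (a) with $2cL^2<N$, and (b) with $\mathcal{C}_1^{\mathrm{CD}}<2cL^2<iN$. The conditioning inequality only holds for integer cutoffs $t_0\in\{0,\dots,N\}$. You instead substitute the real minimizer $t_0^\star$ into the closed form you get after replacing the sum by an integral. That closed form is strictly convex in $t_0$. So whenever $t_0^\star\notin\mathbb{Z}$, its value at every admissible integer is strictly larger than $\mathcal{K}_N-\frac{2L^2}{\beta n}$ (respectively $\mathcal{K}_{iN}-\frac{(i-1)N}{n}-\frac{2L^2}{\beta n}$). Rounding up costs up to an additional $1/n$. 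In (a) with $t_0^\star<1$, the closed form at $t_0=0$ is infinite. As written, you therefore prove a slightly weaker bound, not the theorem's exact expressions.

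The fix, which is what the paper does, is to optimize over the integer cutoff \emph{before} passing to the integral. Before integrating, your unrolled bound has the form $\frac1n\bigl[t_0+\sum_{t=t_0+1}^N D(t)\bigr]$, where
\begin{itemize}
\item $D(u)=2cL^2N^{c\beta}u^{-c\beta-1}$ in (a);
\item $D(u)=2cL^2(iN)^{c\beta}((i-1)N+u)^{-c\beta-1}$ in (b).
\end{itemize}
Since $D$ is nonnegative and nonincreasing, take $m$ to be the number of indices with $D(t)\ge 1$. This gives
\[
\min_{m\in\{0,\dots,N\}}\Bigl\{m+\sum_{t=m+1}^N D(t)\Bigr\}=\sum_{t=1}^N\min\{1,D(t)\}\le\int_0^N\min\{1,D(u)\}\,du .
\]
Split the last integral where $D(u)=1$. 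That point is exactly your $t_0^\star$ (respectively $s_0-(i-1)N$), so the split reproduces the continuous-relaxation value. The slack in the sum-to-integral comparison absorbs the rounding. With this comparison inserted, your argument gives the stated bounds in all branches.
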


\begin{remark}\label{Rem: IGG_ID_non-convex}
Theorem~\ref{Thm: IGG bound non-convex} provides an upper bound for the IGG without assuming convexity of the loss function. Let $p=c\beta/(1+c\beta)$. Under intra-task decaying learning rates, when $2cL^2<N$, the bound scales as $\mathcal{O}(N^p/n)$ for fixed $c,\beta,L>0$, consistent with the non-convex SGD stability rate established by \citet{hardt2016train} in the single-task setting. As $n\to\infty$, the condition $N=o(n^{1/p})$ is sufficient for a vanishing IGG. Thus, the number of training steps can increase with the sample size, provided that it grows more slowly than $n^{1/p}$. When $2cL^2\geq N$, corresponding to a larger initial learning rate or Lipschitz constant relative to the training length, the bound becomes $N/n$. In both cases, increasing the sample size or reducing the number of training steps decreases the displayed upper bound, with the other parameters fixed.
\end{remark}

\begin{remark}\label{Rem: IGG_CD_non-convex}
For $i>1$, continuously decaying learning rates provide stronger control of the IGG bound as the training length increases. When $2cL^2\leq\mathcal{C}_1^{\mathrm{CD}}$, the bound is independent of $N$ and decreases with the task index $i$. Since $i\geq2$, it is at most $\frac{2L^2}{\beta n}(2^{c\beta}-1)$. In the intermediate case $\mathcal{C}_1^{\mathrm{CD}}<2cL^2<iN$, the bound increases with $N$ but remains below $\frac{2L^2}{\beta n}[(i/(i-1))^{c\beta}-1]$. Thus, the dependence on $N$ in this case does not lead to unbounded growth. For the same $c,\beta,L,n,N$, the displayed bound under continuous decay is no larger than that under intra-task decay, and the improvement is strict when $0<2cL^2<iN$. When $2cL^2\geq iN$, both schedules give the bound $N/n$, which remains nontrivial for $N<n$. Moreover, increasing the number of training samples and decreasing the initial learning rate can further reduce the upper bound under this learning-rate schedule.
\end{remark}

\subsection{Cross-Task  Generalization Gap}\label{Sec: CGG}
The cross-task generalization gap (CGG) captures the generalization performance on earlier tasks. Next, we present the relationship between cross-task uniform stability and the expected CGG, which reflects how perturbations on historical tasks propagate through training in the CL setting and affect CGG. The result is stated in Theorem~\ref{Thm: CGG}, and the proof is given in Appendix~\ref{Sec: Proof of CGG}.

\begin{theorem}\label{Thm: CGG}
For any $i\in[T-1]$, if $\mathcal{A}$ is $\epsilon_{T,i}$-uniformly stable, then
\begin{equation*}
    |\mathbb{E}_{S_{1:T},\mathcal{A}}[R_{\mathcal{D}_i}(\mathcal{A}(S_{1:T}))-R_{S_i}(\mathcal{A}(S_{1:T}))]|\leq\epsilon_{T,i}.
\end{equation*}
\end{theorem}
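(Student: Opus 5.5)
We adapt the standard replace-one symmetrization of \citet{hardt2016train} to the case where the perturbed sample sits in a historical task $i$ and the algorithm then keeps training on tasks $i+1,\dots,T$; the argument follows the proof of Theorem~\ref{Thm: IGG}. Let $S_i'=(z_{i,1}',\dots,z_{i,n}')$ be an independent copy of $S_i$, drawn i.i.d.\ from $\mathcal{D}_i$ and independent of $S_{1:T}$. For each $k\in[n]$, let $S_{1:T}^{(k)}$ be the sequence obtained from $S_{1:T}$ by replacing $z_{i,k}$ with $z_{i,k}'$ and leaving every other task dataset unchanged. Then $S_{1:T}$ and $S_{1:T}^{(k)}$ differ in exactly one example of task $i$, which is the situation covered by Definition~\ref{Def: uniform-stability-Ti}.

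The first step rewrites the empirical risk term. By linearity of expectation,
\[
\mathbb{E}_{S_{1:T},\mathcal{A}}[R_{S_i}(\mathcal{A}(S_{1:T}))]=\frac{1}{n}\sum_{k=1}^n\mathbb{E}\,[f(\mathcal{A}(S_{1:T});z_{i,k})].
\]
The pair $(S_{1:T},z_{i,k}')$ has the same joint law as $(S_{1:T}^{(k)},z_{i,k})$. This holds because the samples within task $i$ are i.i.d., the other tasks are independent of task $i$, and the internal randomness of $\mathcal{A}$ (indices $i_t$ and initialization) is independent of the data. Hence
\[
\mathbb{E}\,[f(\mathcal{A}(S_{1:T});z_{i,k})]=\mathbb{E}\,[f(\mathcal{A}(S_{1:T}^{(k)});z_{i,k}')].
\]

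The second step rewrites the population risk term. Since $z_{i,k}'$ is independent of $S_{1:T}$ and of $\mathcal{A}$'s randomness, conditioning gives
\[
\mathbb{E}\,[R_{\mathcal{D}_i}(\mathcal{A}(S_{1:T}))]=\frac{1}{n}\sum_{k=1}^n\mathbb{E}\,[f(\mathcal{A}(S_{1:T});z_{i,k}')].
\]
Subtracting the two expressions, the generalization gap equals
\[
\frac{1}{n}\sum_{k=1}^n\mathbb{E}_{S_{1:T},S_i'}\Bigl[\mathbb{E}_{\mathcal{A}}\bigl[f(\mathcal{A}(S_{1:T});z_{i,k}')-f(\mathcal{A}(S_{1:T}^{(k)});z_{i,k}')\bigr]\Bigr].
\]

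The third step applies stability. The inner $\mathbb{E}_{\mathcal{A}}$ can be taken with the data held fixed because the algorithmic randomness is independent of the data. For fixed data, the inner expectation is at most $\sup_z \mathbb{E}_{\mathcal{A}}[\cdot]\le\epsilon_{T,i}$ by Definition~\ref{Def: uniform-stability-Ti}. Swapping the roles of $S_{1:T}$ and $S_{1:T}^{(k)}$ shows it is also at least $-\epsilon_{T,i}$, since the definition holds for every ordered pair of neighbouring datasets. Averaging over $k$ and the data then yields the absolute-value bound $\epsilon_{T,i}$.

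The only delicate point is the exchangeability claim in the first step. There we must make sure that replacing a sample of task $i$ does not break the distributional symmetry through the later tasks $S_{i+1:T}$. It does not, because those later datasets are independent of $S_i$ and $S_i'$ and are left unchanged by the swap. We also need that the expectations are finite, which holds under boundedness or integrability of $f$, implicitly assumed as in Theorem~\ref{Thm: IGG}, so that Fubini applies.
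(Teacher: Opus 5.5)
Your proposal is correct and follows the same replace-one symmetrization as the paper's proof: an independent copy of task $i$'s data, swapping $z_{i,k}$ with $z_{i,k}'$, exchangeability, and bounding the resulting difference by $\epsilon_{T,i}$. Your explicit two-sided argument (swapping the ordered neighbouring pair) and your remark that the unchanged later tasks $S_{i+1:T}$ preserve the symmetry only make explicit what the paper leaves implicit when it asserts $|\delta|\leq\epsilon_{T,i}$.
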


Theorem~\ref{Thm: CGG} indicates that small cross-task uniform stability leads to a small CGG.  We then analyze CGG under both convex and non-convex loss functions. 

First, we establish an upper bound for the case where the loss function is \textit{convex} in Theorem~\ref{Thm: CGG bound convex}, and the proof of Theorem~\ref{Thm: CGG bound convex} is given in Appendix~\ref{Sec: proof of CGG bound convex}.

\begin{theorem}\label{Thm: CGG bound convex}
Assume that the loss function $f(\cdot;z)$ satisfies Assumptions~\ref{assump: Lipschitz}, \ref{assump: smooth}, and \ref{assump: convex}. Suppose that SGD is run for $N$ steps on each task $\tau\in[T]$ with step sizes $\eta_{\tau,t}\leq \frac{2}{\beta}$. Then,
    \[\epsilon_{T,i}\leq\frac{2L^2}{n}\sum_{t=0}^{N-1} \eta_{i,t}.\]
\end{theorem}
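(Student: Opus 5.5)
We follow the Hardt et al.\ coupling argument used for Theorem~\ref{Thm: IGG bound convex}, extended across task boundaries. Fix $i\in[T-1]$ and two sequences $S_{1:T},S'_{1:T}$ that differ in one example of task $i$. Run SGD on both with the \emph{same} random index sequence $(i_t)$, producing trajectories $w_{\tau,t}$ and $w'_{\tau,t}$. Set $\delta_{\tau,t}=\|w_{\tau,t}-w'_{\tau,t}\|$. By Assumption~\ref{assump: Lipschitz},
\[
\mathbb{E}_{\mathcal{A}}[f(w_T;z)-f(w'_T;z)]\le L\,\mathbb{E}[\delta_{T,N}]
\]
for every $z$. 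It therefore suffices to bound $\mathbb{E}[\delta_{T,N}]$ and show that it does not exceed the value obtained at the end of task $i$.

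\textbf{Tasks $\tau<i$.} The datasets coincide and the initialization $w_0$ is shared, so the two trajectories are identical and $\delta_{i,0}=0$.

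\textbf{Task $\tau=i$.} We reproduce the argument behind Theorem~\ref{Thm: IGG bound convex}. With probability $1-1/n$ the sampled example is common to both datasets. Convexity and $\beta$-smoothness with $\eta_{i,t}\le 2/\beta$ then make the gradient map $w\mapsto w-\eta\nabla f(w;z)$ non-expansive (via co-coercivity of the gradient), so $\delta_{i,t+1}\le\delta_{i,t}$. With probability $1/n$ the sampled example is the differing one. In that case the triangle inequality and $\|\nabla f\|\le L$ give $\delta_{i,t+1}\le\delta_{i,t}+2\eta_{i,t}L$. Taking expectations yields $\mathbb{E}\delta_{i,t+1}\le\mathbb{E}\delta_{i,t}+2L\eta_{i,t}/n$, and hence
\[
\mathbb{E}\delta_{i,N}\le \frac{2L}{n}\sum_{t=0}^{N-1}\eta_{i,t}.
\]

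\textbf{Tasks $\tau>i$.} This is the step specific to CL. Since $S_\tau=S'_\tau$ for every $\tau>i$, each update applies the same map $G_{\tau,t}(w)=w-\eta_{\tau,t}\nabla f(w;z_{\tau,i_t})$ to both trajectories. Under Assumptions~\ref{assump: smooth}--\ref{assump: convex} with $\eta_{\tau,t}\le 2/\beta$, this map is $1$-expansive. This is where convexity and the step-size condition on \emph{all} tasks are used. The handoff $w_{\tau,0}=w_{\tau-1}$ adds no new discrepancy, so $\delta$ is non-increasing throughout tasks $i+1,\dots,T$ and $\mathbb{E}\delta_{T,N}\le\mathbb{E}\delta_{i,N}$. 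Multiplying by $L$ gives
\[
\epsilon_{T,i}\le \frac{2L^2}{n}\sum_{t=0}^{N-1}\eta_{i,t}.
\]

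\textbf{Main obstacle.} The delicate step is justifying non-expansiveness of $G_{\tau,t}$ on later tasks. We rely on the standard lemma: for convex $\beta$-smooth $f$,
\[
\langle\nabla f(u)-\nabla f(v),u-v\rangle\ge\tfrac1\beta\|\nabla f(u)-\nabla f(v)\|^2,
\]
which gives
\[
\|G(u)-G(v)\|^2\le\|u-v\|^2-\eta\left(\tfrac{2}{\beta}-\eta\right)\|\nabla f(u)-\nabla f(v)\|^2\le\|u-v\|^2.
\]
We also need the coupling to be legitimate: the index sequence must be independent of the data so that the same $i_t$ can drive both runs. The model assumes exactly this. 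The supremum over $z$ then passes through because the Lipschitz bound holds pointwise in $z$.
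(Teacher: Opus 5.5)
Your proposal is correct and follows essentially the same coupling argument as the paper: identical trajectories before task $i$, the recursion $\mathbb{E}\delta_{i,t+1}\le\mathbb{E}\delta_{i,t}+2L\eta_{i,t}/n$ within task $i$, and $1$-expansivity of the shared gradient maps on tasks $i+1,\dots,T$ to obtain $\mathbb{E}\delta_{T,N}\le\mathbb{E}\delta_{i,N}$. The only difference is that you prove non-expansiveness directly via co-coercivity, whereas the paper cites Lemma~\ref{Lem: f property}(2).
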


\begin{remark}
Theorem~\ref{Thm: CGG bound convex} surprisingly shows that the bound for the CGG coincides with that for the IGG under convex loss functions. This indicates that, under convexity, the algorithmic stability of SGD is preserved throughout the entire sequence of tasks. Even as the model parameters are continuously updated using data from new tasks, the resulting bound on the expected generalization gap on historical tasks does not grow. This conclusion relies on the non-expansive property of convex and smooth loss functions, as established in Lemma~\ref{Lem: f property}. Therefore, the discussion of IGG bound in Remark~\ref{Rem: IGG convex diff. lr} remains valid for CGG as well.
\end{remark}

Then, we derive an upper bound for the more challenging \textit{non-convex} case in Theorem~\ref{Thm: CGG bound non-convex}, and the proof of Theorem~\ref{Thm: CGG bound non-convex} is given in Appendix~\ref{Sec: proof of CGG bound non-convex}.

\begin{theorem}\label{Thm: CGG bound non-convex}
Assume that the loss function $f(\cdot;z)\in[0,1]$ satisfies Assumptions~\ref{assump: Lipschitz} and \ref{assump: smooth}, with $n\geq2$ and $c,\beta>0$. Suppose that SGD is run for $N$ steps on each task $\tau\in[T]$ with learning rates $\eta_{\tau,t}$ for $t=0,\ldots,N-1$. Then,

(a) under intra-task decaying learning rates $\eta_{\tau,t}=\frac{c}{t+1}$:
    \[
     \epsilon_{T,i}\leq\begin{cases}
         \mathcal{B}_2^1, &\text{if }2cL^2(eN)^{c\beta(T-i)}<N,\\[2ex]
          \displaystyle\frac{N}{n}, &\text{if }2cL^2(eN)^{c\beta(T-i)}\geq N.
     \end{cases}
    \]

    where 
    \[
        \begin{aligned}
        \mathcal{B}_2^1=&\mathcal{K}_N(eN)^{\frac{c\beta(T-i)}{c\beta+1}}-\frac{2L^2(eN)^{c\beta(T-i)}}{\beta n}.
        \end{aligned}
    \]
(b) under continuously decaying learning rates $\eta_{\tau,t}=\frac{c}{(\tau-1)N+t+1}$ and $i>1$:
    {\small
    \[
    \epsilon_{T,i}
    \leq
    \begin{cases}
      \frac{2L^2}{\beta n}
      \left[\left(\frac{T}{i-1}\right)^{c\beta}-\left(\frac{T}{i}\right)^{c\beta}\right],&\text{if }2cL^2\leq\mathcal{C}_2^{\mathrm{CD}},\\[2ex]
      \mathcal{B}_2^2,
      &\text{if }\mathcal{C}_2^{\mathrm{CD}}<2cL^2<iN\left(\frac{i}{T}\right)^{c\beta},\\[2ex]
      \displaystyle\frac{N}{n}, &\text{if }2cL^2\geq iN\left(\frac{i}{T}\right)^{c\beta}.
    \end{cases}
    \]}
    where 
    {
    \[
    \begin{aligned}
        \mathcal{B}_2^2&=\mathcal{K}_{TN}-\frac{(i-1)N}{n}-\frac{2L^2}{\beta n}\left(\frac{T}{i}\right)^{c\beta},\\
        \mathcal{C}_2^{\mathrm{CD}}&=(i-1)N\left(\frac{i-1}{T}\right)^{c\beta}.
    \end{aligned}
    \]}
For $i=1$ under the continuously decaying schedule, 
\[
\epsilon_{T,1}\leq\begin{cases}
    \mathcal{K}_{TN}-\frac{2L^2T^{c\beta}}{\beta n},
    &\text{if }2cL^2T^{c\beta}<N,\\[2ex]
    \displaystyle\frac{N}{n},&\text{if }2cL^2T^{c\beta}\geq N.
\end{cases}
\]

\end{theorem}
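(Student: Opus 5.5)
The plan follows the non-convex argument of \citet{hardt2016train}, adapted so that a perturbation made in task $i$ is carried forward through tasks $i+1,\dots,T$. Fix $S_{1:T}$ and $S'_{1:T}$ that differ only in one example of task $i$. Run both SGD chains with the same index sequence, and let $\delta_{\tau,t}=\|w_{\tau,t}-w'_{\tau,t}\|$. Choose a cutoff $t_0\in\{0,\dots,N\}$ inside task $i$, and let $\mathcal{E}$ be the event that $\delta_{i,t_0}=0$. Since $f\in[0,1]$ and $f$ is $L$-Lipschitz, we get
\[
\mathbb{E}|f(w_T;z)-f(w_T';z)|\le \frac{t_0}{n}+L\,\mathbb{E}[\delta_{T,N}\mid\mathcal{E}],
\]
because the differing example is chosen before step $t_0$ with probability at most $t_0/n$. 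If we take $t_0=N$, the second term vanishes and we get the trivial $N/n$ branch.

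The second step is the one-step recursion conditioned on $\mathcal{E}$. With probability $1-1/n$ the same example is drawn, and $\beta$-smoothness gives expansion by a factor $(1+\eta\beta)$. With probability $1/n$ different examples are drawn, giving $\delta+2\eta L$. Together,
\[
\mathbb{E}\delta_{t+1}\le\bigl(1+(1-\tfrac1n)\eta_t\beta\bigr)\mathbb{E}\delta_t+\tfrac{2\eta_t L}{n}.
\]
In tasks $\tau>i$ every draw is identical, so $\mathbb{E}\delta_{t+1}\le(1+\eta_{\tau,t}\beta)\mathbb{E}\delta_t$. We then bound $1+x\le e^x$ and the harmonic sums by logarithms:
\begin{itemize}
\item Under intra-task decay, each later task multiplies the error by at most $\exp(c\beta\sum_{t<N}\frac1{t+1})\le (eN)^{c\beta}$. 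Tasks $i+1,\dots,T$ together contribute the factor $(eN)^{c\beta(T-i)}$.
\item Within task $i$, the steps from $t_0$ to $N$ give $\mathbb{E}\delta_{i,N}\le\frac{2L}{\beta n}\bigl[(N/t_0)^{c\beta}-1\bigr]$. Here the $1-1/n$ factor is dropped and the sum is compared with an integral.
\item Under continuous decay, the global index $s=(\tau-1)N+t$ is used throughout. The tail product from global time $s_0$ to $TN$ is $\le(TN/s_0)^{c\beta}$.
\end{itemize}
This gives $\epsilon_{T,i}\le \frac{t_0}{n}+\frac{2L^2}{\beta n}\bigl[(M/t_0)^{c\beta}-A\bigr]$. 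The constants are as follows:
\begin{itemize}
\item Intra-task decay: $M^{c\beta}=N^{c\beta}(eN)^{c\beta(T-i)}$ and $A=(eN)^{c\beta(T-i)}$.
\item Continuous decay: $t_0$ is replaced by the global cutoff $s_0=(i-1)N+t_0$, the first term is $t_0/n=(s_0-(i-1)N)/n$, $M=TN$, and $A=(T/i)^{c\beta}$.
\end{itemize}

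The last step optimizes over the cutoff. The map $t\mapsto t/n+\frac{2L^2}{\beta n}M^{c\beta}t^{-c\beta}$ has unconstrained minimizer $t^*=(2cL^2M^{c\beta})^{1/(c\beta+1)}$. Plugging $t^*$ in gives $\frac{c\beta+1}{c\beta n}(2cL^2)^{1/(c\beta+1)}M^{c\beta/(c\beta+1)}$. This is $\mathcal{K}_N(eN)^{c\beta(T-i)/(c\beta+1)}$ in case (a), and $\mathcal{K}_{TN}$ in case (b).

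Feasibility of $t^*$ produces the case splits:
\begin{itemize}
\item Case (a): $t^*<N$ is exactly $2cL^2(eN)^{c\beta(T-i)}<N$. Otherwise we take $t_0=N$.
\item Case (b): we need $(i-1)N\le s^*\le iN$.
\begin{itemize}
\item If $s^*\ge iN$, which is equivalent to $2cL^2\ge iN(i/T)^{c\beta}$, we take $t_0=N$ and get $N/n$.
\item If $s^*\le(i-1)N$, which is equivalent to $2cL^2\le \mathcal{C}_2^{\mathrm{CD}}$, we take $t_0=0$. Because $\mathcal{E}$ then holds trivially, we get $\frac{2L^2}{\beta n}[(T/(i-1))^{c\beta}-(T/i)^{c\beta}]$.
\end{itemize}
\item For $i=1$ the lower constraint is vacuous. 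This gives $\mathcal{K}_{TN}-\frac{2L^2T^{c\beta}}{\beta n}$, valid when $2cL^2T^{c\beta}<N$.
\end{itemize}

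The main obstacle is getting the exact constants $A$, including the subtracted terms. This requires integrating the linear recursion carefully from $t_0$ to the end of task $i$, writing the result in closed form as $\frac{2L}{\beta n}[(s_{\mathrm{end}}/s_0)^{c\beta}-1]$, and then propagating it multiplicatively through the later tasks. The key point is that the within-task bound and the cross-task factors compose as a product rather than a sum. I would first verify this product structure, and then check that each case threshold is exactly the boundary where $t^*$ hits $0$ or $N$.
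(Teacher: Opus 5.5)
Your structure matches the paper's proof: the same conditioning bound $t_0/n+L\,\mathbb{E}[\delta_{T,N}\mid\delta_{i,t_0}=0]$, the same one-step recursion, and the same per-task factors $(eN)^{c\beta}$ and $(j/(j-1))^{c\beta}$ multiplied onto the closed-form within-task bound. Your stationary points and case thresholds are all computed correctly, and your $t_0=0$ and $t_0=N$ branches are fine. The step that fails as written is the final one, where you plug the real minimizer $t^*$ into $g(t)=\frac{t}{n}+\frac{2L^2}{\beta n}\bigl[(M/t)^{c\beta}-A\bigr]$. The cutoff $t_0$ is an SGD step index, so your inequality holds only for $t_0\in\{0,\ldots,N\}$. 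Since $g$ is strictly convex with its minimum at $t^*$, every admissible integer gives a value strictly larger than $\mathcal{B}_2^1$ (resp.\ $\mathcal{B}_2^2$, or the $i=1$ expression) unless $t^*$ happens to be an integer. When $t^*<1$, in part (a) or in the $i=1$ continuous-decay case, your only candidates are $t_0=1$, which is suboptimal, and $t_0=0$, where your closed form $(M/t_0)^{c\beta}$ is infinite. Rounding up to $t_0=\lceil t^*\rceil\le N$ only gives the stated bounds plus an extra $1/n$.

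The missing idea is to minimize over the integer cutoff \emph{before} replacing the sum by an integral. Your closed form $\frac{2L}{\beta n}[(N/t_0)^{c\beta}-1]$ has already made that replacement, which is why optimizing it over integers loses. Instead, keep the unrolled bound as $\frac1n\bigl[t_0+\sum_{t=t_0+1}^{N}D(t)\bigr]$, with $D$ nonnegative and nonincreasing. In part (a) take $D(t)=2cL^2M^{c\beta}t^{-c\beta-1}$, and in part (b) take $D(t)=2cL^2(TN)^{c\beta}((i-1)N+t)^{-c\beta-1}$. Then
\[
\min_{m\in\{0,\ldots,N\}}\Bigl\{m+\sum_{t=m+1}^{N}D(t)\Bigr\}=\sum_{t=1}^{N}\min\{1,D(t)\}\le\int_0^N\min\{1,D(u)\}\,du .
\]
The equality takes $m$ to be the number of indices with $D(t)\ge1$, and the inequality uses monotonicity of $\min\{1,D\}$. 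The crossing point $D(u)=1$ is exactly your $t^*$ (resp.\ $s^*-(i-1)N$). Splitting the integral there reproduces $n\,g(t^*)$ exactly, including the subtracted terms. The endpoint regimes give $N$ when the integrand is identically $1$, or the $t_0=0$ value when $D\le 1$ throughout. This is the paper's integer-cutoff comparison, and it is what yields the exact constants and covers $t^*<1$.
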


\begin{remark}
Under intra-task decaying learning rates, the displayed CGG bound is nondecreasing in the number of subsequent tasks and is no smaller than the corresponding IGG bound. When $2cL^2(eN)^{c\beta(T-i)}<N$, a sufficient condition for the CGG to vanish as $n\to\infty$ is $N=o(n^{1/[p(T-i+1)]})$, with $c,\beta,L,T,i$ fixed. Compared with the sufficient condition $N=o(n^{1/p})$ for the IGG, this condition imposes a stronger restriction on the training length, reflecting the additional effect of perturbation propagation across subsequent tasks. The negative term in $\mathcal{B}_2^1$ partially offsets the positive term and therefore tightens the bound. Within this branch, the ratio of the magnitude of the negative term to the positive term increases with the number of subsequent tasks when the other parameters are fixed, but remains strictly smaller than $1/(c\beta+1)$. Thus, retaining the negative correction provides a greater relative reduction for tasks followed by more subsequent training. Increasing the sample size or reducing the number of training steps also decreases the displayed upper bound, with the other parameters fixed.
\end{remark}

\begin{remark}
Under continuous decay, the learning rate continues to decrease across task boundaries, whereas intra-task decay resets it to $c$ at the beginning of each task. The corresponding upper bounds on perturbation amplification across subsequent tasks are $(T/i)^{c\beta}$ and $(eN)^{c\beta(T-i)}$, respectively. Continuous decay thus gives a smaller amplification bound that is independent of $N$. For $i>1$, the displayed CGG bound under continuous decay is also no larger than that under intra-task decay with the same parameters. All three cases of this bound are bounded above by $\frac{2L^2}{\beta n}[(T/(i-1))^{c\beta}-(T/i)^{c\beta}]$, which coincides with the first-case expression when $2cL^2\leq(i-1)N((i-1)/T)^{c\beta}$. This common upper bound decreases as $i$ increases, with the other parameters fixed, and its independence from $N$ extends the comparison beyond any particular training length. Consequently, for fixed $c,\beta,L,T,i$ with $i>1$, the CGG bound is $\mathcal{O}(1/n)$ uniformly in $N$ and vanishes as $n\to\infty$ without restricting the growth of the training length.
\end{remark}

\subsection{Empirical Forgetting}\label{Sec: EF}
Empirical Forgetting (EF) quantifies the degradation in performance on the training set of a previous task caused by learning subsequent tasks. We denote $\tilde{\epsilon}_{T,i}=\mathbb{E}_{S_{1:T},\mathcal{A}}[R_{S_i}(w_T)-R_{S_i}(w_i)]$ as the expected change in the empirical risk on task $i$ between the completion of task $i$ and the completion of task $T$. To characterize the interactions between task $i$ and subsequent tasks during training, we define the \textit{cumulative gradient alignment} (CGA) as
\begin{equation}\label{Eqn: EF gradient inner product}
\begin{aligned}
\mathcal{G}_{T,i}:=&\sum_{\tau=i+1}^{T}\sum_{t=0}^{N-1}\eta_{\tau,t}
\mathbb{E}_{S_{1:T},\mathcal{A}}\!\bigl\langle
\nabla R_{S_i}(w_{\tau,t}),\nabla R_{S_\tau}(w_{\tau,t})\bigr\rangle.
\end{aligned}
\end{equation}
Here, $\mathcal{G}_{T,i}$ accumulates the expected inner products between the gradients of task $i$ and those of subsequent tasks, weighted by the corresponding learning rates. Positive and negative values indicate cumulative gradient agreement and interference, respectively. We then provide an upper bound on $\tilde{\epsilon}_{T,i}$ in Theorem~\ref{Thm: EF}, and the proof is given in Appendix~\ref{Sec: proof of EF}.

\begin{theorem}\label{Thm: EF}
    Assume that the loss function $f(\cdot;z)$ satisfies Assumptions~\ref{assump: Lipschitz} and~\ref{assump: smooth}. Suppose that SGD is run for $N$ steps on each task $\tau\in[T]$ with learning rates $\eta_{\tau,t}$. Then,\\
    (a) under constant learning rates $\eta_{\tau,t}=c$:
    \begin{align*}
        \tilde{\epsilon}_{T,i} \leq -\mathcal{G}_{T,i}+\frac{\beta c^2L^2}{2}N(T-i);
    \end{align*}
    (b) under intra-task decaying learning rates $\eta_{\tau,t}=\frac{c}{t+1}$:
    \begin{align*}
        \tilde{\epsilon}_{T,i} \leq -\mathcal{G}_{T,i}+\frac{\beta c^2L^2}{2}\left(2-\frac1N\right)(T-i);
    \end{align*}
    (c) under continuously decaying learning rates $\eta_{\tau,t}=\frac{c}{(\tau-1)N+t+1}$:
    \begin{align*}
        \tilde{\epsilon}_{T,i} \leq -\mathcal{G}_{T,i}+\frac{\beta c^2L^2}{2}\left(\frac1{iN}-\frac1{TN}\right).
    \end{align*}
\end{theorem}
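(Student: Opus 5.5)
The argument is a telescoping descent argument on the empirical risk $R_{S_i}$ along the SGD trajectory of tasks $i+1,\dots,T$, using only $\beta$-smoothness and $L$-Lipschitzness. Since $w_i=w_{i+1,0}$ and $w_T=w_{T,N}$, with $w_{\tau,N}=w_{\tau+1,0}$, we write
\[
R_{S_i}(w_T)-R_{S_i}(w_i)=\sum_{\tau=i+1}^{T}\sum_{t=0}^{N-1}\bigl[R_{S_i}(w_{\tau,t+1})-R_{S_i}(w_{\tau,t})\bigr].
\]
Each $f(\cdot;z)$ is $\beta$-smooth, so the average $R_{S_i}$ is $\beta$-smooth as well. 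The descent lemma then gives
\[
R_{S_i}(w_{\tau,t+1})-R_{S_i}(w_{\tau,t})\le -\eta_{\tau,t}\bigl\langle \nabla R_{S_i}(w_{\tau,t}),\nabla f(w_{\tau,t};z_{\tau,i_t})\bigr\rangle+\frac{\beta\eta_{\tau,t}^2}{2}\|\nabla f(w_{\tau,t};z_{\tau,i_t})\|^2 .
\]
By Assumption~\ref{assump: Lipschitz}, the gradient norm is at most $L$, so the second term is at most $\beta\eta_{\tau,t}^2L^2/2$.

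Next we take expectations. The index $i_t$ is drawn uniformly from $[n]$, independently of $w_{\tau,t}$, which is measurable with respect to the datasets, the initialization and the earlier indices. Conditioning on that past gives $\mathbb{E}_{i_t}\nabla f(w_{\tau,t};z_{\tau,i_t})=\nabla R_{S_\tau}(w_{\tau,t})$. By the tower property, the expected first-order term becomes $-\eta_{\tau,t}\mathbb{E}_{S_{1:T},\mathcal{A}}\langle \nabla R_{S_i}(w_{\tau,t}),\nabla R_{S_\tau}(w_{\tau,t})\rangle$. The learning rates are deterministic, so they factor out of the expectation. Summing over $\tau,t$ then gives
\[
\tilde{\epsilon}_{T,i}\le -\mathcal{G}_{T,i}+\frac{\beta L^2}{2}\sum_{\tau=i+1}^{T}\sum_{t=0}^{N-1}\eta_{\tau,t}^2 .
\]
Convexity is never used, consistent with the statement.

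It remains to evaluate the squared learning-rate sum for each schedule.
\begin{itemize}
\item[(a)] For constant rates the sum is $c^2N(T-i)$.
\item[(b)] For $\eta_{\tau,t}=c/(t+1)$, each task contributes $c^2\sum_{k=1}^{N}k^{-2}$. We bound this by $c^2\bigl(1+\int_1^N x^{-2}\,dx\bigr)=c^2(2-1/N)$, and multiply by $(T-i)$ tasks.
\item[(c)] For continuous decay, the indices $k=(\tau-1)N+t+1$ range over $iN+1,\dots,TN$. The sum is therefore $c^2\sum_{k=iN+1}^{TN}k^{-2}$, and since $k^{-2}\le\int_{k-1}^{k}x^{-2}\,dx$, it is at most $c^2\int_{iN}^{TN}x^{-2}\,dx=c^2\bigl(\tfrac{1}{iN}-\tfrac{1}{TN}\bigr)$.
\end{itemize}

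The only delicate point is the conditioning step. We must check that $w_{\tau,t}$ is independent of the fresh index $i_t$, and that the sampled gradient of task $\tau$ is replaced by $\nabla R_{S_\tau}$ inside the inner product with $\nabla R_{S_i}(w_{\tau,t})$, which is fixed given the past. Integrability is not an issue, since all gradients are bounded by $L$. The remaining steps are the standard integral comparisons above.
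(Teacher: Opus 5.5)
Your proposal is correct and follows the paper's proof essentially step for step. Both arguments use the descent lemma for the $\beta$-smooth $R_{S_i}$ along the SGD updates of tasks $i+1,\dots,T$, condition on the past to replace the sampled gradient by $\nabla R_{S_\tau}(w_{\tau,t})$, bound the second-order term by $\beta L^2\eta_{\tau,t}^2/2$, and apply the same integral comparisons for the three schedules. The only cosmetic difference is the order of two steps: you bound $\|\nabla f\|^2\le L^2$ pointwise before taking expectations, whereas the paper first computes the conditional second moment $\frac1n\sum_{j}\|\nabla f(w_{\tau,t};z_{\tau,j})\|^2$ and then bounds it by $L^2$.
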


\begin{remark}\label{rem:ef}
Theorem~\ref{Thm: EF} bounds empirical forgetting in terms of the cumulative inner products between the old-task and current-task gradients and the squared learning rates. A positive $\mathcal{G}_{T,i}$ reduces the upper bound, while a negative value increases it. For fixed $T$, $i$, $c$, $L$, and $\beta$, the second-order term grows linearly with $N$ under a constant learning rate. Under intra-task decay, it approaches $\beta c^2L^2(T-i)$ as $N$ increases and thus becomes only weakly dependent on $N$ for large $N$. Under continuous decay, the term decreases proportionally to $1/N$. These results hold for both convex and non-convex losses.
\end{remark}

As discussed in Remark~\ref{rem:ef}, a positive $\mathcal{G}_{T,i}$ reduces the upper bound on empirical forgetting, indicating that positive gradient alignment accumulated over subsequent updates can help preserve performance on task $i$. This observation motivates the following connection with existing CL methods.

\textbf{Relation to gradient alignment mechanisms.} GEM~\citep{lopez2017gradient} aims to mitigate negative BWT by projecting the current gradient onto the set of directions having nonnegative inner products with all previous-task memory gradients. Let $\mathcal{M}_i$ denote the episodic memory containing samples from task $i$. At step $t$ of task $\tau$, the projected gradient is obtained by solving
\begin{equation}
\begin{aligned}
\operatorname*{min}_{\widetilde{g}_{\tau,t}}\quad
&\frac12
\left\|\widetilde{g}_{\tau,t}-\nabla f(w_{\tau,t};z_{\tau,i_t})\right\|^2\\
\mathrm{s.t.}\quad
&\left\langle
\widetilde{g}_{\tau,t},\frac{1}{|\mathcal{M}_i|}
\sum_{z\in\mathcal{M}_i}\nabla f(w_{\tau,t};z)
\right\rangle\geq0,
\quad \forall i<\tau.
\end{aligned}
\end{equation}
The parameters are then updated as $w_{\tau,t+1}=w_{\tau,t}-\eta_{\tau,t}\widetilde{g}_{\tau,t}$. To reduce the computational cost of projection, A-GEM~\citep{chaudhry2018efficient} replaces these individual constraints with a single constraint based on a reference gradient estimated from samples drawn from the combined episodic memory. Both methods seek to reduce gradient interference with past data, which is consistent with the role of cumulative gradient alignment captured by $-\mathcal{G}_{T,i}$ in our EF bound.

\textbf{Observable CGA.}
To examine the role of gradient alignment in empirical forgetting, we accumulate the learning-rate-weighted inner products between historical evaluation-loss gradients and the actual training gradients. We estimate the historical gradients using measurement buffers sampled from previous tasks and denote the resulting cumulative alignment by $\widehat{\mathcal{G}}_{T,i}$.
Historical gradients are evaluated using the same loss computation as EF, while current gradients are those used for the training updates. The buffers are used exclusively for measurement. Details are provided in Appendix~\ref{Appendix:experiments-on-miniimagenet}.

\begin{figure}[htbp]
    \centering
    \begin{subfigure}[b]{0.485\textwidth}
        \centering
        \includegraphics[width=\linewidth]{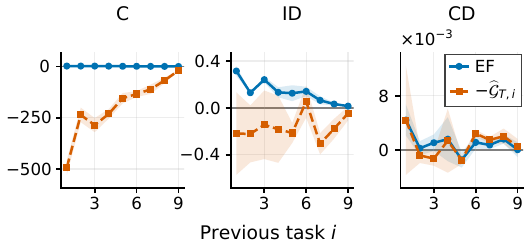}
    \end{subfigure}
    
    \caption{\textbf{EF and CGA under different learning-rate schedules on Split CIFAR-100.} C, ID, and CD denote constant, intra-task decaying, and continuously decaying learning rates, respectively. Curves show the means over three seeds, and shaded regions indicate one standard deviation.}
\label{fig:alignment}
\end{figure}

As shown in Fig.~\ref{fig:alignment}, the estimated CGA is large and positive under constant learning rates, particularly for earlier tasks. Under intra-task decay, its magnitude is substantially smaller, although EF and $-\widehat{\mathcal{G}}_{T,i}$ remain visibly separated. Under continuous decay, the two quantities are closer and exhibit similar fluctuations across historical tasks. We also report results on Split MiniImageNet and provide details of the CGA measurements in Appendix~\ref{Appendix:experiments-on-miniimagenet}.

\subsection{Forgetting in Expectation}
In this section, we summarize the discussions in Sections~\ref{Sec: IGG}, \ref{Sec: CGG}, and \ref{Sec: EF}. Specifically, the forgetting measure $F_T$ can be bounded by the following inequality:
\begin{equation}\label{Eqn:upp-FT}
    F_T\le\frac{1}{T-1}\sum_{i=1}^{T-1}\left(\epsilon_{T,i}+\epsilon_{i,i}+\tilde\epsilon_{T,i}\right).
\end{equation}
We next provide upper bounds on $F_T$ for convex and non-convex losses in Theorems~\ref{Thm: FT-convex} and~\ref{Thm: FT-non-convex}, respectively.

\begin{theorem}\label{Thm: FT-convex}
Assume that the loss function $f(\cdot;z)$ satisfies Assumptions~\ref{assump: Lipschitz}, \ref{assump: smooth}, and \ref{assump: convex}, with $\beta>0$. Suppose that SGD is run for $N$ steps on each task $\tau\in[T]$ with learning rates $\eta_{\tau,t}\leq2/\beta$. For $0<c\leq2/\beta$, the following bounds hold:\\
(a) under constant learning rates $\eta_{\tau,t}=c$:
\[
F_T\leq-\frac{\sum_{i=1}^{T-1}\mathcal{G}_{T,i}}{T-1}+\frac{\beta c^2L^2NT}{4}+\frac{4cL^2N}{n};
\]
(b) under intra-task decaying learning rates $\eta_{\tau,t}=\frac{c}{t+1}$:
\[
\begin{aligned}
F_T\leq{}&-\frac{\sum_{i=1}^{T-1}\mathcal{G}_{T,i}}{T-1}+\frac{\beta c^2L^2T}{4}\left(2-\frac1N\right)+\frac{4cL^2\ln(eN)}{n};
\end{aligned}
\]
(c) under continuously decaying learning rates $\eta_{\tau,t}=\frac{c}{(\tau-1)N+t+1}$:
\[
\begin{aligned}
F_T\leq{}&-\frac{\sum_{i=1}^{T-1}\mathcal{G}_{T,i}}{T-1}+\frac{\beta c^2L^2}{2N(T-1)}\left(\ln(T-1)+\frac1T\right)\\
&+\frac{4cL^2}{n(T-1)}\ln\big(eN(T-1)\big).
\end{aligned}
\]
\end{theorem}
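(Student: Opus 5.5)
The plan is to combine the three-term decomposition~\eqref{Eqn: forgetting decomposed} with the bounds already established, and then evaluate the resulting sums over $i$ for each schedule.

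\textbf{Step 1: the master inequality.} First, I justify~\eqref{Eqn:upp-FT}.
\begin{itemize}
\item The cross-task term is at most its absolute value, which Theorem~\ref{Thm: CGG} bounds by $\epsilon_{T,i}$.
\item For the intra-task term, $w_i=\mathcal{A}(S_{1:i})$ depends only on $S_{1:i}$ and the sampling indices of tasks $1,\dots,i$. So the expectation over $S_{1:T},\mathcal{A}$ reduces to one over $S_{1:i}$ and those indices. Then $\mathbb{E}[R_{S_i}(w_i)-R_{\mathcal{D}_i}(w_i)]\le \epsilon_{i,i}$ by Theorem~\ref{Thm: IGG}.
\item The empirical forgetting term is $\tilde\epsilon_{T,i}$ by definition.
\end{itemize}

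\textbf{Step 2: plug in the per-term bounds.} Under $0<c\le 2/\beta$, all three schedules satisfy $\eta_{\tau,t}\le c\le 2/\beta$. Hence Theorems~\ref{Thm: IGG bound convex} and~\ref{Thm: CGG bound convex} apply, giving
\[
\epsilon_{i,i}+\epsilon_{T,i}\le \frac{4L^2}{n}\sum_{t=0}^{N-1}\eta_{i,t}.
\]
Theorem~\ref{Thm: EF} (which needs no convexity) bounds $\tilde\epsilon_{T,i}$. It then remains to average over $i\in[T-1]$.

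\textbf{Step 3: evaluate the sums for each schedule.}
\begin{itemize}
\item \emph{(a) Constant rate.} The generalization part is $4cL^2N/n$ for each $i$. The EF second-order part averages to
\[
\frac{\beta c^2L^2N}{2(T-1)}\sum_{i=1}^{T-1}(T-i)=\frac{\beta c^2L^2NT}{4},
\]
using $\sum_{i=1}^{T-1}(T-i)=T(T-1)/2$.
\item \emph{(b) Intra-task decay.} The same identity applies with $N$ replaced by $2-1/N$. The generalization part uses $\sum_{t=0}^{N-1}\frac{1}{t+1}\le 1+\ln N=\ln(eN)$.
\item \emph{(c) Continuous decay.} I will not use the displayed $\ln(1+1/(i-1))$ form, which excludes $i=1$. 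Instead I keep the raw sum and telescope across tasks:
\[
\sum_{i=1}^{T-1}\sum_{t=0}^{N-1}\frac{c}{(i-1)N+t+1}=cH_{(T-1)N}\le c\ln\bigl(eN(T-1)\bigr),
\]
where $H_m$ is the $m$-th harmonic number. Dividing by $T-1$ gives the last term of (c). For the EF part,
\[
\sum_{i=1}^{T-1}\Bigl(\frac1{iN}-\frac1{TN}\Bigr)=\frac1N\Bigl(H_{T-1}-\frac{T-1}{T}\Bigr)\le \frac1N\Bigl(\ln(T-1)+\frac1T\Bigr),
\]
using $H_{T-1}\le 1+\ln(T-1)$. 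Multiplying by $\frac{\beta c^2L^2}{2(T-1)}$ gives the stated term.
\end{itemize}

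\textbf{Main obstacle.} The argument is bookkeeping. The only genuine subtleties are two points. The first is justifying that the expectation over all $T$ tasks reduces to the setting of Theorem~\ref{Thm: IGG} for the intra-task term. The second is handling $i=1$ under continuous decay, which I resolve by summing the harmonic series over the full $(T-1)N$ steps rather than bounding each task separately.
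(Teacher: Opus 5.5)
Your proposal is correct and follows the paper's proof. The paper likewise combines the master inequality (from the decomposition with Theorems~\ref{Thm: IGG} and~\ref{Thm: CGG}) with the convex stability bounds and Theorem~\ref{Thm: EF}, and it evaluates the same sums $\sum_{i=1}^{T-1}(T-i)=T(T-1)/2$ and $H_{T-1}-\frac{T-1}{T}\le\ln(T-1)+\frac1T$. The only difference is cosmetic and occurs in the generalization term of (c): the paper bounds task $1$ by $\ln(eN)$ and telescopes $\sum_{i=2}^{T-1}\ln\frac{i}{i-1}=\ln(T-1)$, whereas you bound the whole harmonic sum $H_{(T-1)N}\le\ln(eN(T-1))$ at once, which gives the identical term.
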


\begin{theorem}\label{Thm: FT-non-convex}
Assume that the loss function $f(\cdot;z)\in[0,1]$ satisfies Assumptions~\ref{assump: Lipschitz} and \ref{assump: smooth}, with $n\geq2$ and $c,\beta,L>0$. Suppose that SGD is run for $N$ steps on each task $\tau\in[T]$ with learning rates $\eta_{\tau,t}$. Then,\\
(a) under intra-task decaying learning rates $\eta_{\tau,t}=\frac{c}{t+1}$, if $2cL^2(eN)^{c\beta(T-1)}<N$, then
\begin{equation}\label{Eqn: FT ID explicit}
\begin{aligned}
F_T\leq{}&-\frac{\sum_{i=1}^{T-1}\mathcal{G}_{T,i}}{T-1}
+\frac{\beta c^2L^2T}{4}\left(2-\frac1N\right)\\
&+\mathcal{K}_N\left[1+\frac{(eN)^{\frac{c\beta(T-1)}{c\beta+1}}-1}
{(T-1)\left(1-(eN)^{-\frac{c\beta}{c\beta+1}}\right)}\right]\\
&-\frac{2L^2}{\beta n}\left[1+
\frac{(eN)^{c\beta(T-1)}-1}{(T-1)\left(1-(eN)^{-c\beta}\right)}\right].
\end{aligned}
\end{equation}
(b) under continuously decaying learning rates $\eta_{\tau,t}=\frac{c}{(\tau-1)N+t+1}$:
\begin{equation}\label{Eqn: FT CD explicit}
\begin{aligned}
F_T\leq&-\frac{\sum_{i=1}^{T-1}\mathcal{G}_{T,i}}{T-1}+\frac{\beta c^2L^2}{2N(T-1)}\left(\ln(T-1)+\frac1T\right)\\
&+\frac{2L^2}{\beta n(T-1)}\frac{2^{c\beta}-1}{\ln2}\ln(T-1)+\frac1{(T-1)}\times\\
&\begin{cases}
\mathcal{K}_N\left(1+T^{\frac{c\beta}{c\beta+1}}\right)-\frac{2L^2}{n\beta}\left[1+\left(\frac{T}{T-1}\right)^{c\beta}\right],
&\text{if }\mathcal{C}_3^1,
\\[2ex]
\mathcal{K}_N-\frac{2L^2}{n\beta}+\frac{(T-1)N}{n},&\text{if }\mathcal{C}_3^2,\\[2ex]
\mathcal{K}_{TN}-\frac{2L^2}{n\beta}\left(\frac{T}{T-1}\right)^{c\beta}+\frac{N}{n},&\text{if }\mathcal{C}_3^3,
\\[2ex]
\frac{NT}{n},&\text{if }\mathcal{C}_3^4,
\end{cases}
\end{aligned}
\end{equation}
where
\[
\begin{aligned}
    &\mathcal{C}_3^1:2cL^2<N, \quad 2cL^2T^{c\beta}<N(T-1)^{c\beta+1},\\
    &\mathcal{C}_3^2:2cL^2<N, \quad 2cL^2T^{c\beta}\geq N(T-1)^{c\beta+1},\\
    &\mathcal{C}_3^3:2cL^2\geq N, \quad 2cL^2T^{c\beta}<N(T-1)^{c\beta+1},\\
    &\mathcal{C}_3^4:2cL^2\geq N,\quad 2cL^2T^{c\beta}\geq N(T-1)^{c\beta+1}.
\end{aligned}
\]
\end{theorem}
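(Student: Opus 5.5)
We will prove Theorem~\ref{Thm: FT-non-convex} by substituting the per-task bounds into Equation~\eqref{Eqn:upp-FT} and summing over $i\in[T-1]$. The ingredients are Theorem~\ref{Thm: IGG bound non-convex} for $\epsilon_{i,i}$, Theorem~\ref{Thm: CGG bound non-convex} for $\epsilon_{T,i}$, and Theorem~\ref{Thm: EF} for $\tilde\epsilon_{T,i}$. Equation~\eqref{Eqn:upp-FT} itself follows from the decomposition \eqref{Eqn: forgetting decomposed} together with Theorems~\ref{Thm: IGG} and~\ref{Thm: CGG}.

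\textbf{EF part (both schedules).} Summing Theorem~\ref{Thm: EF} gives $-\sum_i\mathcal{G}_{T,i}/(T-1)$ plus a second-order term.
\begin{itemize}
\item Under intra-task decay, $\frac{1}{T-1}\sum_{i=1}^{T-1}(T-i)=T/2$, which yields $\frac{\beta c^2L^2T}{4}(2-1/N)$.
\item Under continuous decay, we need $\sum_{i=1}^{T-1}(1/i-1/T)\le \ln(T-1)+1-(T-1)/T=\ln(T-1)+1/T$, using $H_{T-1}\le 1+\ln(T-1)$. Dividing by $N(T-1)$ gives the displayed term.
\end{itemize}

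\textbf{Part (a).} The hypothesis $2cL^2(eN)^{c\beta(T-1)}<N$ is the strongest of the conditions involved. Since $(eN)^{c\beta(T-i)}\ge 1$, it implies $2cL^2<N$ and $2cL^2(eN)^{c\beta(T-i)}<N$ for every $i$. So the first branch of each theorem applies for every $i$.
\begin{itemize}
\item The $T-1$ copies of the IGG bound $\mathcal{K}_N-2L^2/(\beta n)$ average to the leading terms $\mathcal{K}_N$ and $-2L^2/(\beta n)$.
\item The CGG terms give geometric sums $\sum_{i=1}^{T-1}q^{T-i}=\frac{q^{T}-q}{q-1}=\frac{q^{T-1}-1}{1-q^{-1}}$. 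We apply this with $q=(eN)^{c\beta/(c\beta+1)}$ for the $\mathcal{K}_N$ part and $q=(eN)^{c\beta}$ for the negative part.
\end{itemize}
Dividing by $T-1$ and grouping reproduces \eqref{Eqn: FT ID explicit} exactly. Note that the IGG terms account for the ``1'' inside each bracket, and CGG contributes $\frac{q^{T-1}-1}{(T-1)(1-q^{-1})}$.

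\textbf{Part (b), continuous decay.} Split the sum into $i=1$ and $i\ge2$.
\begin{itemize}
\item \emph{Tasks $i\ge2$.} For each such $i$, all three branches of the IGG and CGG bounds are dominated by their first-case expressions, as observed in Remarks~\ref{Rem: IGG_CD_non-convex} and the subsequent CGG remark. The IGG contributes $\frac{2L^2}{\beta n}[(i/(i-1))^{c\beta}-1]$ and the CGG contributes $\frac{2L^2}{\beta n}[(T/(i-1))^{c\beta}-(T/i)^{c\beta}]$.
\item \emph{Telescoping the CGG.} The CGG part telescopes over $i=2,\dots,T-1$ to $\frac{2L^2}{\beta n}[T^{c\beta}-(T/(T-1))^{c\beta}]$.
\item \emph{Bounding the IGG.} The IGG part is handled via $(1+x)^{c\beta}-1\le(2^{c\beta}-1)x/\ln 2\cdot\ln2$, or more precisely via convexity/concavity in $x=1/(i-1)\in(0,1]$, giving $\le (2^{c\beta}-1)\cdot\frac{1}{i-1}$. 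Summing with $H_{T-2}\le$ something like $\ln(T-1)/\ln 2$ produces the $\frac{2^{c\beta}-1}{\ln 2}\ln(T-1)$ term. Matching this constant exactly is delicate. It may require $\sum_{k=1}^{T-2}1/k\le \log_2(T-1)+\ldots$, or absorbing part of the telescoped CGG piece.
\item \emph{Task $i=1$.} Here $\epsilon_{1,1}$ uses Theorem~\ref{Thm: IGG bound non-convex}(a), since continuous and intra-task decay coincide on task 1, with threshold $2cL^2<N$. The CGG term $\epsilon_{T,1}$ uses the $i=1$ case with threshold $2cL^2T^{c\beta}<N$.
\end{itemize}

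\textbf{Main obstacle.} The four conditions $\mathcal{C}_3^j$ do not literally match these thresholds: $\mathcal{C}_3$ uses $N(T-1)^{c\beta+1}$. So I expect the key work to be reconciling the telescoped term $\frac{2L^2}{\beta n}T^{c\beta}$ with the $i=1$ CGG bound. I would combine $\mathcal{K}_{TN}-\frac{2L^2T^{c\beta}}{\beta n}$ with the leftover $+\frac{2L^2T^{c\beta}}{\beta n}$ from the telescoping; they cancel. Then I would compare $\mathcal{K}_{TN}=\mathcal{K}_NT^{c\beta/(c\beta+1)}$ against $N/n$, choosing whichever expression is smaller. Doing the case analysis on $2cL^2<N$ (for $\epsilon_{1,1}$) and on the combined CGG term should give the four branches $\mathcal{C}_3^1$–$\mathcal{C}_3^4$ with the displayed expressions. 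This bookkeeping, and verifying the exact threshold form, is where I would expect to spend the effort.
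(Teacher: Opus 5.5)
Your EF computation and part (a) agree with the paper's proof. The gap is in part (b), in the cross-task sum, and it is more than bookkeeping. Once you replace $\epsilon_{T,i}$ for $i\ge2$ by its first-case expression, the stated bound can no longer be reached.

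Let $D(v)=2cL^2(TN)^{c\beta}v^{-c\beta-1}$, and let $\widehat v_0=(2cL^2)^{1/(c\beta+1)}(TN)^{c\beta/(c\beta+1)}$ be the point where $D=1$. Your plan yields
\[
\sum_{i=1}^{T-1}\epsilon_{T,i}\le\frac1n\int_0^{N}\min\{1,D(v)\}\,dv+\frac1n\int_N^{(T-1)N}D(v)\,dv,
\]
whereas the theorem requires $\frac1n\int_0^{(T-1)N}\min\{1,D(v)\}\,dv$. These coincide only if $D(N)\le1$, i.e.\ $2cL^2T^{c\beta}\le N$. That is exactly the regime in which your cancellation of $\pm\frac{2L^2T^{c\beta}}{\beta n}$ occurs. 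Elsewhere your bound is strictly weaker:
\begin{itemize}
\item Throughout $\mathcal{C}_3^3$, and in the part of $\mathcal{C}_3^1$ with $2cL^2T^{c\beta}>N$, the $i=1$ term is $N/n$. With $s=c\beta$ and $r=\widehat v_0/N>1$, your total exceeds $\mathcal{K}_{TN}-\frac{2L^2}{\beta n}\left(\frac{T}{T-1}\right)^{c\beta}$ by $\frac Nn\left[1+\frac{r^{s+1}}{s}-\frac{(s+1)r}{s}\right]>0$.
\item In $\mathcal{C}_3^2$ and $\mathcal{C}_3^4$ with $T\ge3$, your total exceeds $(T-1)N/n$, because $D>1$ on $[N,(T-1)N)$.
\end{itemize}
Comparing $\mathcal{K}_{TN}$ with $N/n$ afterwards cannot repair this, since the information was discarded at the first-case step.

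The missing idea is that each per-task CGG bound is the exact value of $\frac1n\int_{(i-1)N}^{iN}\min\{1,D(v)\}\,dv$. This holds for all three branches and for the $i=1$ case, via the integer-cutoff comparison with $v=(i-1)N+u$. Because $D$ does not depend on $i$, the intervals are adjacent, and the sum collapses to a single integral over $[0,(T-1)N]$. The paper splits that integral at $\widehat v_0$. The condition $\widehat v_0<(T-1)N$ is precisely $2cL^2T^{c\beta}<N(T-1)^{c\beta+1}$, which is where the thresholds in $\mathcal{C}_3^j$ come from. Equivalently, you can keep all branches of Theorem~\ref{Thm: CGG bound non-convex}(b): tasks with $iN\le\widehat v_0$ contribute $N/n$, the crossing task contributes its middle-branch expression, and later tasks telescope.

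A second, smaller problem is the IGG tail. The chord bound $(1+x)^{c\beta}-1\le(2^{c\beta}-1)x$ on $(0,1]$ needs $x\mapsto(1+x)^{c\beta}$ to be convex, i.e.\ $c\beta\ge1$. It fails for $c\beta<1$ (e.g.\ $c\beta=x=1/2$). For $c\beta\ge1$ your route does work, combined with $H_{T-2}\le\log_2(T-1)$. For general $c\beta>0$, change variables to $x=\ln(1+1/k)\in(0,\ln2]$. The map $x\mapsto e^{c\beta x}$ is convex for every $c\beta>0$, so $(1+1/k)^{c\beta}-1\le\frac{2^{c\beta}-1}{\ln2}\ln\frac{k+1}{k}$. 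These logarithms telescope exactly to $\ln(T-1)$.
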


The proofs of Theorems~\ref{Thm: FT-convex} and~\ref{Thm: FT-non-convex} are given in Appendix~\ref{Sec: Forgetting}. To obtain explicit bounds on $F_T$, we evaluate the accumulated contributions of IGG, CGG, and EF across previous tasks. As shown in Equation~\eqref{Eqn:upp-FT}, the upper bound on forgetting can be reduced by controlling $\epsilon_{T,i}$, $\epsilon_{i,i}$, and $\tilde{\epsilon}_{T,i}$ across historical tasks $i\in[T-1]$. If the IGG and CGG bounds and the magnitude of EF all vanish, then $F_T$ also converges to zero. The strategies for controlling these terms have been discussed in Sections~\ref{Sec: IGG}--\ref{Sec: EF}. We summarize the implications of the bounds as follows: \\
(1) Using decaying learning rates or increasing the number of training samples per task $n$ can tighten the generalization bounds, with the other parameters held fixed. \\
(2) Smaller initial learning rates $c$ can reduce the generalization bounds, while their dependence on the local training length $N$ varies across learning-rate schedules. \\
(3) Positive cumulative alignment between the gradients of previous and subsequent tasks contributes to reducing the EF upper bound through the term $-\mathcal{G}_{T,i}$.

\section{Experiment}
\begin{figure*}[t]
    \centering
    \begin{subfigure}[b]{1\textwidth}
        \centering
        \includegraphics[width=\linewidth]{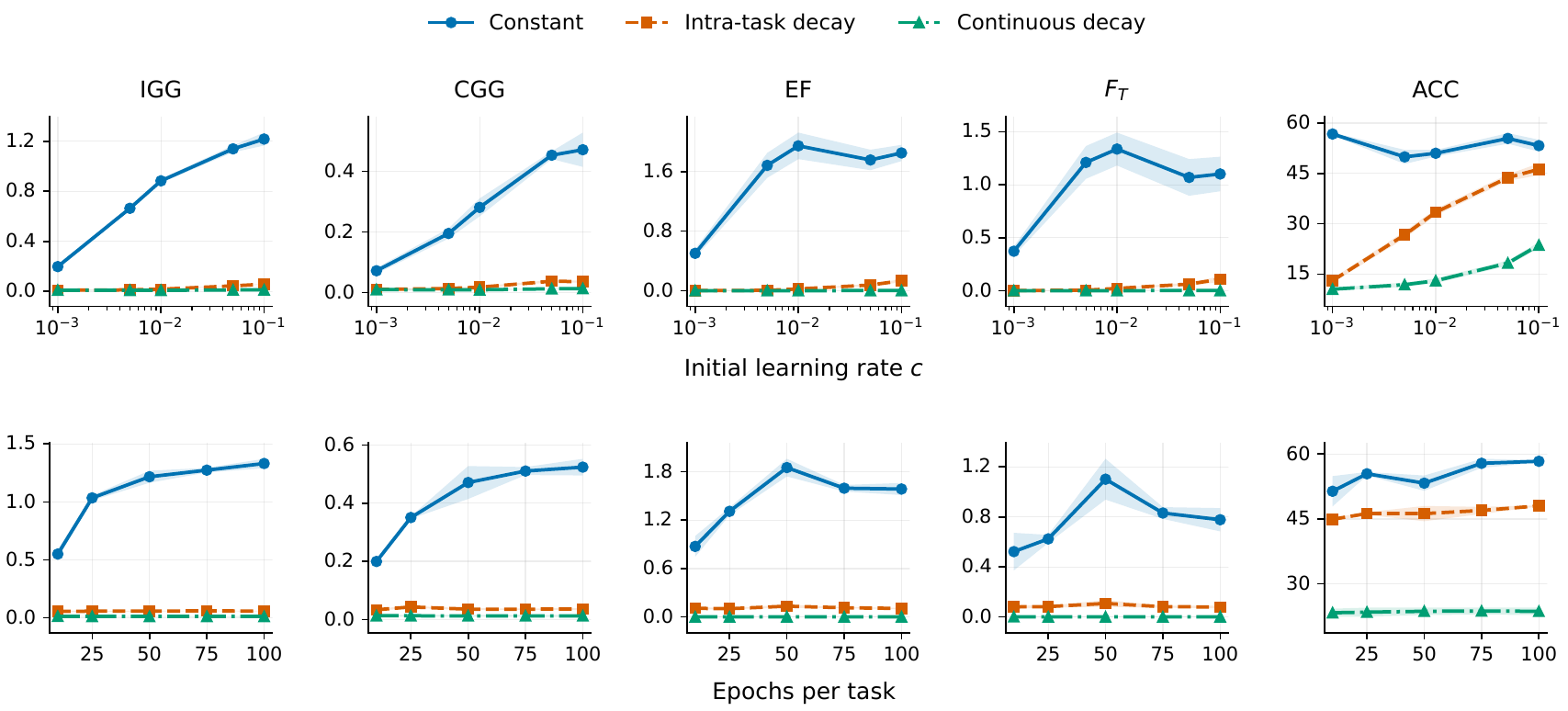}
    \end{subfigure}
    
    \caption{\textbf{IGG, CGG, EF, $F_T$, and ACC under different learning rate schedules on Split CIFAR-100.} The top row shows results for different initial learning rates $c$, while the bottom row shows results for different numbers of training epochs per task.}
\label{fig:lr_epochs}
\end{figure*}

In this section, we present experiments on Split CIFAR-100 to examine the empirical implications of our theoretical analysis. We also conduct experiments on Split MiniImageNet, with the results reported in Appendix~\ref{Appendix:experiments-on-miniimagenet}.

\subsection{Experimental Setup}

\textbf{Experimental setting on Split CIFAR-100. }
We conduct experiments on the widely used CL benchmark Split CIFAR-100~\citep{chaudhry2018efficient}. This benchmark partitions the 100 classes of CIFAR-100~\citep{krizhevsky2009learning} into 10 disjoint tasks, with 10 classes per task. We use AlexNet as the backbone, following the setting of \citet{serra2018overcoming}. We train the model with SGD to match our theoretical setting. To verify our theoretical findings, we vary the number of epochs per task in $\{10,25,50,75,100\}$, the initial learning rate in $\{10^{-3},5\times10^{-3},10^{-2},5\times10^{-2},10^{-1}\}$, and the number of samples per task in $\{500,1500,2500,3500,4500\}$. To verify how the number of epochs per task affects the metrics, we fix \(c=0.1\) and use the original training set setting, same as in Split CIFAR-100. To verify how the initial learning rate affects the metrics, we fix the number of epochs per task to 50 and use the same training set setting. To examine the effect of the number of training samples per task, we fix $c=0.1$ and use a common budget of $N=3550$ updates per task. Each experiment is repeated with random seeds 1, 2, and 3, and the figures report the mean and standard deviation across the three runs.

\textbf{Observable metric. } The definition in Equation \eqref{Eqn: forgetting decomposed} relies on the population risk, which is not directly observable. Therefore, we define an observable metric by replacing the population risk in the theoretical metric with the test loss. Specifically, let $R_{t,i}$ and $\tilde{R}_{t,i}$ denote the training loss and test loss of the model on task $i$, respectively, after training on task $t$. We report the absolute generalization gaps and signed empirical and test forgetting as follows. These observed absolute gaps are distinct from the absolute expected gaps controlled by the stability theorems:
\[
\begin{aligned}
\text{IGG} &= \frac{1}{T-1}\sum_{i=1}^{T-1} |R_{i,i}-\tilde{R}_{i,i}|,\\
\text{CGG} &= \frac{1}{T-1}\sum_{i=1}^{T-1} |\tilde{R}_{T,i}-R_{T,i}|,\\
\text{EF}  &= \frac{1}{T-1}\sum_{i=1}^{T-1} (R_{T,i}-R_{i,i}),\\
\text{$F_T$} &= \frac{1}{T-1}\sum_{i=1}^{T-1} (\tilde{R}_{T,i}-\tilde{R}_{i,i}).
\end{aligned}
\]
In addition to forgetting, we evaluate the model's predictive performance using average accuracy (ACC). Let $A_{t,i}$ denote the test accuracy on task $i$ after learning the first $t$ tasks, and let $T$ be the total number of tasks. ACC is defined as
\[
\mathrm{ACC} = \frac{1}{T}\sum_{i=1}^{T} A_{T,i}.
\]

\textbf{Compute resources.  } We conduct the experiments on an Intel Xeon Platinum 8160 @ 2.10\,GHz system with an NVIDIA RTX A6000 GPU.

\subsection{Experimental Result}

\textbf{Effects of the learning rate and the number of epochs per task.} As shown in Fig.~\ref{fig:lr_epochs}, the two decaying learning-rate schedules generally yield smaller generalization gaps and less forgetting than the constant schedule. In particular, continuous decay keeps forgetting close to zero across the these settings. Moreover, the smallest initial learning rate and the fewest training epochs generally yield the least forgetting. To examine whether these observations extend to a more challenging benchmark and a different backbone, we also conduct experiments on Split MiniImageNet using ResNet-18. The results, reported in Appendix~\ref{Appendix:experiments-on-miniimagenet}, show similar overall trends.

\textbf{Stability-plasticity dilemma.}
As shown in Fig.~\ref{fig:lr_epochs}, training with a continuously decaying learning rate yields the lowest ACC among the three learning-rate schedules. Moreover, reducing the number of training epochs or the initial learning rate does not necessarily improve ACC, even when it reduces forgetting. These observations reflect the classical stability-plasticity dilemma in CL~\citep{grossberg2013adaptive, NEURIPS2025_37d9f191, pmlr-v306-zheng26ac}, highlighting the difficulty of preserving past knowledge while learning new tasks.

\begin{figure}[htbp]
    \centering
    \begin{subfigure}[b]{0.485\textwidth}
        \centering
        \includegraphics[width=\linewidth]{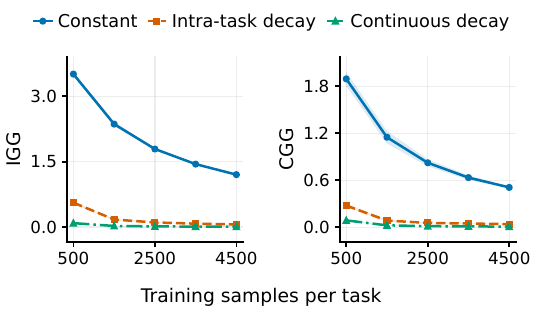}
    \end{subfigure}
   \caption{\textbf{IGG and CGG with different numbers of training samples per task on Split CIFAR-100.}}
\label{fig:samples}
\end{figure}

%\subsection{Generalization Gap}\label{Sec:GG}
\textbf{Effect of the number of training samples per task.} As shown in Fig.~\ref{fig:samples}, both IGG and CGG decrease as the number of training samples per task increases, with the number of SGD updates per task held fixed. This trend is consistent with our theoretical bounds, which decrease with the sample size when the other parameters are fixed.

\begin{figure}[htbp]
    \centering
    \begin{subfigure}[b]{0.485\textwidth}
        \centering
        \includegraphics[width=\linewidth]{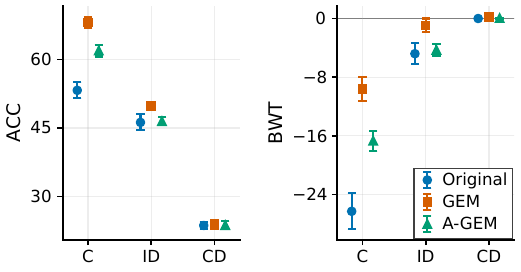}
    \end{subfigure}
    \caption{\textbf{ACC and BWT for original CL, GEM, and A-GEM under different learning-rate schedules on Split CIFAR-100.} Higher ACC indicates better predictive performance, while higher BWT indicates less forgetting.}
\label{fig:methods}
\end{figure}

\textbf{Effect of gradient alignment under different learning rate schedules.}
We compare original CL, GEM, and A-GEM on Split CIFAR-100 under three learning rate schedules. GEM enforces nonnegative inner products between the projected gradient and the memory gradients of  previous tasks, whereas A-GEM uses a single constraint based on an aggregate memory gradient. As shown in Fig.~\ref{fig:methods}, GEM achieves the highest ACC and BWT under the constant and intra-task decaying schedules, with larger performance gaps under the constant schedule. Under the continuously decaying schedule, however, all three methods exhibit similarly low ACC despite achieving BWT values close to zero. This pattern is consistent with the stability-plasticity dilemma. Overall, these results suggest that the benefits of gradient alignment depend on the learning rate schedule, in line with our theoretical analysis in Section~\ref{Sec: EF}, which relates EF to both CGA and learning rates.

\section{Conclusion}
In this work, we propose an algorithmic stability based theoretical framework for analyzing catastrophic forgetting under both convex and non-convex loss functions with three learning-rate schedules. Specifically, we derive a fine-grained decomposition of forgetting into IGG, CGG, and EF. We further propose intra-task and cross-task stability to analyze IGG and CGG, and analyze EF through the optimization dynamics. Our theoretical results characterize how local training lengths, initial learning rates, learning-rate schedules, and cumulative gradient alignment affect the forgetting bounds. Finally, we validate our theoretical findings through several empirical studies.

\textbf{Limitation.\hspace{1em}} The theoretical analysis is carried out under classical assumptions such as $L$-Lipschitz continuity and $\beta$-smoothness, which are standard in stability and generalization analyses. However, these assumptions may be restrictive in practice. It would be interesting to relax them and further strengthen our results in future work.

%\section*{Acknowledgment}

%\newpage
\bibliographystyle{IEEEtranN}
\bibliography{reference}

\begin{IEEEbiography}[{\includegraphics[width=1in,height=1.25in,clip,keepaspectratio]{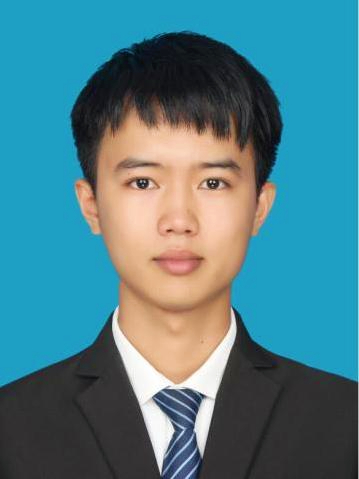}}]{Guodong Zheng} is currently a postdoctoral researcher with Sun Yat-sen University, China. He  received his Ph.D. degree from Huazhong University of Science and Technology, China.  His work appeared in several top-tier ML conferences/journals, including ICML, ICLR, and Neural Networks. His current research interests include optimization, continual learning. 
\end{IEEEbiography}

\vspace{-5mm}

\begin{IEEEbiography}[{\includegraphics[width=1in,height=1.25in,clip,keepaspectratio]{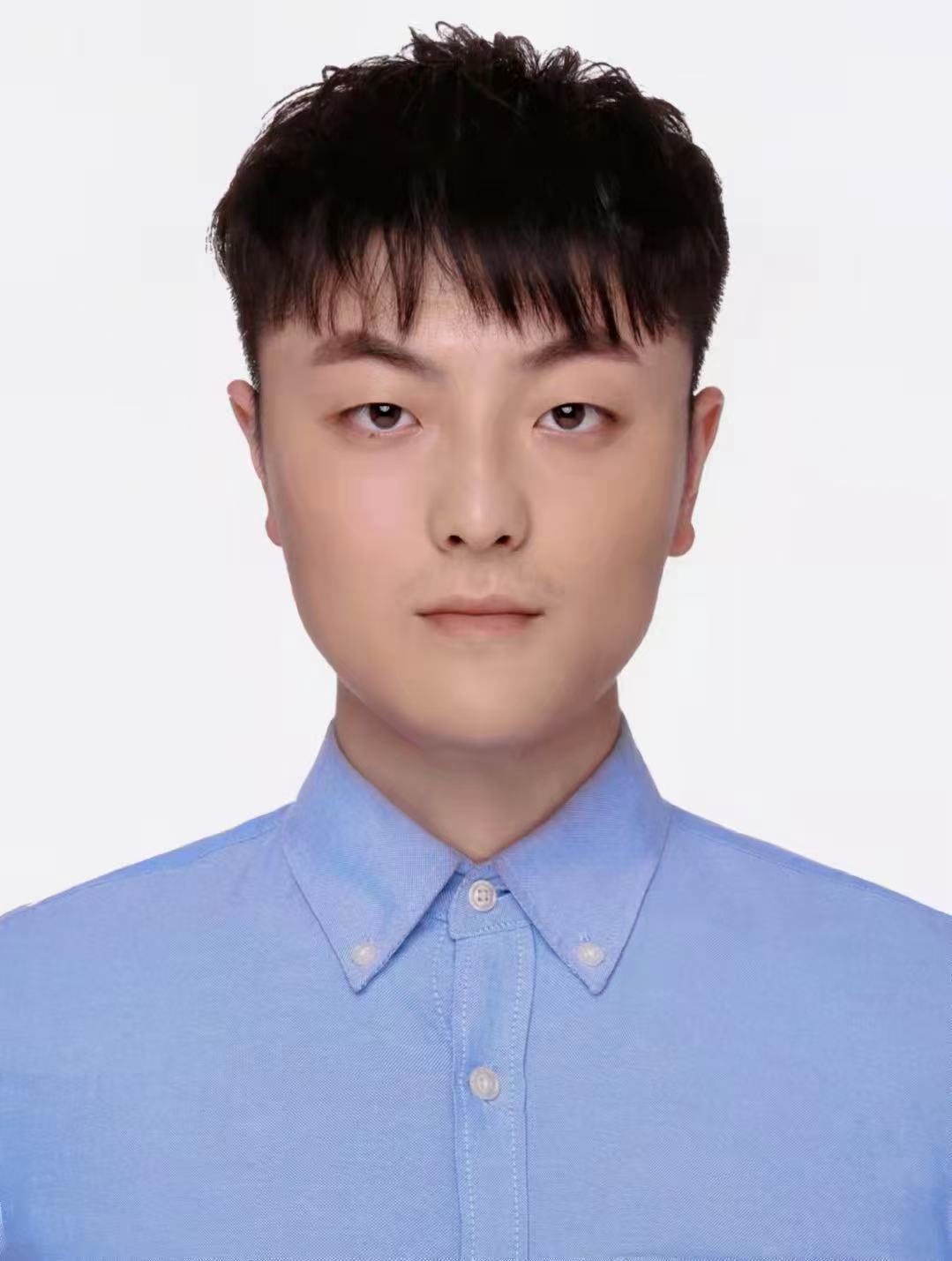}}]{Yan~Sun} is currently a Postdoctoral Researcher with Nanyang Technological University (NTU). He 
received the B.E. degree from the University of Science and Technology of China (USTC) and the Ph.D. degree from the University of Sydney. He has published 10+ first-author papers in leading machine learning venues, including NeurIPS, ICML, ICLR, and IEEE TPAMI. His research interests include machine learning, federated and distributed learning, optimization and generalization, and efficient training and adaptation of large-scale foundation models.
\end{IEEEbiography}

\vspace{-5mm}

\begin{IEEEbiography}[{\includegraphics[width=1in,height=1.25in,clip,keepaspectratio]{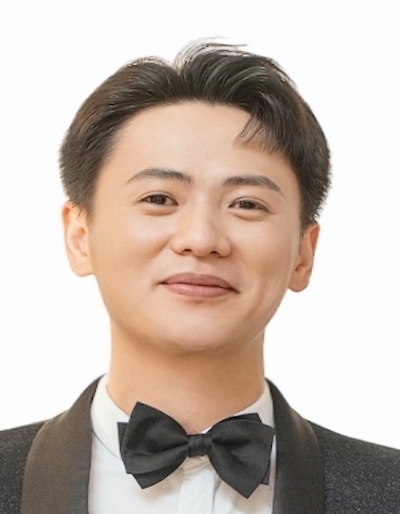}}]{Peng Wang} is currently a postdoctoral researcher with Sun Yat-sen University, China. He  received his Ph.D. degree from Huazhong University of Science and Technology, China. His work appeared in several top-tier ML conferences/journals, including ICML, ICLR, Neural Networks, and Pattern Recognition. His current research interests include the generalization and optimization theories of deep learning. 
\end{IEEEbiography}

\vspace{-5mm}

\begin{IEEEbiography}
[{\includegraphics[width=1in,height=1.25in,clip,keepaspectratio]{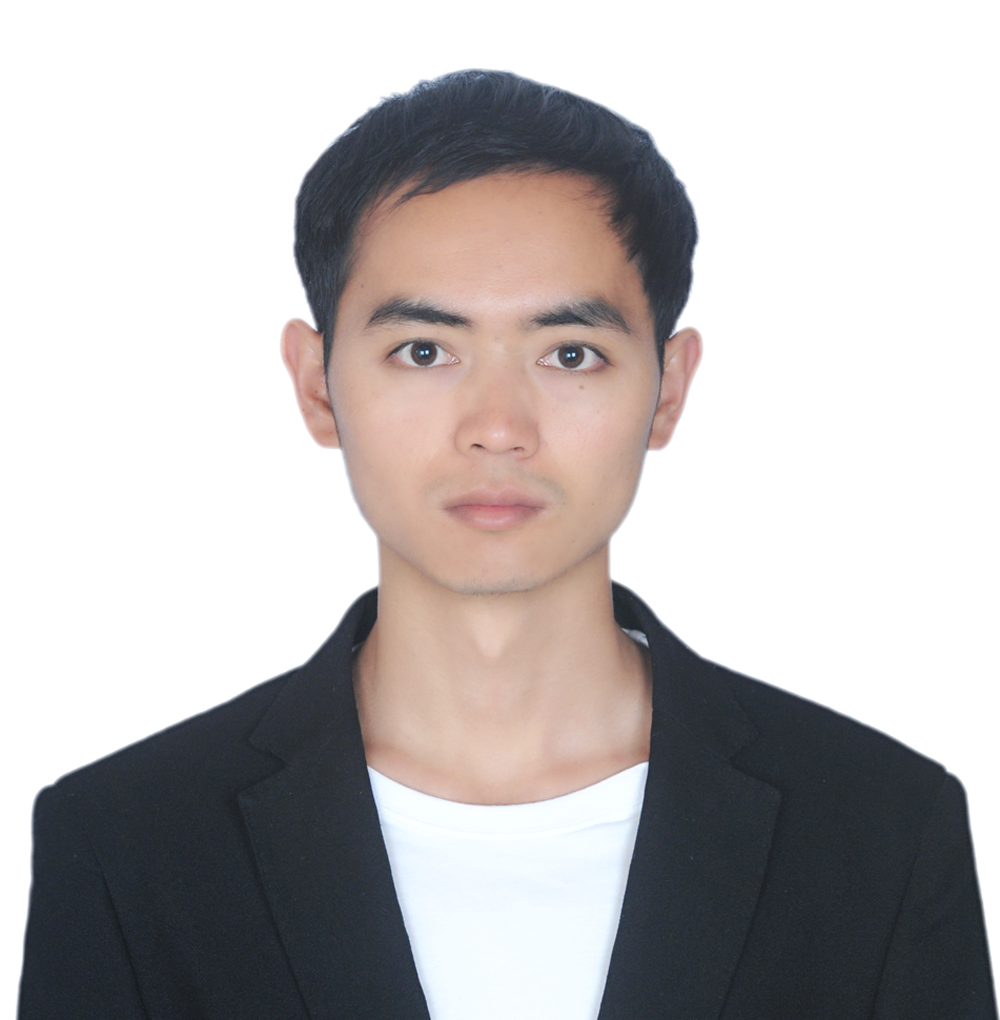}}]{Enneng~Yang} is currently a postdoctoral researcher at Sun Yat-sen University, China. He received his Ph.D. degree from the Software College, Northeastern University, China.
His work appeared in several top-tier ML \& IR conferences/journals, including IEEE TPAMI, IEEE TKDE, JMLR, ICML, NeurIPS, ICLR, ICCV, ACL, WWW, SIGIR, and AAAI. 
His research interests include machine learning and recommendation systems.
\end{IEEEbiography}

\vspace{-5mm}

\begin{IEEEbiography}[{\includegraphics[width=1in,height=1.25in,clip,keepaspectratio]{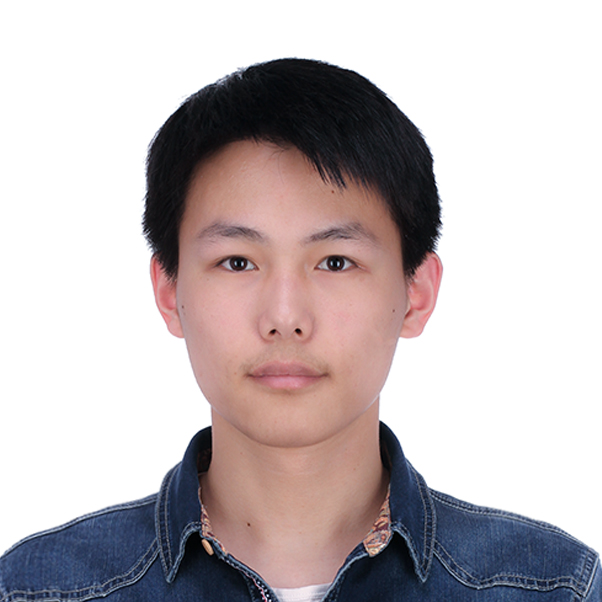}}]{Jiafei~Lyu} is currently a Senior Researcher at Tencent. He received the bachelor's and Ph.D. degrees from Tsinghua University in 2020 and 2025, respectively. He has published papers in NeurIPS, ICML, ICLR, AAAI, and the Journal of Artificial Intelligence Research.  His research interests include RLHF and efficient decision-making with deep Reinforcement Learning. 
\end{IEEEbiography}

\vspace{-5mm}

\begin{IEEEbiography}[{\includegraphics[width=1in,height=1.25in,clip,keepaspectratio]{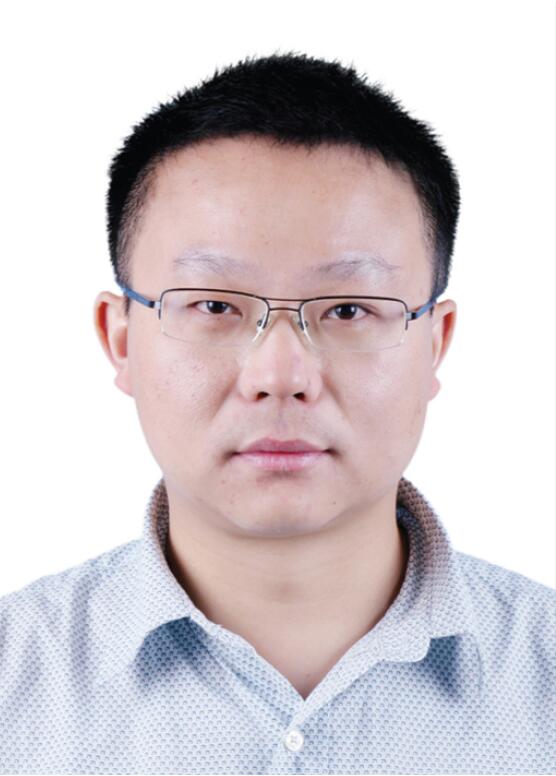}}]{Xiaochun Cao} (Senior Member, IEEE)  received the B.E. and M.E. degrees in Computer Science from Beihang University, Beijing, China, in 1999 and 2002, and the Ph.D. degree from the University of Central Florida, USA, in 2006. He worked as a Research Scientist at ObjectVideo Inc. for three years. From 2008 to 2012, he was a Professor at Tianjin University, China. Before joining Sun Yat-sen University, Shenzhen, he was a Professor at the Institute of Information Engineering, Chinese Academy of Sciences. He is currently the Dean and Professor with the School of Cyber Science and Technology at Sun Yat-sen University, Shenzhen. He has published over 200 papers and received the Piero Zamperoni Best Student Paper Award at \emph{ICPR} in 2004 and 2010. He serves on the Editorial Boards of \emph{IEEE TPAMI} and \emph{IEEE TIP}.
\end{IEEEbiography}

\vspace{-5mm}

\begin{IEEEbiography}[{\includegraphics[width=1in,height=1.25in,clip,keepaspectratio]{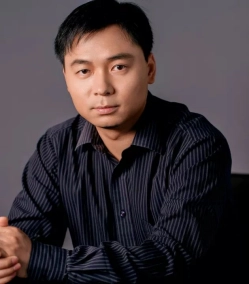}}]{Dacheng Tao} (Fellow, IEEE) is currently a Dis-
tinguished University Professor with the College of
Computing and Data Science, Nanyang Technologi-
cal University. He mainly applies statistics and math-
ematics to artificial intelligence and data science. His
research is detailed in one monograph and over 200
publications in prestigious journals and conferences,
with best paper awards, best student paper awards,
and test-of-time awards. His publications have been
cited over 112K times. He has an H-index of more
than 160 in Google Scholar. He is a Fellow of Aus-
tralian Academy of Science, AAAS, and ACM. He received the 2015 and 2020
Australian Eureka Prize, the 2018 IEEE ICDM Research Contributions Award,
and the 2021 IEEE Computer Society McCluskey Technical Achievement
Award.
\end{IEEEbiography}

\vspace{-5mm}

\begin{IEEEbiography}[{\includegraphics[width=1in,height=1.25in,clip,keepaspectratio]{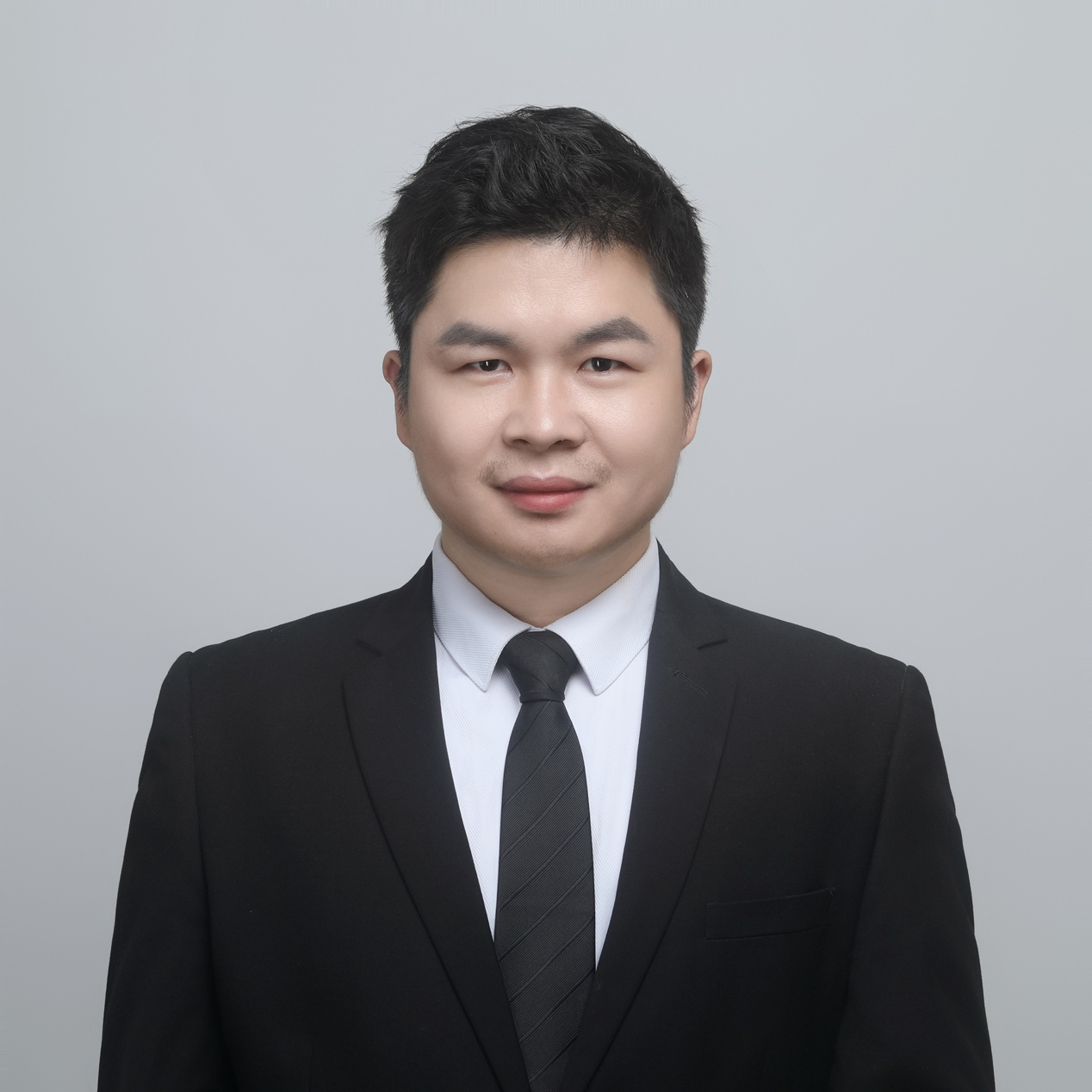}}]{Li~Shen} is currently an associate professor at Sun Yat-sen University. Previously, he was a research scientist at JD Explore Academy, Beijing, and a senior researcher at Tencent AI Lab, Shenzhen. He received his bachelor's degree and Ph.D. from the School of Mathematics, South China University of Technology. His research interests include efficient deep learning, efficient reinforcement learning, optimization and deep learning theory. He has served as the senior program committee for AAAI and IJCAI, area chairs for ICML, NeurIPS, ICLR, CVPR, and ACMMM. He is also the associate editor for IEEE Transactions on Pattern Analysis and Machine Intelligence, IEEE Transactions on Knowledge and Data Engineering, and IEEE Transactions on Multimedia. \end{IEEEbiography}

\newpage
\appendices
\section{Preliminaries on Algorithmic Stability}
In this section, we introduce several preliminaries that will be used in the subsequent proofs. We adopt the notation of \citet{hardt2016train}.

\begin{definition}
For a nonnegative step size $\alpha \geq 0$ and a function $f\colon \Omega \to \mathbb{R}$, we define the gradient update rule $G_{f,\alpha}$ as
\[
G_{f,\alpha}(w) = w - \alpha \nabla f(w).
\]
\end{definition}

\begin{definition}[$\eta-$expansive] An update rule is $\eta-$expansive if 
\begin{equation}
    \sup_{\substack{v,w\in\Omega\\v\ne w}}\frac{\|G(v)-G(w)\|}{\|v-w\|}\leq\eta
\end{equation}
\end{definition}

\begin{definition}[$\sigma-$bounded] An update rule is $\sigma-$bounded if 
\begin{equation}
    \sup_{w\in\Omega}{\|w-G(w)\|}\leq\sigma
\end{equation}
\end{definition}

\begin{lemma}\label{Lem: f property}
Assume that $f$ is $\beta$-smooth. Then, the following properties hold.\\
(1). $G_{f,\alpha}$ is $(1 + \alpha\beta)$-expansive.\\
(2). Assume in addition that $f$ is convex. Then, for any $\alpha \leq 2/\beta$, the gradient update $G_{f,\alpha}$ is $1$-expansive.\\
(3). Assume in addition that $f$ is $\gamma$-strongly convex. Then, for $\alpha \leq \frac{2}{\beta+\gamma}$, $G_{f,\alpha}$ is $\left(1 - \frac{\alpha\beta\gamma}{\beta+\gamma}\right)$-expansive.
\end{lemma}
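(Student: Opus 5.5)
The three properties are standard facts about gradient steps on smooth functions, and I would prove each one directly from the definition of expansiveness: fix $v\neq w$ and bound $\|G_{f,\alpha}(v)-G_{f,\alpha}(w)\|$ in terms of $\|v-w\|$.

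For (1), I would expand
\[
G_{f,\alpha}(v)-G_{f,\alpha}(w)=(v-w)-\alpha\bigl(\nabla f(v)-\nabla f(w)\bigr).
\]
The triangle inequality together with $\beta$-smoothness then gives
\[
\|G_{f,\alpha}(v)-G_{f,\alpha}(w)\|\le\|v-w\|+\alpha\beta\|v-w\|=(1+\alpha\beta)\|v-w\|.
\]
Convexity is not needed here.

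For (2), the key tool is the co-coercivity of gradients of convex $\beta$-smooth functions (Baillon--Haddad):
\[
\langle\nabla f(v)-\nabla f(w),v-w\rangle\ge\frac1\beta\|\nabla f(v)-\nabla f(w)\|^2.
\]
Squaring the expansion above gives
\[
\|v-w\|^2-2\alpha\langle\nabla f(v)-\nabla f(w),v-w\rangle+\alpha^2\|\nabla f(v)-\nabla f(w)\|^2.
\]
Applying co-coercivity bounds this by
\[
\|v-w\|^2-\Bigl(\frac{2\alpha}{\beta}-\alpha^2\Bigr)\|\nabla f(v)-\nabla f(w)\|^2.
\]
The bracketed coefficient is nonnegative exactly when $\alpha\le2/\beta$, which yields $1$-expansiveness. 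The only nontrivial ingredient, and the step I expect to be the main obstacle, is co-coercivity itself. I would either cite it or derive it by applying the descent lemma to the convex function $\phi(u)=f(u)-\langle\nabla f(w),u\rangle$. This $\phi$ is minimized at $w$, so $\phi(w)\le\phi(v)-\frac1{2\beta}\|\nabla\phi(v)\|^2$. Symmetrizing in $v$ and $w$ and adding the two inequalities gives co-coercivity.

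For (3), I would use the strengthened co-coercivity inequality for $\gamma$-strongly convex, $\beta$-smooth $f$:
\[
\langle\nabla f(v)-\nabla f(w),v-w\rangle\ge\frac{\beta\gamma}{\beta+\gamma}\|v-w\|^2+\frac1{\beta+\gamma}\|\nabla f(v)-\nabla f(w)\|^2.
\]
This follows by applying the convex co-coercivity to $f-\frac\gamma2\|\cdot\|^2$, which is convex and $(\beta-\gamma)$-smooth. Substituting into the squared expansion gives
\[
\Bigl(1-\frac{2\alpha\beta\gamma}{\beta+\gamma}\Bigr)\|v-w\|^2+\alpha\Bigl(\alpha-\frac2{\beta+\gamma}\Bigr)\|\nabla f(v)-\nabla f(w)\|^2.
\]
When $\alpha\le2/(\beta+\gamma)$ the second term is nonpositive and can be dropped. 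Finally, $\sqrt{1-2x}\le1-x$ holds for $x\le1/2$; I would check that $x=\alpha\beta\gamma/(\beta+\gamma)\le1/2$ in this regime, using $2\beta\gamma\le(\beta+\gamma)^2/2$. This gives the stated factor $1-\frac{\alpha\beta\gamma}{\beta+\gamma}$.
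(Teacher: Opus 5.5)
The paper gives no proof of this lemma; it is taken from \citet{hardt2016train}. Your argument is the standard one used there: the triangle inequality for (1), expanding the square and applying co-coercivity for (2), and the strongly convex co-coercivity bound followed by $\sqrt{1-2x}\le 1-x$ for (3), so it is correct and essentially the same route. The only thing to tidy is the degenerate case $\beta=\gamma$ in (3). There $f-\frac{\gamma}{2}\|\cdot\|^2$ has constant gradient, so the co-coercivity constant $1/(\beta-\gamma)$ is undefined, but the target inequality then holds with equality directly since $\nabla f(v)-\nabla f(w)=\gamma(v-w)$.
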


\section{Intra-Task Generalization Gap}

\subsection{Proof of Theorem \ref{Thm: IGG}}\label{Sec: Proof of IGG}
\begin{theorem*}
If $\mathcal{A}$ is $\epsilon_{i,i}$-uniformly stable, then
    \begin{equation}
        \begin{aligned}
        &\big|\mathbb{E}_{S_{1:i},\mathcal{A}}[R_{\mathcal{D}_i}(\mathcal{A}(S_{1:i}))-R_{S_i}(\mathcal{A}(S_{1:i}))]\big|\leq\epsilon_{i,i}.
        \end{aligned}
    \end{equation}
\end{theorem*}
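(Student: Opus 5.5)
This is the standard replace-one-sample symmetrization argument of \citet{hardt2016train}, applied only to the coordinates of task $i$. The earlier datasets $S_{1:i-1}$ are carried along unchanged.

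\textbf{Setup.} Draw $S_{1:i}$ together with an independent copy $S_i'=(z_{i,1}',\dots,z_{i,n}')$ of task $i$'s data from $\mathcal{D}_i^n$. For each $k\in[n]$, let $S_{1:i}^{(k)}$ be $S_{1:i}$ with $z_{i,k}$ replaced by $z_{i,k}'$. This pair differs in exactly one example of task $i$, so Definition~\ref{Def: uniform-stability-ii} applies to it.

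\textbf{Rewriting the empirical risk.} By linearity,
\[
\mathbb{E}_{S_{1:i},\mathcal{A}}[R_{S_i}(\mathcal{A}(S_{1:i}))]=\frac1n\sum_{k=1}^n \mathbb{E}[f(\mathcal{A}(S_{1:i});z_{i,k})].
\]
Swapping the roles of $z_{i,k}$ and $z_{i,k}'$ is a measure-preserving relabeling. This uses that the samples are i.i.d., that $S_{1:i-1}$ is independent of $S_i$, and that the internal randomness of $\mathcal{A}$ (initialization and the indices $i_t$) is independent of the data. Hence each summand equals $\mathbb{E}[f(\mathcal{A}(S_{1:i}^{(k)});z_{i,k}')]$.

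\textbf{Rewriting the population risk.} The point $z_{i,k}'$ is independent of $S_{1:i}$ and of the algorithm's randomness. Therefore $\mathbb{E}[R_{\mathcal{D}_i}(\mathcal{A}(S_{1:i}))]=\mathbb{E}[f(\mathcal{A}(S_{1:i});z_{i,k}')]$ for every $k$, and we may average this over $k$ as well.

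\textbf{Comparison step.} Subtracting the two expressions gives
\[
\mathbb{E}[R_{\mathcal{D}_i}-R_{S_i}]=\frac1n\sum_{k=1}^n\mathbb{E}_{S,S'}\Big[\mathbb{E}_{\mathcal{A}}\big[f(\mathcal{A}(S_{1:i});z_{i,k}')-f(\mathcal{A}(S_{1:i}^{(k)});z_{i,k}')\big]\Big].
\]
Condition on $S_{1:i}$ and $S_i'$. The inner expectation over $\mathcal{A}$ is at most $\sup_z\mathbb{E}_{\mathcal{A}}[f(\mathcal{A}(S_{1:i});z)-f(\mathcal{A}(S_{1:i}^{(k)});z)]$, and this is at most $\epsilon_{i,i}$ by Definition~\ref{Def: uniform-stability-ii}. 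That proves the upper bound.

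\textbf{Lower bound.} For the absolute value, apply the definition with the roles of $S_{1:i}$ and $S_{1:i}^{(k)}$ exchanged. The definition quantifies over all pairs differing in one example of task $i$, so it is symmetric. This bounds the negated difference by $\epsilon_{i,i}$ as well.

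\textbf{Main obstacle.} The only delicate point is justifying the exchange step rigorously in the continual setting. One must check three things:
\begin{itemize}
\item the joint law of $(S_{1:i},S_i',\text{algorithmic randomness})$ is invariant under swapping $z_{i,k}\leftrightarrow z_{i,k}'$;
\item the earlier tasks' data enter only as fixed, independent arguments;
\item the sampled indices $i_t$ are independent of the data, as stipulated in the SGD setup.
\end{itemize}
Integrability holds because $f$ is assumed to have finite expectations (bounded in the non-convex setting, and implicit otherwise). Once these are verified, the rest is the single-task argument verbatim.
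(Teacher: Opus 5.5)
Your proposal is correct and follows essentially the same argument as the paper. Both swap $z_{i,k}$ with an independent copy $z_{i,k}'$ to rewrite the empirical risk, and recognize the unswapped term as the population risk. Both then bound the resulting difference $\delta$ by $\epsilon_{i,i}$. Your step that derives the two-sided bound from the symmetry of Definition~\ref{Def: uniform-stability-ii} makes explicit what the paper asserts directly as $|\delta|\leq\epsilon_{i,i}$.
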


\begin{proof}
We first introduce the following notation. Let $S_{1:i}$ and $S_{1:i}'$ be two independent datasets, and let $S_{1:i}^{(i,k)}$ denote the sample obtained from $S_{1:i}$ by replacing the $k$-th example $z_{i,k}$ in the dataset $S_i$ from task $i$ with $z_{i,k}'$. More explicitly,
\begin{equation}
\begin{aligned}
    S_{1:i} &= (z_{1,1},\dots,z_{1,n},\dots,z_{i,1},\dots,z_{i,n}),\\
    S_{1:i}' &= (z_{1,1}',\dots,z_{1,n}',\dots,z_{i,1}',\dots,z_{i,n}'),\\
    S_{1:i}^{(i,k)} &= (z_{1,1},\dots,z_{1,n},\dots,z_{i,1},\dots,z_{i,k-1},\\
    &\qquad z_{i,k}',z_{i,k+1},\dots,z_{i,n}).
\end{aligned}
\end{equation}

With these notations, we have
\begin{equation*}
\begin{aligned}
    &\mathbb{E}_{S_{1:i}}\mathbb{E}_\mathcal{A}[R_{S_i}(\mathcal{A}(S_{1:i}))]\\
    &= \mathbb{E}_{S_{1:i}}\mathbb{E}_\mathcal{A}\Bigg[\frac{1}{n}\sum_{k=1}^n f(\mathcal{A}(S_{1:i});z_{i,k})\Bigg] \\
    &= \mathbb{E}_{S_{1:i}}\mathbb{E}_{S_{1:i}'}\mathbb{E}_\mathcal{A}\Bigg[\frac{1}{n}\sum_{k=1}^n f(\mathcal{A}(S_{1:i}^{(i,k)});z_{i,k}')\Bigg] \\
    &= \mathbb{E}_{S_{1:i}}\mathbb{E}_{S_{1:i}'}\mathbb{E}_\mathcal{A}\Bigg[\frac{1}{n}\sum_{k=1}^n f(\mathcal{A}(S_{1:i});z_{i,k}')\Bigg] + \delta \\
    &= \mathbb{E}_{S_{1:i}}\mathbb{E}_\mathcal{A}[R_{\mathcal{D}_i}(\mathcal{A}(S_{1:i}))] + \delta.
\end{aligned}
\end{equation*}
where
\[
\begin{aligned}
\delta={}&\mathbb{E}_{S_{1:i}}\mathbb{E}_{S_{1:i}'}\mathbb{E}_\mathcal{A}\Bigg[\frac{1}{n}\sum_{k=1}^n f(\mathcal{A}(S_{1:i}^{(i,k)});z_{i,k}')\\
&-\frac{1}{n}\sum_{k=1}^n f(\mathcal{A}(S_{1:i});z_{i,k}')\Bigg].
\end{aligned}
\]

Under the assumption that $\mathcal{A}$ is $\epsilon_{i,i}$-uniformly stable, we have
\[
|\delta|\leq \epsilon_{i,i}.
\]
Therefore,
\[
\begin{aligned}
&\big|\mathbb{E}_{S_{1:i},\mathcal{A}}[R_{\mathcal{D}_i}(\mathcal{A}(S_{1:i}))-R_{S_i}(\mathcal{A}(S_{1:i}))]\big|
= |\delta|\leq \epsilon_{i,i}.
\end{aligned}
\]
\end{proof}

\subsection{Proof of Theorem \ref{Thm: IGG bound convex}}\label{Sec: proof of IGG bound convex}

\begin{theorem*}
Assume that the loss function $f(\cdot;z)$ satisfies Assumptions~\ref{assump: Lipschitz}, \ref{assump: smooth}, and \ref{assump: convex}. Suppose that SGD is run for $N$ steps on each task $\tau\in[i]$ with step sizes $\eta_{\tau,t}\leq \frac{2}{\beta}$. Then,
\[
\epsilon_{i,i}\leq \frac{2L^2}{n}\sum_{t=0}^{N-1} \eta_{i,t}.
\]
\end{theorem*}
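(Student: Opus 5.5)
We follow the coupling argument of \citet{hardt2016train}, adapted to the sequential structure. Fix $S_{1:i}$ and $S_{1:i}'$ differing only in one example of task $i$, say $z_{i,k}$ versus $z_{i,k}'$. We run SGD on both with the same initialization $w_0$ and the same index sequence $\{i_t\}$ across all tasks. Write $w_{\tau,t}$ and $w_{\tau,t}'$ for the coupled iterates and set $\delta_{\tau,t}=\|w_{\tau,t}-w_{\tau,t}'\|$.

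Since tasks $1,\dots,i-1$ use identical data and identical indices, the trajectories coincide up to the start of task $i$. Hence $\delta_{i,0}=\|w_{i-1}-w_{i-1}'\|=0$. This is why only the learning rates of task $i$ appear in the bound, and why the condition $\eta_{\tau,t}\le 2/\beta$ for $\tau<i$ plays no real role.

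Within task $i$, consider step $t$ and split into two cases.
\begin{itemize}
\item With probability $1-1/n$ the sampled index is not $k$. Both chains then apply the same map $G_{f(\cdot;z),\eta_{i,t}}$. By Lemma~\ref{Lem: f property}(2) and convexity with $\eta_{i,t}\le 2/\beta$, this map is $1$-expansive, so $\delta_{i,t+1}\le\delta_{i,t}$.
\item With probability $1/n$ the index is $k$, and the chains use the different samples $z_{i,k}$ and $z_{i,k}'$. By the triangle inequality and $\|\nabla f\|\le L$ (from Assumption~\ref{assump: Lipschitz}), we get $\delta_{i,t+1}\le\delta_{i,t}+2\eta_{i,t}L$.
\end{itemize}
Taking expectations over the index gives $\mathbb{E}\delta_{i,t+1}\le\mathbb{E}\delta_{i,t}+\frac{2L\eta_{i,t}}{n}$. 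Unrolling from $\delta_{i,0}=0$ yields
\[
\mathbb{E}_{\mathcal{A}}\|w_i-w_i'\|\le\frac{2L}{n}\sum_{t=0}^{N-1}\eta_{i,t}.
\]

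Finally, Lipschitzness gives, for every $z$,
\[
\mathbb{E}_{\mathcal{A}}[f(w_i;z)-f(w_i';z)]\le L\,\mathbb{E}_{\mathcal{A}}\|w_i-w_i'\|.
\]
This is at most $\frac{2L^2}{n}\sum_t\eta_{i,t}$. Taking the supremum over $z$ and over neighboring dataset pairs bounds the infimal constant $\epsilon_{i,i}$.

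The only delicate point is bookkeeping. The coupled randomness, i.e.\ the shared index sequence, must be a valid realization of $\mathcal{A}$ on both datasets, so that the expectation over $\mathcal{A}$ in Definition~\ref{Def: uniform-stability-ii} can be taken jointly. This holds because indices are drawn independently of the data. We must also verify that the gradient-difference case uses the bound $\|\nabla f\|\le L$ rather than smoothness. Both are routine.
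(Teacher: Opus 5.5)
Your proposal is correct and follows the paper's proof essentially step for step. It uses the same shared-index coupling, the same starting point $\delta_{i,0}=0$ because tasks $1,\dots,i-1$ coincide, the same case split into a $1$-expansive step and a $2\eta_{i,t}L$ step giving $\mathbb{E}\delta_{i,t+1}\le\mathbb{E}\delta_{i,t}+2\eta_{i,t}L/n$, and the same final Lipschitz step. Your extra remarks are accurate and only make explicit what the paper leaves implicit: the step-size condition is not needed for $\tau<i$, and the shared index sequence is a valid realization of $\mathcal{A}$ on both datasets because indices are drawn independently of the data.
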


\begin{proof}
Let $S_{1:i}$ and $S_{1:i}'$ be two datasets consisting of samples from the first $i$ tasks, and assume that they differ in only one example in task $i$. Consider the update sequences $G_{1,0},\dots,G_{1,N-1},\dots,G_{i,0},\dots,G_{i,N-1}$ and $G'_{1,0},\dots,G'_{1,N-1},\dots,G'_{i,0},\dots,G'_{i,N-1}$ induced by running SGD on $S_{1:i}$ and $S_{1:i}'$, respectively. The two runs use the same initialization and sampled indices.

Fix an arbitrary example $z\in\mathcal{Z}$. By the $L$-Lipschitz property of $f(\cdot;z)$, we have
\begin{equation}\label{eq:lipschitz_bound_iN}
    \mathbb{E}\big|f(w_{i,N};z)-f(w'_{i,N};z)\big|
    \leq L\,\mathbb{E}[\delta_{i,N}],
\end{equation}
where $\delta_{i,N}=\|w_{i,N}-w'_{i,N}\|$.

During the $t$-th step of training on task $i$, with probability $1-\frac{1}{n}$, SGD selects the same example in both $S_i$ and $S_i'$. In this case, the gradient update operators coincide, namely $G_{i,t}=G'_{i,t}$, and we use the $1$-expansivity of $G_{i,t}$, which follows from the convexity and smoothness of $f$. With probability $\frac{1}{n}$, the selected examples differ. In this case, both $G_{i,t}$ and $G'_{i,t}$ are $\eta_{i,t}L$-bounded.

Combining these two cases yields
\begin{equation*}
    \begin{aligned}
    \mathbb{E}[\delta_{i,t+1}]
    &\leq \left(1-\frac{1}{n}\right)\mathbb{E}[\delta_{i,t}]
    + \frac{1}{n}\mathbb{E}[\delta_{i,t}]
    + \frac{2\eta_{i,t}L}{n}\\
    &= \mathbb{E}[\delta_{i,t}] + \frac{2\eta_{i,t}L}{n}.
    \end{aligned}
\end{equation*}
Unrolling this recursion over the $N$ steps of task $i$, we obtain
\begin{equation}\label{eq:task_i_recursion}
    \mathbb{E}[\delta_{i,N}]
    \leq \mathbb{E}[\delta_{i,0}]
    + \sum_{t=0}^{N-1} \frac{2\eta_{i,t}L}{n}.
\end{equation}

Under the CL protocol, the initialization of task $i$ is the final model from task $i-1$, so $w_{i,0}=w_{i-1,N}$ and $w'_{i,0}=w'_{i-1,N}$. Therefore,
\[
\delta_{i,0}=\|w_{i,0}-w'_{i,0}\|=\|w_{i-1,N}-w'_{i-1,N}\|=\delta_{i-1,N}.
\]
Since the perturbation occurs only in task $i$, all previous tasks are identical in $S_{1:i-1}$ and $S_{1:i-1}'$. Hence, $w_{i-1,N}=w'_{i-1,N}$, which implies $\delta_{i-1,N}=0$. Therefore,
\[
\mathbb{E}[\delta_{i,0}]=0.
\]
Substituting this into Equation  \eqref{eq:task_i_recursion} gives
\[
\mathbb{E}[\delta_{i,N}]
\leq \frac{2L}{n}\sum_{t=0}^{N-1} \eta_{i,t}.
\]

Finally, plugging this bound into Equation \eqref{eq:lipschitz_bound_iN}, we obtain
\[
\mathbb{E}\big|f(w_{i,N};z)-f(w'_{i,N};z)\big|
\leq \frac{2L^2}{n}\sum_{t=0}^{N-1} \eta_{i,t}.
\]
This proves the theorem.
\end{proof}

\subsection{Proof of Theorem \ref{Thm: IGG bound non-convex}}\label{Sec: proof of IGG bound non-convex}

\begin{lemma}\label{Lem: non-convex-upper-bound}
Assume that $n\geq2$ and that the loss function $f(\cdot;z)$ is nonnegative and $L$-Lipschitz for every $z$. Let $S_{1:i}$ and $S_{1:i}'$ be two datasets consisting of samples from the first $i$ tasks, differing in only one example in task $i$. Then, for every $z\in\mathcal{Z}$ and every $t_0\in\{0,1,\ldots,N\}$, under the random update rule, we have
\[
\begin{aligned}
&\mathbb{E}\big|f(w_i;z)-f(w_i';z)\big|\\
 \leq&\frac{t_0}{n}\sup_{w,z}f(w;z)
+
L\,\mathbb{E}\!\left[\|w_i-w_i'\|\mid \delta_{i,t_0}=0\right].
\end{aligned}
\]
The same inequality holds under the random permutation rule when $t_0<n$.
\end{lemma}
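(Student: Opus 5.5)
This is the continual-learning version of Lemma 3.11 of \citet{hardt2016train}. The plan is to condition on whether the two SGD trajectories have already separated within the first $t_0$ steps of task $i$. Write $\mathcal{E}=\{\delta_{i,t_0}=0\}$, where the two runs on $S_{1:i}$ and $S_{1:i}'$ share the initialization $w_0$ and all sampled indices. Since $S_{1:i-1}=S_{1:i-1}'$, the runs coincide through tasks $1,\dots,i-1$, so $\delta_{i,0}=0$ deterministically. Throughout, let $k^\ast$ be the position in $S_i$ at which the two datasets differ.

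We split the expectation as
\[
\mathbb{E}|f(w_i;z)-f(w_i';z)|=\mathbb{P}(\mathcal{E})\,\mathbb{E}\bigl[|f(w_i;z)-f(w_i';z)|\mid\mathcal{E}\bigr]+\mathbb{P}(\mathcal{E}^c)\,\mathbb{E}\bigl[|f(w_i;z)-f(w_i';z)|\mid\mathcal{E}^c\bigr].
\]
The two pieces are handled as follows.

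\emph{First term.} Bound $\mathbb{P}(\mathcal{E})\le 1$. Then the $L$-Lipschitz property (Assumption~\ref{assump: Lipschitz}) gives $|f(w_i;z)-f(w_i';z)|\le L\|w_i-w_i'\|$, so this term is at most $L\,\mathbb{E}[\|w_i-w_i'\|\mid\mathcal{E}]$.

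\emph{Second term.} Since $f\ge0$, we have $|f(w_i;z)-f(w_i';z)|\le\sup_{w,z}f(w;z)$. For the probability, the trajectories can separate during the first $t_0$ steps of task $i$ only if index $k^\ast$ is sampled at some step $t<t_0$. Before that, the same sample is used at every step and $\delta$ stays zero. Under sampling with replacement, a union bound gives $\mathbb{P}(\mathcal{E}^c)\le\sum_{t=0}^{t_0-1}\mathbb{P}(i_t=k^\ast)=t_0/n$.

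Combining the two bounds yields the stated inequality. The case $t_0=0$ is trivial, because $\mathcal{E}$ then holds surely.

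\emph{Random permutation rule.} When $t_0<n$, the first $t_0$ indices of a uniform permutation of $[n]$ are distinct. Each index equals $k^\ast$ with probability $1/n$, so $\mathbb{P}(\mathcal{E}^c)\le t_0/n$ again.

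The only delicate point is the event inclusion $\mathcal{E}^c\subseteq\{k^\ast\in\{i_0,\dots,i_{t_0-1}\}\}$. This inclusion relies on the fact that identical data and identical indices produce identical updates, and that earlier tasks contribute no discrepancy. The rest of the argument is bookkeeping.
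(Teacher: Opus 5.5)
Your proposal is correct and matches the paper's proof essentially step for step. Like the paper, you condition on $\mathcal{E}=\{\delta_{i,t_0}=0\}$, apply the Lipschitz bound on $\mathcal{E}$ and $\sup f$ on $\mathcal{E}^c$, and control $\mathbb{P}(\mathcal{E}^c)$ by the probability that the differing index is drawn within the first $t_0$ steps: a union bound with replacement, and uniformity of the first-hit time under the permutation rule.
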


\begin{proof}
Let $S_{1:i}$ and $S_{1:i}'$ be two datasets consisting of samples from the first $i$ tasks, and assume that they differ in only one example in task $i$. Consider the update sequences $G_{1,0},\dots,G_{1,N-1},\dots,G_{i,0},\dots,G_{i,N-1}$ and $G'_{1,0},\dots,G'_{1,N-1},\dots,G'_{i,0},\dots,G'_{i,N-1}$ induced by running SGD on $S_{1:i}$ and $S_{1:i}'$, respectively, with the same initialization and sampled indices.

Let $\mathcal{E}$ denote the event that $\delta_{i,t_0}=0$. Then,
\begin{equation*}
\begin{aligned}
    &\mathbb{E}\big|f(w_i;z)-f(w_i';z)\big|\\
    =& \mathbb{P}(\mathcal{E})\,\mathbb{E}\big[|f(w_i;z)-f(w_i';z)|\mid \mathcal{E}\big] \\
    & + \mathbb{P}(\mathcal{E}^c)\,\mathbb{E}\big[|f(w_i;z)-f(w_i';z)|\mid \mathcal{E}^c\big] \\
    \leq& \mathbb{E}\big[|f(w_i;z)-f(w_i';z)|\mid \mathcal{E}\big]+ \mathbb{P}(\mathcal{E}^c)\sup_{w,z} f(w;z) \\
    \leq& L\,\mathbb{E}\big[\|w_i-w_i'\|\mid \mathcal{E}\big]
    + \mathbb{P}(\mathcal{E}^c)\sup_{w,z} f(w;z).
\end{aligned}
\end{equation*}

We now bound $\mathbb{P}(\mathcal{E}^c)$. Let $i^*$ denote the unique position at which $S_{1:i}$ and $S_{1:i}'$ differ, and let $I$ be the first step at which SGD selects the example $z_{i^*}$. If $I>t_0$, then the two executions are identical up to step $t_0$, and thus $\delta_{i,t_0}=0$. Therefore,
\[
\mathbb{P}(\mathcal{E}^c)=\mathbb{P}(\delta_{i,t_0}\neq 0)\leq \mathbb{P}(I\leq t_0).
\]

Under the random permutation rule, $I$ is uniformly distributed over $\{1,\ldots,n\}$, so, for $t_0<n$,
\[
\mathbb{P}(I\leq t_0)=\frac{t_0}{n}.
\]

Under the random selection rule, let $A_t$ be the event that $z_{i^*}$ is selected at step $t$. Then
\[
\mathbb{P}(I\leq t_0)\leq \mathbb{P}\Big(\bigcup_{t=1}^{t_0} A_t\Big)
\leq \sum_{t=1}^{t_0}\mathbb{P}(A_t)=\frac{t_0}{n}.
\]
This proves the claim.
\end{proof}

\begin{theorem*}
Assume that the loss function $f(\cdot;z)\in[0,1]$ satisfies Assumptions~\ref{assump: Lipschitz} and \ref{assump: smooth}, with $n\geq2$ and $c,\beta>0$. Suppose that SGD is run for $N$ steps on each task $\tau\in[i]$ with learning rates $\eta_{\tau,t}$. Then,

(a) under intra-task decaying learning rates $\eta_{\tau,t}=\frac{c}{t+1}$:
    \[
    \epsilon_{i,i}\leq 
    \begin{cases} 
          \mathcal{K}_N-\frac{2L^2}{\beta n},&\text{if }2cL^2<N,\\[1ex]
          \frac{N}{n},&\text{if }2cL^2\geq N.
    \end{cases}
    \]
    
(b) under continuously decaying learning rates $\eta_{\tau,t}=\frac{c}{(\tau-1) N+t+1}$ and $i>1$:
    \[
    \epsilon_{i,i}\leq 
    \begin{cases} 
          \frac{2L^2}{\beta n}\left[\left(\frac{i}{i-1}\right)^{c\beta}-1\right],&\text{if }2cL^2\leq \mathcal{C}_1^{\mathrm{CD}},\\[1ex]
          \mathcal{K}_{iN}-\frac{(i-1)N}{n}-\frac{2L^2}{\beta n},
          &\text{if }\mathcal{C}_1^{\mathrm{CD}}<2cL^2<iN,\\[1ex]
          \frac{N}{n},&\text{if }2cL^2\geq iN,
    \end{cases}
    \]
    where 
    \[
    \begin{aligned}
        \mathcal{C}_1^{\mathrm{CD}}&=(i-1)N\left(\frac{i-1}{i}\right)^{c\beta}.
    \end{aligned}
    \]
\end{theorem*}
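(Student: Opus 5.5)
The plan is to follow the non-convex SGD stability argument of \citet{hardt2016train}, working only inside task $i$. The reason this suffices is that the perturbation lives in task $i$, so $w_{i,0}=w'_{i,0}$ exactly, as in the proof of Theorem~\ref{Thm: IGG bound convex}.

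\textbf{Step 1: reduce to a conditional divergence.} Fix $z$ and a burn-in length $t_0\in\{0,\dots,N\}$. Lemma~\ref{Lem: non-convex-upper-bound} with $f\in[0,1]$ gives
\[
\mathbb{E}|f(w_i;z)-f(w_i';z)|\le \frac{t_0}{n}+L\,\mathbb{E}[\delta_{i,N}\mid\delta_{i,t_0}=0].
\]

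\textbf{Step 2: one-step recursion.} Conditional on $\delta_{i,t_0}=0$, consider a step $t\ge t_0$.
\begin{itemize}
\item With probability $1-\frac1n$ the two runs apply the same operator, which is $(1+\eta_{i,t}\beta)$-expansive by Lemma~\ref{Lem: f property}(1).
\item With probability $\frac1n$ the operators differ. Each is $\eta_{i,t}L$-bounded, which adds at most $2\eta_{i,t}L$.
\end{itemize}
Writing $\Delta_t=\mathbb{E}[\delta_{i,t}\mid\delta_{i,t_0}=0]$, and absorbing the harmless factor $1-\frac1n$ into $1$, this gives
\[
\Delta_{t+1}\le(1+\eta_{i,t}\beta)\Delta_t+\frac{2\eta_{i,t}L}{n}.
\]
Using $1+x\le e^x$ and unrolling from $\Delta_{t_0}=0$,
\[
\Delta_N\le\sum_{t=t_0}^{N-1}\frac{2\eta_{i,t}L}{n}\exp\Bigl(\beta\sum_{k=t+1}^{N-1}\eta_{i,k}\Bigr).
\]

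\textbf{Step 3, case (a): intra-task decay.} Take $\eta_{i,t}=c/(t+1)$ and reindex with $s=t+1$. The exponential factor is bounded by $(N/s)^{c\beta}$. The remaining sum $\sum_s s^{-1-c\beta}$ is compared to an integral, giving
\[
L\Delta_N\le\frac{2L^2}{\beta n}\Bigl[\Bigl(\frac{N}{t_0}\Bigr)^{c\beta}-1\Bigr].
\]
So the bound to minimize is
\[
\phi(t_0)=\frac{t_0}{n}+\frac{2L^2}{\beta n}\Bigl(\frac{N}{t_0}\Bigr)^{c\beta}-\frac{2L^2}{\beta n}.
\]
Setting $\phi'(t_0)=0$ gives $t_0^\ast=(2cL^2)^{1/(c\beta+1)}N^{c\beta/(c\beta+1)}$. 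At this point the second term equals $t_0^\ast/(c\beta n)$, so $\phi(t_0^\ast)=\mathcal{K}_N-\frac{2L^2}{\beta n}$.
\begin{itemize}
\item $t_0^\ast<N$ holds exactly when $2cL^2<N$, which gives the first case.
\item Otherwise take $t_0=N$, so the divergence term vanishes and the bound is $N/n$.
\end{itemize}

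\textbf{Step 4, case (b): continuous decay.} Work in global time $s=(i-1)N+t+1$, so $\eta=c/s$. The same computation now runs with endpoint $iN$ and starting point $s_0=(i-1)N+t_0\in[(i-1)N,iN]$, giving
\[
\psi(s_0)=\frac{s_0-(i-1)N}{n}+\frac{2L^2}{\beta n}\Bigl[\Bigl(\frac{iN}{s_0}\Bigr)^{c\beta}-1\Bigr].
\]
The unconstrained minimizer is $s_0^\ast=(2cL^2)^{1/(c\beta+1)}(iN)^{c\beta/(c\beta+1)}$, and the three cases come from clipping it to the interval.
\begin{itemize}
\item If $s_0^\ast\le(i-1)N$, equivalently $2cL^2\le(i-1)N\bigl(\frac{i-1}{i}\bigr)^{c\beta}=\mathcal{C}_1^{\mathrm{CD}}$, take $s_0=(i-1)N$, i.e.\ $t_0=0$. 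This gives $\frac{2L^2}{\beta n}[(i/(i-1))^{c\beta}-1]$.
\item If $s_0^\ast$ is interior, the value is $\mathcal{K}_{iN}-\frac{(i-1)N}{n}-\frac{2L^2}{\beta n}$.
\item If $s_0^\ast\ge iN$, equivalently $2cL^2\ge iN$, take $t_0=N$ and get $N/n$.
\end{itemize}

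\textbf{Main obstacle.} The delicate step is the sum-to-integral comparison. We need $\sum_{s=s_0+1}^{M}\frac{c}{s}\bigl(\frac{M}{s}\bigr)^{c\beta}\le\frac1\beta\bigl[(M/s_0)^{c\beta}-1\bigr]$ with the correct endpoint conventions, including the degenerate $t_0=0$ case in (a), where the divergence only starts being nonzero after the differing sample is first drawn. I would handle this by monotonicity of $s^{-1-c\beta}$, bounding the sum by $\int_{s_0}^{M}$. I would also check that the exponent bound $\sum_{k=s+1}^{M}c/k\le c\ln(M/s)$ is applied with matching indices.
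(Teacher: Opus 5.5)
Your pipeline matches the paper's: the burn-in lemma, the $(1+\eta\beta)$-expansive / $2\eta L$-bounded recursion, the harmonic bound on the exponent, the sum-to-integral comparison, and optimizing and clipping the cutoff. Your algebra for $t_0^\ast$, $s_0^\ast$, the two interior values and the threshold conditions is also correct.

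The gap is in the interior branches of both (a) and (b). Lemma~\ref{Lem: non-convex-upper-bound} holds only for integer burn-in lengths $t_0\in\{0,\dots,N\}$, but $t_0^\ast$ (resp.\ $s_0^\ast-(i-1)N$) is generally not an integer. Because $\phi$ and $\psi$ are strictly convex, $\phi(m)>\phi(t_0^\ast)$ for every integer $m\neq t_0^\ast$. So your argument only proves the bound $\min_{m}\phi(m)$, which is strictly larger than the stated constant. Rounding to $\lceil t_0^\ast\rceil$ costs up to an additive $1/n$. In (a) there is a further problem when $t_0^\ast<1$. The only competitors are then $m\in\{0,1\}$, and your comparison with $\int_{t_0}^N u^{-c\beta-1}\,du$ diverges at $m=0$. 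Your ``main obstacle'' paragraph does not address this.

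The missing idea is to minimize over the integer cutoff \emph{before} replacing the sum by an integral. From the unrolled recursion, for every integer $m\in\{0,\dots,N\}$,
\[
\mathbb{E}|f(w_i;z)-f(w_i';z)|\le\frac1n\Bigl[m+\sum_{t=m+1}^N D(t)\Bigr].
\]
Here $D(t)=2cL^2N^{c\beta}t^{-c\beta-1}$ in (a) and $D(t)=2cL^2(iN)^{c\beta}((i-1)N+t)^{-c\beta-1}$ in (b). Since $D$ is nonincreasing,
\[
\min_{m}\Bigl\{m+\sum_{t>m}D(t)\Bigr\}=\sum_{t=1}^N\min\{1,D(t)\}\le\int_0^N\min\{1,D(u)\}\,du .
\]
The crossing point $D(u)=1$ is exactly your stationary point. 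Splitting the integral there gives precisely $n\,\phi(t_0^\ast)$ (resp.\ $n\,\psi(s_0^\ast)$). The same identity covers $t_0^\ast<1$ and both clipped cases. This is the paper's integer-cutoff comparison, Equation~\eqref{Eqn: integer-cutoff comparison}, and it is what yields the exact stated constants.
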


\begin{proof}
    Let $S_{1:i}$ and $S_{1:i}'$ be two datasets consisting of samples from the first $i$ tasks, and assume that they differ in only one example in task $i$. Consider the update sequences $G_{1,0},\dots,G_{1,N-1},\dots,G_{i,0},\dots,G_{i,N-1}$ and $G'_{1,0},\dots,G'_{1,N-1},\dots,G'_{i,0},\dots,G'_{i,N-1}$ induced by running SGD on $S_{1:i}$ and $S_{1:i}'$, respectively. The two runs use the same initialization and sampled indices.
    
    Based on Lemma \ref{Lem: non-convex-upper-bound}, we have for every integer $t_0\in\{0,\ldots,N\}$ under independent uniform sampling with replacement,
    \[
    \begin{aligned}
    &\mathbb{E}\abs{f(w_i;z)-f(w_i';z)}\leq\frac{t_0}{n}+L\mathbb{E}[\|w_i-w_i'\|\mid\delta_{i,t_0}=0].
    \end{aligned}
    \]

    We let $\Delta_{i,t}=\mathbb{E}[\delta_{i,t}|\delta_{i,t_0}=0].$ The case $L=0$ is immediate, so we take $L>0$ in the calculations below. Next, we consider two types of learning rates.

    Integer cutoffs are handled by the following comparison for any nonnegative, nonincreasing function $D$ on $(0,N]$:
    \begin{equation}\label{Eqn: integer-cutoff comparison}
    \begin{aligned}
    \min_{m\in\{0,\ldots,N\}}\left\{m+\sum_{t=m+1}^{N}D(t)\right\}&=\sum_{t=1}^{N}\min\{1,D(t)\}\\
    &\leq\int_0^N\min\{1,D(u)\}\,du.
    \end{aligned}
    \end{equation}
    Choose $m$ as the number of indices for which $D(t)\geq1$ to obtain the equality; monotonicity then gives the integral bound. We use this comparison to evaluate the cutoff through a finite integral.

    (a) Under intra-task decaying learning rates $\eta_{\tau,t}=\frac{c}{t+1}$, the $t$-th update uses $\eta_{i,t-1}=c/t$. For $t=t_0+1,\ldots,N$,
    \begin{equation*}
    \begin{aligned}
        \Delta_{i,t}
        \leq&(1-\frac{1}{n})(1+\eta_{i,t-1}\beta)\Delta_{i,t-1}+\frac{1}{n}\Delta_{i,t-1}+\frac{2\eta_{i,t-1}L}{n}\\
        =&(\frac{1}{n}+(1-\frac{1}{n})(1+\frac{c\beta}{t}))\Delta_{i,t-1}+\frac{2cL}{tn}\\
        =&(1+(1-\frac{1}{n})\frac{c\beta}{t})\Delta_{i,t-1}+\frac{2cL}{tn}\\
        \leq&\exp((1-\frac{1}{n})\frac{c\beta}{t})\Delta_{i,t-1}+\frac{2cL}{tn}.
    \end{aligned} 
    \end{equation*}

    Unrolling this recursion from $t_0$ to $N$ yields the following bound, where the integral estimate is used for $t_0\geq1$:
    \begin{equation}\label{Eqn: Delta_iN_ID_LR}
    \begin{aligned}
        \Delta_{i,N}
        \leq&\sum_{t=t_0+1}^N\Bigg\{\prod_{k=t+1}^N\exp((1-\frac{1}{n})\frac{c\beta }{k})\Bigg\}\frac{2cL}{tn}\\
        =&\sum_{t=t_0+1}^N\Bigg\{\exp\Big((1-\frac{1}{n})c\beta \sum_{k=t+1}^N\frac{1}{k}\Big)\Bigg\}\frac{2cL}{tn}\\
        \leq&\sum_{t=t_0+1}^N\Bigg\{\exp\Big((1-\frac{1}{n})c\beta \ln(\frac{N}{t})\Big)\Bigg\}\frac{2cL}{tn}\\
        =&\frac{2cL}{n}N^{(1-\frac{1}{n})c\beta }\sum_{t=t_0+1}^N t^{-(1-\frac{1}{n})c\beta -1}\\
        \leq&\frac{2cL}{n}N^{c\beta}\sum_{t=t_0+1}^{N}t^{-c\beta-1}\\
        \leq&\frac{2cL}{n}N^{c\beta}\int_{t_0}^{N}u^{-c\beta-1}\,du\\
        =&\frac{2L}{\beta n}\left[\left(\frac{N}{t_0}\right)^{c\beta}-1\right].
    \end{aligned} 
    \end{equation}
    Here, the first inequality after the sum uses $(N/t)^{(1-1/n)c\beta}\leq(N/t)^{c\beta}$. Therefore, we obtain
    \[
    \begin{aligned}
    &\mathbb{E}\left|f(w_i;z)-f(w_i';z)\right|\leq\frac{t_0}{n}+\frac{2L^2}{\beta n}
    \left[\left(\frac{N}{t_0}\right)^{c\beta}-1\right].
    \end{aligned}
    \]
    
    Differentiating the right-hand side with respect to the real variable $t_0$ gives the stationary point
    \[
        t_0^\star=(2cL^2)^\frac{1}{c\beta+1}N^\frac{c\beta}{c\beta+1}.
    \]
    
    Applying Equation \eqref{Eqn: integer-cutoff comparison} to the unrolled sum, with $D(u)=2cL^2N^{c\beta}u^{-c\beta-1}$, gives
    \[
    \begin{aligned}
    &\mathbb{E}|f(w_i;z)-f(w_i';z)|\\
    \leq&\frac1n\int_0^N\min\{1,2cL^2N^{c\beta}u^{-c\beta-1}\}\,du.
    \end{aligned}
    \]
    When $t_0^\star<N$, splitting the integral at $t_0^\star$ yields
    \[
    \begin{aligned}
    \mathbb{E}\left|f(w_i;z)-f(w_i';z)\right|
    \leq&\frac{t_0^\star}{n}
    +\frac{2cL^2N^{c\beta}}{n}\int_{t_0^\star}^{N}u^{-c\beta-1}\,du\\
    =&\frac{c\beta+1}{nc\beta}t_0^\star-\frac{2L^2}{\beta n}.
    \end{aligned}
    \]
    When $t_0^\star\geq N$, the integrand equals $1$ throughout $(0,N)$, giving $N/n$. Since $t_0^\star<N$ is equivalent to $2cL^2<N$, substituting $t_0^\star$ gives part (a).
    
    (b) Under continuously decaying learning rates $\eta_{\tau,t}=\frac{c}{(\tau-1)N+t+1}$, the $t$-th update uses $\eta_{i,t-1}=c/((i-1)N+t)$. For $t=t_0+1,\ldots,N$,
    \begin{equation*}
    \begin{aligned}
        &\Delta_{i,t}\\
        \leq&\Big(1-\frac{1}{n}\Big)(1+\eta_{i,t-1}\beta)\Delta_{i,t-1}+\frac{1}{n}\Delta_{i,t-1}+\frac{2\eta_{i,t-1}L}{n}\\
        =&\bigg(\frac{1}{n}+\Big(1-\frac{1}{n}\Big)\Big(1+\frac{c\beta}{(i-1)N+t}\Big)\bigg)\Delta_{i,t-1}\\
        &+\frac{2cL}{((i-1)N+t)n}\\
        =&\bigg(1+\Big(1-\frac{1}{n}\Big)\frac{c\beta}{(i-1)N+t}\bigg)\Delta_{i,t-1}+\frac{2cL}{((i-1)N+t)n}\\
        \leq&\exp{\bigg(\Big(1-\frac{1}{n}\Big)\frac{c\beta}{(i-1)N+t}\bigg)}\Delta_{i,t-1}+\frac{2cL}{((i-1)N+t)n}.
    \end{aligned} 
    \end{equation*}

    Unrolling this recursion from $t_0$ to $N$ yields
    \begin{equation}\label{Eqn: Delta_iN_CD_LR}
    \begin{aligned}
    &\Delta_{i,N}\\
    \leq& \sum_{t=t_0+1}^N \Bigg\{\prod_{k=t+1}^N \exp\bigg( \Big(1-\frac{1}{n}\Big)\frac{c\beta }{(i-1)N+k} \bigg) \Bigg\}\\
    &\qquad\times\frac{2cL}{((i-1)N+t)n} \\
    =& \sum_{t=t_0+1}^N \exp\bigg( \Big(1-\frac{1}{n}\Big)c\beta  \sum_{k=t+1}^N \frac{1}{(i-1)N+k} \bigg)\\
    &\qquad\times\frac{2cL}{((i-1)N+t)n} \\
    \leq& \sum_{t=t_0+1}^N \exp\bigg( \Big(1-\frac{1}{n}\Big)c\beta  \ln\bigg( \frac{iN}{(i-1)N+t} \bigg) \bigg)\\
    &\qquad\times\frac{2cL}{((i-1)N+t)n} \\
    =& \frac{2cL}{n} (iN)^{(1-\frac{1}{n})c\beta }\sum_{t=t_0+1}^N \big((i-1)N+t\big)^{-(1-\frac{1}{n})c\beta  - 1} \\
    \leq&\frac{2cL}{n}(iN)^{c\beta}
    \sum_{t=t_0+1}^{N}\big((i-1)N+t\big)^{-c\beta-1}\\
    \leq&\frac{2cL}{n}(iN)^{c\beta}
    \int_{t_0}^{N}\big((i-1)N+u\big)^{-c\beta-1}\,du\\
    =&\frac{2L}{\beta n}
    \left[\left(\frac{iN}{(i-1)N+t_0}\right)^{c\beta}-1\right],
    \end{aligned} 
    \end{equation}
    where we again use $iN/((i-1)N+t)\geq1$ to replace the exponent $(1-1/n)c\beta$ by $c\beta$ before taking the integral. Since $i>1$, this integral is finite also at $t_0=0$.

    Therefore, we have
    \begin{equation}\label{Eqn:bound-IGG-CD}
    \begin{aligned}
    &\mathbb{E}\left|f(w_i;z)-f(w_i';z)\right|\\
    \leq&\frac{t_0}{n}+\frac{2L^2}{\beta n}
    \left[\left(\frac{iN}{(i-1)N+t_0}\right)^{c\beta}-1\right].
    \end{aligned}
    \end{equation}

    Differentiating the right-hand side with respect to the real variable $t_0$ gives the stationary point
    \[
    \begin{aligned}
    t_0^\star={}&(2cL^2)^{\frac{1}{c\beta+1}}(iN)^{\frac{c\beta}{c\beta+1}}-(i-1)N.
    \end{aligned}
    \]
    Applying Equation  \eqref{Eqn: integer-cutoff comparison} to the unrolled sum, with $D(u)=2cL^2(iN)^{c\beta}((i-1)N+u)^{-c\beta-1}$, gives
    \[
    \begin{aligned}
    &\mathbb{E}|f(w_i;z)-f(w_i';z)|\\
    \leq&\frac1n\int_0^N
    \min\left\{1,\frac{2cL^2(iN)^{c\beta}}{((i-1)N+u)^{c\beta+1}}\right\}\,du.
    \end{aligned}
    \]
    When $t_0^\star\leq0$, the second argument of the minimum is at most $1$ throughout $(0,N)$, and evaluating the integral gives
    \[
    \begin{aligned}
    &\mathbb{E}|f(w_i;z)-f(w_i';z)|\leq\frac{2L^2}{\beta n}
    \left[\left(\frac{i}{i-1}\right)^{c\beta}-1\right].
    \end{aligned}
    \]
    When $0<t_0^\star<N$, splitting the integral at $t_0^\star$ gives
    \[
    \begin{aligned}
    &\mathbb{E}|f(w_i;z)-f(w_i';z)|\\
    \leq&\frac{t_0^\star}{n}+\frac{2cL^2(iN)^{c\beta}}{n}\int_{t_0^\star}^{N}
    \big((i-1)N+u\big)^{-c\beta-1}\,du\\
    =&\frac{c\beta+1}{nc\beta}\big((i-1)N+t_0^\star\big)-\frac{(i-1)N}{n}-\frac{2L^2}{\beta n}.
    \end{aligned}
    \]
    When $t_0^\star\geq N$, the integrand equals $1$ throughout $(0,N)$, giving $N/n$. The conditions $t_0^\star\leq0$ and $t_0^\star\geq N$ are equivalent to $2cL^2\leq(i-1)N((i-1)/i)^{c\beta}$ and $2cL^2\geq iN$, respectively. Substituting $t_0^\star$ gives part (b). The formulas agree at both boundaries.
\end{proof}

\section{Cross-Task Generalization Gap}

\subsection{Proof of Theorem \ref{Thm: CGG}}\label{Sec: Proof of CGG}

\begin{theorem*} 
If $\mathcal{A}$ is $\epsilon_{T,i}$-uniformly stable, then
    \begin{equation}
        |\mathbb{E}_{S_{1:T},\mathcal{A}}[R_{\mathcal{D}_i}(\mathcal{A}(S_{1:T}))-R_{S_i}(\mathcal{A}(S_{1:T}))]|\leq\epsilon_{T,i}.
    \end{equation}
\end{theorem*}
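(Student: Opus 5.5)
I will mirror the proof of Theorem~\ref{Thm: IGG}, using a symmetrization (ghost-sample) argument in which the replaced example sits in the historical task $i$ while all $T$ tasks are trained. Let $S_{1:T}$ and $S'_{1:T}$ be independent copies. For $k\in[n]$, let $S_{1:T}^{(i,k)}$ be obtained from $S_{1:T}$ by replacing $z_{i,k}$ with $z'_{i,k}$. All other tasks, including the later tasks $i+1,\dots,T$, are left unchanged. The key observation is an exchangeability statement. The datasets are independent across tasks, and within task $i$ the samples are i.i.d. Hence the pair $(S_{1:T}^{(i,k)},z_{i,k})$ has the same joint law as $(S_{1:T},z'_{i,k})$. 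The internal randomness of $\mathcal{A}$ (the sampled indices) is independent of the data, so it does not disturb this symmetry.

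Using this exchangeability, I will write
\[
\mathbb{E}_{S_{1:T},\mathcal{A}}[R_{S_i}(\mathcal{A}(S_{1:T}))]=\mathbb{E}_{S_{1:T},S'_{1:T},\mathcal{A}}\Bigl[\tfrac1n\textstyle\sum_{k=1}^n f(\mathcal{A}(S^{(i,k)}_{1:T});z'_{i,k})\Bigr].
\]
Adding and subtracting $\frac1n\sum_k f(\mathcal{A}(S_{1:T});z'_{i,k})$ gives the main term plus a remainder $\delta$. Since $z'_{i,k}\sim\mathcal{D}_i$ is independent of $S_{1:T}$ and of $\mathcal{A}$, the main term equals $\mathbb{E}[R_{\mathcal{D}_i}(\mathcal{A}(S_{1:T}))]$. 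This step uses only the independence of the ghost sample, not which task the model was last trained on. The remainder $\delta$ is an average over $k$ of the quantities $\mathbb{E}[f(\mathcal{A}(S^{(i,k)}_{1:T});z'_{i,k})-f(\mathcal{A}(S_{1:T});z'_{i,k})]$.

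For each fixed $k$, I condition on the data and on $z'_{i,k}$. The two sequences $S_{1:T}$ and $S^{(i,k)}_{1:T}$ differ in at most one example of task $i$. Definition~\ref{Def: uniform-stability-Ti} therefore applies with the supremum over the evaluation point $z=z'_{i,k}$, so the inner $\mathbb{E}_{\mathcal{A}}$ of each term lies in $[-\epsilon_{T,i},\epsilon_{T,i}]$. The lower bound comes from swapping the roles of the two datasets, since the definition is symmetric in the pair. Averaging over $k$ and then over the data gives $|\delta|\le\epsilon_{T,i}$, which yields the claimed bound on the absolute CGG.

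The only delicate point is the exchangeability step. I must check that replacing a task-$i$ sample leaves the joint law intact even though tasks $i+1,\dots,T$ are trained afterwards. This holds because the later datasets are independent of $S_i$ and identical in both sequences. Everything else is routine.
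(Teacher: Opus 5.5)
Your proposal is correct and is essentially the paper's proof: the same ghost-sample construction $S^{(i,k)}_{1:T}$, the same add-and-subtract giving the remainder $\delta$, and the same appeal to Definition~\ref{Def: uniform-stability-Ti} to get $|\delta|\le\epsilon_{T,i}$. Your remark that the two-sided bound follows by swapping the two datasets makes explicit a step the paper leaves implicit. One small slip: your first displayed equality needs $(S_{1:T},z_{i,k})$ to have the same law as $(S^{(i,k)}_{1:T},z'_{i,k})$, not the identity between $(S^{(i,k)}_{1:T},z_{i,k})$ and $(S_{1:T},z'_{i,k})$ that you wrote; both follow from the same invariance of the joint law under swapping $z_{i,k}$ and $z'_{i,k}$.
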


\begin{proof}
We first introduce the following notation. Let $S_{1:T}$ and $S_{1:T}'$ be two independent samples, and let $S_{1:T}^{(i,k)}$ denote the sample obtained from $S_{1:T}$ by replacing the $k$-th example $z_{i,k}$ in the dataset $S_i$ from task $i$ with $z_{i,k}'$. More explicitly,
\begin{equation}
\begin{aligned}
    S_{1:T} &= (z_{1,1},\dots,z_{1,n},\dots,z_{T,1},\dots,z_{T,n}),\\
    S_{1:T}' &= (z_{1,1}',\dots,z_{1,n}',\dots,z_{T,1}',\dots,z_{T,n}'),\\
    S_{1:T}^{(i,k)} &= (z_{1,1},\dots,z_{i,k-1},z_{i,k}',z_{i,k+1},\dots,z_{T,n}).
\end{aligned}
\end{equation}

With these notations, we have
\begin{equation*}
\begin{aligned}
    &\mathbb{E}_{S_{1:T}}\mathbb{E}_\mathcal{A}[R_{S_i}(\mathcal{A}(S_{1:T}))] \\
    =& 
    \mathbb{E}_{S_{1:T}}\mathbb{E}_\mathcal{A}\Bigg[\frac{1}{n}\sum_{k=1}^n f(\mathcal{A}(S_{1:T});z_{i,k})\Bigg] \\
    =& \mathbb{E}_{S_{1:T}}\mathbb{E}_{S_{1:T}'}\mathbb{E}_\mathcal{A}\Bigg[\frac{1}{n}\sum_{k=1}^n f(\mathcal{A}(S_{1:T}^{(i,k)});z_{i,k}')\Bigg] \\
    =& \mathbb{E}_{S_{1:T}}\mathbb{E}_{S_{1:T}'}\mathbb{E}_\mathcal{A}\Bigg[\frac{1}{n}\sum_{k=1}^n f(\mathcal{A}(S_{1:T});z_{i,k}')\Bigg] + \delta \\
    =& 
    \mathbb{E}_{S_{1:T}}\mathbb{E}_\mathcal{A}[R_{\mathcal{D}_i}(\mathcal{A}(S_{1:T}))] + \delta.
\end{aligned}
\end{equation*}
where
\[
\begin{aligned}
\delta=&\mathbb{E}_{S_{1:T}}\mathbb{E}_{S_{1:T}'}\mathbb{E}_\mathcal{A}\Bigg[
\frac{1}{n}\sum_{k=1}^n f(\mathcal{A}(S_{1:T}^{(i,k)});z_{i,k}')\\
&-\frac{1}{n}\sum_{k=1}^n f(\mathcal{A}(S_{1:T});z_{i,k}')\Bigg].
\end{aligned}
\]

Under the assumption that $\mathcal{A}$ is $\epsilon_{T,i}$-uniformly stable, we have
\[
|\delta|\leq \epsilon_{T,i}.
\]
Therefore,
\[
\left|\mathbb{E}_{S_{1:T},\mathcal{A}}\!\left[R_{\mathcal{D}_i}(\mathcal{A}(S_{1:T}))-R_{S_i}(\mathcal{A}(S_{1:T}))\right]\right|
= |\delta|
\leq \epsilon_{T,i}.
\]
\end{proof}

\subsection{Proof of Theorem \ref{Thm: CGG bound convex}}\label{Sec: proof of CGG bound convex}

\begin{theorem*}
Assume that the loss function $f(\cdot;z)$ satisfies Assumptions~\ref{assump: Lipschitz}, \ref{assump: smooth}, and \ref{assump: convex}. Suppose that SGD is run for $N$ steps on each task $\tau\in[T]$ with step sizes $0\leq\eta_{\tau,t}\leq \frac{2}{\beta}$ for $t=0,\ldots,N-1$. Then,
    \[\epsilon_{T,i}\leq\frac{2L^2}{n}\sum_{t=0}^{N-1} \eta_{i,t}.\]
\end{theorem*}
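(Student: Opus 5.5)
We will mirror the proof of Theorem~\ref{Thm: IGG bound convex}, extending the coupling argument past task $i$ through all subsequent tasks. Let $S_{1:T}$ and $S_{1:T}'$ differ in a single example of task $i$. We run SGD on both with the same initialization $w_0$ and the same sampled indices at every step, and set $\delta_{\tau,t}=\|w_{\tau,t}-w'_{\tau,t}\|$. Fixing an arbitrary $z$, the $L$-Lipschitz property (Assumption~\ref{assump: Lipschitz}) gives
\[
\mathbb{E}\,|f(w_T;z)-f(w_T';z)|\le L\,\mathbb{E}[\delta_{T,N}],
\]
so it suffices to show $\mathbb{E}[\delta_{T,N}]\le \frac{2L}{n}\sum_{t=0}^{N-1}\eta_{i,t}$.

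The argument splits into three phases.

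\emph{Tasks $1,\dots,i-1$.} The datasets and sampled indices coincide, so $w_{\tau,t}=w'_{\tau,t}$ throughout. Hence $\delta_{i,0}=\delta_{i-1,N}=0$.

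\emph{Task $i$.} The argument is identical to Theorem~\ref{Thm: IGG bound convex}.
\begin{itemize}
\item With probability $1-1/n$ the same example is drawn in both runs, and $G_{i,t}=G'_{i,t}$ is $1$-expansive by Lemma~\ref{Lem: f property}(2), using $\eta_{i,t}\le 2/\beta$.
\item With probability $1/n$ the drawn examples differ. Both updates are then $\eta_{i,t}L$-bounded, so $\delta_{i,t+1}\le\delta_{i,t}+2\eta_{i,t}L$.
\end{itemize}
Unrolling gives $\mathbb{E}[\delta_{i,N}]\le \frac{2L}{n}\sum_{t=0}^{N-1}\eta_{i,t}$.

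\emph{Tasks $\tau=i+1,\dots,T$.} The datasets $S_\tau$ and $S'_\tau$ are identical, and the sampled index is shared. So at every step both runs apply the same map $G_{f(\cdot;z_{\tau,i_t}),\eta_{\tau,t}}$. By convexity, smoothness, and $\eta_{\tau,t}\le 2/\beta$, Lemma~\ref{Lem: f property}(2) makes this map $1$-expansive. Therefore $\delta_{\tau,t+1}\le\delta_{\tau,t}$ pathwise.

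Chaining through the task boundaries via $w_{\tau,0}=w_{\tau-1,N}$ yields $\mathbb{E}[\delta_{T,N}]\le\mathbb{E}[\delta_{i,N}]$. Combining this with the Lipschitz step proves the bound for every $z$. Taking the supremum over $z$ then bounds $\epsilon_{T,i}$.

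No step is a real obstacle. The only point needing care is the post-task-$i$ propagation: we must check that no new perturbation is injected there. This holds because the coupling shares indices and the later datasets are identical. We must also check that the step-size condition holds for all $\tau\in[T]$, not only $\tau\le i$, so that non-expansiveness applies on every subsequent step. This is exactly what the hypothesis of the theorem supplies, and it is why, in the convex case, the CGG bound coincides with the IGG bound.
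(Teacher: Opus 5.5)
Your proposal is correct and follows essentially the same argument as the paper: the coupling with shared initialization and sampled indices, the Lipschitz reduction to $L\,\mathbb{E}[\delta_{T,N}]$, the $(1-1/n)$ versus $(1/n)$ recursion within task $i$ giving $\mathbb{E}[\delta_{i,N}]\le\frac{2L}{n}\sum_{t=0}^{N-1}\eta_{i,t}$, and $1$-expansivity of the shared updates on tasks $i+1,\dots,T$ to carry this bound to $\delta_{T,N}$. The only cosmetic difference is on tasks before $i$: you note that $w_{\tau,t}=w'_{\tau,t}$ pathwise, whereas the paper obtains $\mathbb{E}[\delta_{i,0}]=0$ through a chain of $1$-expansivity inequalities.
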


\begin{proof}    
    Let $S_{1:T}$ and $S_{1:T}'$ be two datasets containing $T$ tasks, differing in only one example in task $i$. Couple the two runs using the same initialization and sampled indices. Consider the update sequences $G_{1,0},\dots,G_{1,N-1},\dots,G_{T,0},\dots,G_{T,N-1}$ and $G'_{1,0},\dots,G'_{1,N-1},\dots,G'_{T,0},\dots,G'_{T,N-1}$ induced by running SGD on $S_{1:T}$ and $S_{1:T}'$, respectively.
    
    First, fix an arbitrary example $z \in \mathcal{Z}$. By the $L$-Lipschitz continuity of $f(\cdot; z)$, we have
    \begin{equation}\label{eq:lipschitz_bound}
        \mathbb{E}\bigl|f(w_{T,N}; z)-f(w'_{T,N}; z)\bigr|
        \leq L\,\mathbb{E}[\delta_{T,N}],
    \end{equation}
    where $\delta_{T,N}=\|w_{T,N}-w'_{T,N}\|$. Under the CL framework, the initialization for each task is exactly the final model from the previous task, that is, $w_{k,0}=w_{k-1,N}$ and $w'_{k,0}=w'_{k-1,N}$ for any $k\in\{2,\dots,T\}$. Hence, $\delta_{k,0}=\delta_{k-1,N}$. To bound $\mathbb{E}[\delta_{T,N}]$, we now consider two cases.

    (i) When the task index satisfies $j\in\{i+1,i+2,\dots,T\}$, the corresponding dataset for task $j$ is identical in $S_{1:T}$ and $S_{1:T}'$. Therefore, $G_{j,t}=G'_{j,t}$ holds for any $j\in\{i+1,i+2,\dots,T\}$ and $t\in\{0,\ldots,N-1\}$. This allows us to apply the $1$-expansivity of $G_{j,t}$. Under this case, we have
    \begin{equation}\label{Eqn: task_T_recursion OOD}
    \begin{aligned}
        \mathbb{E}[\delta_{T,N}]
        &\leq \mathbb{E}[\delta_{T,N-1}]\leq\cdots\leq \mathbb{E}[\delta_{T,0}]\\
        &=\mathbb{E}[\delta_{T-1,N}]\leq\cdots\\
        &\leq \mathbb{E}[\delta_{i+1,0}]=\mathbb{E}[\delta_{i,N}].
    \end{aligned}
    \end{equation}

     (ii) During the $t$-th step of training on task $i$, with probability $1 - \frac{1}{n}$, the example selected by SGD is identical in both $S_i$ and $S_i'$. In this case, the gradient update operators are identical (i.e., $G_{i,t} = G'_{i,t}$), and we can use the $1$-expansivity of $G_{i,t}$ due to the convexity and smoothness of $f$. With probability $\frac{1}{n}$, the selected examples differ. In this scenario, we rely on the property that both $G_{i,t}$ and $G'_{i,t}$ are $\eta_{i,t} L$-bounded. 
    
    Combining these two cases, we obtain the following recurrence relation for $t=0,\ldots,N-1$:
    \begin{equation*}
    \begin{aligned}
        \mathbb{E}[\delta_{i,t+1}]
        &\leq \left(1 - \frac{1}{n}\right)\mathbb{E}[\delta_{i,t}]
        + \frac{1}{n}\mathbb{E}[\delta_{i,t}] + \frac{2\eta_{i,t}L}{n}\\
        &= \mathbb{E}[\delta_{i,t}] + \frac{2\eta_{i,t}L}{n}.
    \end{aligned}
    \end{equation*}
    
    Unraveling this recursion from step $N$ down to step $0$ within task $i$, we get:
    \begin{equation} \label{Eqn: task_i_recursion OOD}
        \mathbb{E}[\delta_{i,N}] \leq \mathbb{E}[\delta_{i,0}] + \sum_{t=0}^{N-1} \frac{2\eta_{i,t}L}{n}.
    \end{equation}
    
    Since the data perturbation occurs \textit{only} in the dataset for task $i$, the datasets for all prior tasks remain identical. Consequently, for any previous task $m \in \{1, \dots, i-1\}$ and any update $t \in \{0, \dots, N-1\}$, the operators $G_{m,t} = G'_{m,t}$. Therefore, we can apply the $1$-expansivity property recursively:
    \begin{equation*}
        0 \leq \mathbb{E}[\delta_{i,0}] \leq \mathbb{E}[\delta_{i-1,N}] \leq \dots \leq \mathbb{E}[\delta_{1,0}] = 0.
    \end{equation*}
    
    Substituting $\mathbb{E}[\delta_{i,0}]=0$ into Equation~\eqref{Eqn: task_i_recursion OOD} and combining it with Equation~\eqref{Eqn: task_T_recursion OOD}, we obtain
    \begin{equation*}
        \mathbb{E}[\delta_{T,N}] \leq \frac{2L}{n}\sum_{t=0}^{N-1} \eta_{i,t}.
    \end{equation*}
    
    Finally, plugging this bound back into Equation \eqref{eq:lipschitz_bound}, we arrive at the stability bound:
    \begin{equation*}
        \mathbb{E}|f(w_{T,N}; z) - f(w'_{T,N}; z)| \leq \frac{2L^2}{n}\sum_{t=0}^{N-1} \eta_{i,t}.
    \end{equation*}
\end{proof}

\subsection{Proof of Theorem \ref{Thm: CGG bound non-convex}}\label{Sec: proof of CGG bound non-convex}

\begin{lemma}\label{Lem: non-convex-upper-bound OOD}
    Assume that the loss function $f(\cdot;z)$ is nonnegative and $L$-Lipschitz for every $z$, and that $n\geq2$. Let $S_{1:T}$ and $S_{1:T}'$ be two datasets consisting of samples from $T$ tasks, differing in only one example in task $i$. Then, for every $z\in\mathcal{Z}$ and every $t_0\in\{0,1,\ldots,N\}$ under the random update rule, we have
    \[
    \begin{aligned}
    &\mathbb{E}\abs{f(w_T;z)-f(w_T';z)}\\
    \leq&\frac{t_0}{n}\sup_{w,z}f(w;z)
    +L\mathbb{E}[\|w_T-w_T'\|\mid\delta_{i,t_0}=0].
    \end{aligned}
    \]
    The same inequality holds under the random permutation rule for $0\leq t_0\leq N$ and $t_0<n$.
\end{lemma}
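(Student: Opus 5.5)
This is the cross-task analogue of Lemma~\ref{Lem: non-convex-upper-bound}, and the plan is to repeat that argument with the evaluation point moved from $w_i$ to $w_T$.

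\textbf{Coupling.} We couple the two SGD runs on $S_{1:T}$ and $S_{1:T}'$ so that they share the initialization $w_0$ and every sampled index, in every task $1,\dots,T$. Let $\mathcal{E}$ be the event $\{\delta_{i,t_0}=0\}$. By the law of total expectation,
\[
\mathbb{E}\abs{f(w_T;z)-f(w_T';z)}=\mathbb{P}(\mathcal{E})\,\mathbb{E}\bigl[\abs{f(w_T;z)-f(w_T';z)}\mid\mathcal{E}\bigr]+\mathbb{P}(\mathcal{E}^c)\,\mathbb{E}\bigl[\abs{f(w_T;z)-f(w_T';z)}\mid\mathcal{E}^c\bigr].
\]

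\textbf{Bounding the two pieces.} On $\mathcal{E}$, we bound $\mathbb{P}(\mathcal{E})\le1$ and use $L$-Lipschitzness of $f(\cdot;z)$, which gives $L\,\mathbb{E}[\|w_T-w_T'\|\mid\mathcal{E}]$. On $\mathcal{E}^c$, both losses are nonnegative, so their difference in absolute value is at most $\sup_{w,z}f(w;z)$. It then remains to show $\mathbb{P}(\mathcal{E}^c)\le t_0/n$.

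\textbf{Bounding $\mathbb{P}(\mathcal{E}^c)$.} Tasks $1,\dots,i-1$ use identical data in both runs. Under the shared randomness this gives $w_{i,0}=w'_{i,0}$ deterministically. Inside task $i$, the two trajectories stay equal until the first step $I$ at which the perturbed position $i^*$ is sampled. Hence $\mathcal{E}^c\subseteq\{I\le t_0\}$.
\begin{itemize}
\item Under uniform sampling with replacement, a union bound over the first $t_0$ steps of task $i$ gives $\mathbb{P}(I\le t_0)\le t_0/n$.
\item Under a random permutation, $I$ is uniform on $[n]$, so $\mathbb{P}(I\le t_0)=t_0/n$ exactly when $t_0<n$.
\end{itemize}
This restriction is why the permutation clause requires $t_0<n$ (and $t_0\le N$).

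\textbf{Main obstacle.} The only subtle point is that conditioning on $\mathcal{E}$ must not interact badly with the later tasks. This is harmless for two reasons. First, the event $\mathcal{E}$ depends only on sampling indices up to step $t_0$ of task $i$. Second, the indices in tasks $i+1,\dots,T$ are independent of those indices and are shared between the two runs. So the conditional expectation $\mathbb{E}[\|w_T-w_T'\|\mid\mathcal{E}]$ is well defined, and it can later be bounded by propagating $\Delta_{i,N}$ through tasks $i+1,\dots,T$ with the $(1+\eta\beta)$-expansivity from Lemma~\ref{Lem: f property}. The lemma itself needs no such propagation, so once the decomposition above is written down the argument is essentially bookkeeping.
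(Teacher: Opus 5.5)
Your proposal is correct and follows the paper's proof essentially step for step. It uses the same total-expectation split on $\mathcal{E}=\{\delta_{i,t_0}=0\}$, Lipschitzness on $\mathcal{E}$, the bound $\sup f$ on $\mathcal{E}^c$, and the first-hitting-time estimate $\mathbb{P}(\mathcal{E}^c)\le\mathbb{P}(I\le t_0)\le t_0/n$, which the paper imports from Lemma~\ref{Lem: non-convex-upper-bound}.

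One small correction to your explanation of the restriction $t_0<n$ under the permutation rule. The probability bound itself does not need it, since for $t_0\ge n$ one has $\mathbb{P}(I\le t_0)=1\le t_0/n$. Its real role is to guarantee $\mathbb{P}(\mathcal{E})\ge 1-t_0/n>0$, so that the conditional expectation in the statement is well defined.
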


\begin{proof}
    Let $\mathcal{E}$ denote the event that $\delta_{i,t_0}=0$, where  $t_0$ is the step index within task $i$. Then, we have
    \begin{equation*}
    \begin{aligned}
        &\mathbb{E}\abs{f(w_T;z)-f(w_T';z)}\\
        =&\mathbb{P}\{\mathcal{E}\}\mathbb{E}[\abs{f(w_T;z)-f(w_T';z)}\mid\mathcal{E}]\\
        &+\mathbb{P}\{\mathcal{E}^c\}\mathbb{E}[\abs{f(w_T;z)-f(w_T';z)}\mid\mathcal{E}^c]\\
        \leq&\mathbb{E}[\abs{f(w_T;z)-f(w_T';z)}\mid\mathcal{E}]+\mathbb{P}\{\mathcal{E}^c\}\sup_{w,z}f(w;z)\\
        \leq&L\mathbb{E}[\|w_T-w_T'\|\mid\mathcal{E}]
        +\mathbb{P}\{\mathcal{E}^c\}\sup_{w,z}f(w;z).
    \end{aligned}    
    \end{equation*}
    The bound on $\mathbb{P}\{\mathcal{E}^c\}$ is consistent with Lemma~\ref{Lem: non-convex-upper-bound}.
\end{proof}

\begin{theorem*}
Assume that the loss function $f(\cdot;z)\in[0,1]$ satisfies Assumptions~\ref{assump: Lipschitz} and \ref{assump: smooth}, with $n\geq2$ and $c,\beta>0$. Suppose that SGD is run for $N$ steps on each task $\tau\in[T]$ with learning rates $\eta_{\tau,t}$ for $t=0,\ldots,N-1$. Then,\\
(a) under intra-task decaying learning rates $\eta_{\tau,t}=\frac{c}{t+1}$:
    \[
     \epsilon_{T,i}\leq\begin{cases}
         \mathcal{B}_2^1, &\text{if }2cL^2(eN)^{c\beta(T-i)}<N,\\[2ex]
          \displaystyle\frac{N}{n}, &\text{if }2cL^2(eN)^{c\beta(T-i)}\geq N.
     \end{cases}
    \]

    where 
    \[
        \begin{aligned}
        \mathcal{B}_2^1=&\mathcal{K}_N(eN)^{\frac{c\beta(T-i)}{c\beta+1}}-\frac{2L^2(eN)^{c\beta(T-i)}}{\beta n}.
        \end{aligned}
    \]
(b) under continuously decaying learning rates $\eta_{\tau,t}=\frac{c}{(\tau-1)N+t+1}$ and $i>1$:
    {\small
    \[
    \epsilon_{T,i}
    \leq
    \begin{cases}
      \frac{2L^2}{\beta n}
      \left[\left(\frac{T}{i-1}\right)^{c\beta}-\left(\frac{T}{i}\right)^{c\beta}\right],&\text{if }2cL^2\leq\mathcal{C}_2^{\mathrm{CD}},\\[2ex]
      \mathcal{B}_2^2,
      &\text{if }\mathcal{C}_2^{\mathrm{CD}}<2cL^2<iN\left(\frac{i}{T}\right)^{c\beta},\\[2ex]
      \displaystyle\frac{N}{n}, &\text{if }2cL^2\geq iN\left(\frac{i}{T}\right)^{c\beta}.
    \end{cases}
    \]}
    where 
    {
    \[
    \begin{aligned}
        \mathcal{B}_2^2&=\mathcal{K}_{TN}-\frac{(i-1)N}{n}-\frac{2L^2}{\beta n}\left(\frac{T}{i}\right)^{c\beta},\\
        \mathcal{C}_2^{\mathrm{CD}}&=(i-1)N\left(\frac{i-1}{T}\right)^{c\beta}.
    \end{aligned}
    \]}
For $i=1$ under the continuously decaying schedule, 
\[
\epsilon_{T,1}\leq\begin{cases}
    \mathcal{K}_{TN}-\frac{2L^2T^{c\beta}}{\beta n},
    &\text{if }2cL^2T^{c\beta}<N,\\[2ex]
    \displaystyle\frac{N}{n},&\text{if }2cL^2T^{c\beta}\geq N.
\end{cases}
\]

\end{theorem*}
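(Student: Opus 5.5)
The plan is to reuse the argument behind Theorem~\ref{Thm: IGG bound non-convex} for task $i$, and then push the resulting parameter gap through tasks $i+1,\dots,T$ using only expansivity. Couple the two runs on $S_{1:T}$ and $S_{1:T}'$ through the same initialization and the same sampled indices. Lemma~\ref{Lem: non-convex-upper-bound OOD}, together with $f\in[0,1]$, gives
\[
\mathbb{E}|f(w_T;z)-f(w_T';z)|\le \frac{t_0}{n}+L\,\mathbb{E}[\delta_{T,N}\mid \delta_{i,t_0}=0].
\]
Write $\Delta_{\tau,t}=\mathbb{E}[\delta_{\tau,t}\mid\delta_{i,t_0}=0]$. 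Before task $i$ the data coincide, so $\delta_{i,0}=0$.

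\textbf{Within task $i$.} For steps $t_0<t\le N$, I repeat the recursion from the proof of Theorem~\ref{Thm: IGG bound non-convex}. A differing sample costs $2\eta L/n$, and otherwise the step is $(1+\eta\beta)$-expansive by Lemma~\ref{Lem: f property}(1). After dropping the factor $(1-1/n)$, this gives exactly the bounds \eqref{Eqn: Delta_iN_ID_LR} and \eqref{Eqn: Delta_iN_CD_LR} for $\Delta_{i,N}$.

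\textbf{Across tasks $i+1,\dots,T$.} The datasets are identical on these tasks, so every update is the same map in both runs. The gap is therefore multiplied pathwise by at most $1+\eta_{\tau,t}\beta\le e^{\eta_{\tau,t}\beta}$. The conditioning event depends only on the indices drawn in task $i$, and later indices are independent of them, so the conditioning does not interfere. This yields an amplification factor $A$:
\begin{itemize}
\item Under intra-task decay, each task contributes $\exp\bigl(c\beta\sum_{t=1}^N 1/t\bigr)\le (eN)^{c\beta}$, since $H_N\le\ln(eN)$. Hence $A=(eN)^{c\beta(T-i)}$.
\item Under continuous decay, the exponents telescope to $c\beta\sum_{k=iN+1}^{TN}1/k\le c\beta\ln(T/i)$. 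Hence $A=(T/i)^{c\beta}$.
\end{itemize}
Combining the two stages, the stability bound takes the form
\[
\frac{t_0}{n}+\frac1n\sum_{t=t_0+1}^N D(t).
\]
In the intra-task case, $D(u)=2cL^2AN^{c\beta}u^{-c\beta-1}$. In the continuous case, $D(u)=2cL^2(TN)^{c\beta}((i-1)N+u)^{-c\beta-1}$; here $(iN)^{c\beta}(T/i)^{c\beta}=(TN)^{c\beta}$ absorbs the amplification. For $i=1$ the continuous-decay $D$ is $2cL^2(TN)^{c\beta}u^{-c\beta-1}$.

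\textbf{Optimizing the cutoff.} Apply the integer-cutoff comparison \eqref{Eqn: integer-cutoff comparison} to get the bound $\frac1n\int_0^N\min\{1,D(u)\}\,du$, and split at the point where $D=1$.
\begin{itemize}
\item \emph{Intra-task decay.} The split point is $t_0^\star=(2cL^2A)^{\frac{1}{c\beta+1}}N^{\frac{c\beta}{c\beta+1}}$, and $t_0^\star<N$ is equivalent to $2cL^2(eN)^{c\beta(T-i)}<N$. The value is $\frac{c\beta+1}{nc\beta}t_0^\star-\frac{2L^2A}{\beta n}=\mathcal{B}_2^1$. Otherwise the integrand is identically $1$ and the bound is $N/n$.
\item \emph{Continuous decay, $i>1$.} Set $(i-1)N+t_0^\star=(2cL^2)^{\frac{1}{c\beta+1}}(TN)^{\frac{c\beta}{c\beta+1}}$. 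If $t_0^\star\le0$, which is equivalent to $2cL^2\le\mathcal{C}_2^{\mathrm{CD}}$, the full integral is $\frac{2L^2}{\beta n}\bigl[(T/(i-1))^{c\beta}-(T/i)^{c\beta}\bigr]$. If $0<t_0^\star<N$, splitting gives $\mathcal{B}_2^2$. If $t_0^\star\ge N$, which is equivalent to $2cL^2\ge iN(i/T)^{c\beta}$, the bound is $N/n$.
\item \emph{Continuous decay, $i=1$.} The split point is $t_0^\star=(2cL^2)^{\frac{1}{c\beta+1}}(TN)^{\frac{c\beta}{c\beta+1}}$. The integral near $0$ is finite only because of the $\min$ with $1$. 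If $t_0^\star<N$, which is equivalent to $2cL^2T^{c\beta}<N$, the value is $\mathcal{K}_{TN}-\frac{2L^2T^{c\beta}}{\beta n}$. Otherwise the bound is $N/n$.
\end{itemize}

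\textbf{Main obstacle.} The delicate step is justifying that conditioning on $\delta_{i,t_0}=0$ leaves the within-task-$i$ recursion and the later expansivity intact. The point to check is that on this event, a step uses the differing example with conditional probability at most $1/n$. This holds because the event constrains only indices up to step $t_0$, and steps after $t_0$ are drawn independently. The remaining work is verifying the equivalences of the case conditions and the exact algebra that produces $\mathcal{K}$.
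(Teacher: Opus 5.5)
Your proposal is correct and follows essentially the same route as the paper. Both arguments use the cutoff lemma with $t_0$, reuse the within-task-$i$ recursion from the IGG proof, and amplify the parameter gap through the later tasks by expansivity, with factor $(eN)^{c\beta(T-i)}$ or $(T/i)^{c\beta}$. Both then optimize the cutoff via the integer-cutoff comparison with the same functions $D$. Your small differences are cosmetic: you drop $(1-1/n)$ before unrolling rather than after, and you handle the $t_0^\star\le 0$ case through $\frac{1}{n}\int_0^N\min\{1,D(u)\}\,du$ rather than by setting $t_0=0$ directly. Neither changes any of the resulting bounds or case conditions.
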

\begin{proof}
Let $S_{1:T}$ and $S_{1:T}'$ be two datasets containing $T$ tasks, differing in only one example in task $i$. Couple the two runs using the same initialization and sampled indices. Consider the update sequences $G_{1,0},\dots,G_{1,N-1},\dots,G_{T,0},\dots,G_{T,N-1}$ and $G'_{1,0},\dots,G'_{1,N-1},\dots,G'_{T,0},\dots,G'_{T,N-1}$ induced by running SGD on $S_{1:T}$ and $S_{1:T}'$, respectively.

By Lemma~\ref{Lem: non-convex-upper-bound OOD}, for any integer $t_0\in\{0,\ldots,N\}$, we have
\begin{equation}\label{Eqn: upper_bound_non_convex}
    \mathbb{E}\left|f(w_T;z)-f(w_T';z)\right|
    \leq \frac{t_0}{n} + L\,\mathbb{E}\!\left[\|w_T-w_T'\| \mid \delta_{i,t_0}=0\right].
\end{equation}

Define $\Delta_{\tau,t}:=\mathbb{E}[\delta_{\tau,t}\mid \delta_{i,t_0}=0]$. We first bound $\Delta_{T,N}$. By construction, for every task $m\in\{i+1,\dots,T\}$, the sampled examples at each step are identical in $S_{1:T}$ and $S_{1:T}'$. Hence, $G_{m,t}=G'_{m,t}$ for all $t\in\{0,\dots,N-1\}$, and we may use the $(1+\eta_{m,t}\beta)$-expansivity of $G_{m,t}$.

Therefore, for any $t\in\{0,\dots,N-1\}$,

(a) under intra-task decaying learning rates $\eta_{\tau,t}=\frac{c}{t+1}$,
\begin{equation*}
\begin{aligned}
    \Delta_{T,t+1}
    &\leq (1+\eta_{T,t}\beta)\Delta_{T,t} \\
    &= \left(1+\frac{c\beta}{t+1}\right)\Delta_{T,t} \\
    &\leq \exp\!\left(\frac{c\beta}{t+1}\right)\Delta_{T,t}.
\end{aligned}
\end{equation*}

Unrolling this recursion from $N$ down to $0$ yields
\begin{equation*}
\begin{aligned}
    \Delta_{T,N}
    &\leq \prod_{k=0}^{N-1} \exp\!\left(\frac{c\beta}{k+1}\right)\Delta_{T,0} \\
    &= \exp\!\left(c\beta\sum_{k=0}^{N-1} \frac{1}{k+1}\right)\Delta_{T,0} \\
    &= \exp\!\left(c\beta\sum_{k=1}^{N} \frac{1}{k}\right)\Delta_{T,0} \\
    &\leq \exp\!\left(c\beta(\ln N+1)\right)\Delta_{T,0} \\
    &= (eN)^{c\beta}\Delta_{T,0}.
\end{aligned}
\end{equation*}

Under the CL protocol, the initialization of each task is the final model from the previous task, namely $w_{k,0}=w_{k-1,N}$ and $w'_{k,0}=w'_{k-1,N}$ for any $k\in\{2,\ldots,T\}$. Hence, $\delta_{k,0}=\delta_{k-1,N}$. Therefore,
\[
\begin{aligned}
\Delta_{T,N}
&\leq (eN)^{c\beta}\Delta_{T,0}\\
&= (eN)^{c\beta}\Delta_{T-1,N}\\
&\leq \cdots\leq (eN)^{c\beta(T-i)}\Delta_{i,N}.
\end{aligned}
\]
For $1\leq t_0\leq N$, Equation~\eqref{Eqn: Delta_iN_ID_LR} further gives
\[
\Delta_{T,N}\leq\frac{2L(eN)^{c\beta(T-i)}}{\beta n}
\left[\left(\frac{N}{t_0}\right)^{c\beta}-1\right].
\]

Substituting this bound into Equation~\eqref{Eqn: upper_bound_non_convex}, we have
\begin{equation*}
\begin{aligned}
    &\mathbb{E}\left|f(w_T;z)-f(w_T';z)\right|\\
    \leq& \frac{t_0}{n}
    +\frac{2L^2(eN)^{c\beta(T-i)}}{\beta n}
    \left[\left(\frac{N}{t_0}\right)^{c\beta}-1\right].
\end{aligned}
\end{equation*}

Differentiating the right-hand side with respect to a positive real cutoff gives the stationary point
\[
\widehat t_0=
(2cL^2)^{\frac{1}{c\beta+1}} e^{\frac{c\beta(T-i)}{c\beta+1}} N^{\frac{c\beta(T-i+1)}{c\beta+1}}.
\]
For integer SGD steps, apply Equation~\eqref{Eqn: integer-cutoff comparison} to the unrolled sum in Equation~\eqref{Eqn: Delta_iN_ID_LR}, after using $(N/t)^{(1-1/n)c\beta}\leq(N/t)^{c\beta}$ and the future-task factor. The corresponding function is
\[
D(x)=2cL^2(eN)^{c\beta(T-i)}N^{c\beta}x^{-c\beta-1}.
\]
When $0<\widehat t_0<N$, evaluating the finite integral on both sides of $\widehat t_0$ yields
\[
\begin{aligned}
&\mathbb{E}\left|f(w_T;z)-f(w_T';z)\right|\\
\leq&\frac{\widehat t_0}{n}
+\frac{2L^2(eN)^{c\beta(T-i)}}{\beta n}
\left[\left(\frac{N}{\widehat t_0}\right)^{c\beta}-1\right]\\
=&\frac{c\beta+1}{nc\beta}\widehat t_0
-\frac{2L^2(eN)^{c\beta(T-i)}}{\beta n}.
\end{aligned}
\]
This evaluation also covers $\widehat t_0<1$, since the comparison in Equation~\eqref{Eqn: integer-cutoff comparison} permits a zero integer cutoff. When $\widehat t_0\geq N$, the integral equals $N$, giving the bound $N/n$. The condition $\widehat t_0<N$ is equivalent to $2cL^2(eN)^{c\beta(T-i)}<N$, which proves part (a). The two expressions coincide at equality. The case $L=0$ follows directly.

(b) under continuously decaying learning rates $\eta_{\tau,t}=\frac{c}{(\tau-1) N+t+1}$,
\begin{equation*}
\begin{aligned}
    \Delta_{T,t+1}
    &\leq (1+\eta_{T,t}\beta)\Delta_{T,t} \\
    &= \left(1+\frac{c\beta}{(T-1)N+t+1}\right)\Delta_{T,t} \\
    &\leq \exp\!\left(\frac{c\beta}{(T-1)N+t+1}\right)\Delta_{T,t}.
\end{aligned}
\end{equation*}

For any $j\in\{i+1,\dots,T\}$, unrolling this recursion from $N$ down to $0$ yields
\begin{equation*}
\begin{aligned}
    \Delta_{j,N}
    &\leq \prod_{k=0}^{N-1} \exp\!\left(\frac{c\beta}{(j-1)N+k+1}\right)\Delta_{j,0} \\
    &= \exp\!\left(c\beta\sum_{k=0}^{N-1} \frac{1}{(j-1)N+k+1}\right)\Delta_{j,0} \\
    &= \exp\!\left(c\beta\sum_{k=1}^{N} \frac{1}{(j-1)N+k}\right)\Delta_{j,0} \\
    &\leq \exp\!\left(c\beta\ln\!\left(\frac{j}{j-1}\right)\right)\Delta_{j,0} \\
    &= \left(\frac{j}{j-1}\right)^{c\beta}\Delta_{j,0}.
\end{aligned}
\end{equation*}
Under the CL protocol, the initialization of each task is the final model from the previous task, namely $w_{k,0}=w_{k-1,N}$ and $w'_{k,0}=w'_{k-1,N}$ for any $k\in\{2,\ldots,T\}$. Hence, $\delta_{k,0}=\delta_{k-1,N}$. Therefore,
\begin{equation*}
\begin{aligned}
\Delta_{T,N}
\leq& \left(\frac{T}{T-1}\right)^{c\beta}\Delta_{T,0}\\
=& \left(\frac{T}{T-1}\right)^{c\beta}\Delta_{T-1,N}\\
\leq& \cdots\\
\leq& \left(\frac{T}{T-1}\right)^{c\beta}\cdots\left(\frac{i+1}{i}\right)^{c\beta}\Delta_{i,N}\\
=&\left(\frac{T}{i}\right)^{c\beta}\Delta_{i,N}.
\end{aligned}
\end{equation*}

Based on Equation~\eqref{Eqn: Delta_iN_CD_LR}, we can further obtain
\[
\Delta_{T,N}\leq\frac{2L}{\beta n}
\left[\left(\frac{TN}{(i-1)N+t_0}\right)^{c\beta}
-\left(\frac{T}{i}\right)^{c\beta}\right].
\]

Substituting this bound into Equation~\eqref{Eqn: upper_bound_non_convex}, we have
\begin{equation}\label{Eqn:bound-CGG-CD}
\begin{aligned}
    &\mathbb{E}\left|f(w_T;z)-f(w_T';z)\right|\\
    \leq& \frac{t_0}{n}+\frac{2L^2}{\beta n}
    \left[\left(\frac{TN}{(i-1)N+t_0}\right)^{c\beta}
    -\left(\frac{T}{i}\right)^{c\beta}\right].
\end{aligned}
\end{equation}

Differentiating the right-hand side gives the stationary point of its continuous relaxation:
\[
\widehat t_0=(2cL^2)^{\frac{1}{c\beta+1}} (TN)^{\frac{c\beta}{c\beta+1}} - (i-1)N.
\]
The integer cutoff is again justified by Equation~\eqref{Eqn: integer-cutoff comparison}, applied to the finite-sum recursion with
\[
D(x)=2cL^2(TN)^{c\beta}\big((i-1)N+x\big)^{-c\beta-1}.
\]
If $\widehat t_0\leq0$, choose $t_0=0$ in Equation~\eqref{Eqn:bound-CGG-CD}. This gives
\[
\mathbb{E}\left|f(w_T;z)-f(w_T';z)\right|\leq\frac{2L^2}{\beta n}
\left[\left(\frac{T}{i-1}\right)^{c\beta}-\left(\frac{T}{i}\right)^{c\beta}\right].
\]
If $0<\widehat t_0<N$, evaluating the finite integral at this threshold gives
\[
\begin{aligned}
&\mathbb{E}\left|f(w_T;z)-f(w_T';z)\right|\\
\leq&\frac{c\beta+1}{nc\beta}\big((i-1)N+\widehat t_0\big)-\frac{(i-1)N}{n}
-\frac{2L^2}{\beta n}\left(\frac{T}{i}\right)^{c\beta}.
\end{aligned}
\]
If $\widehat t_0\geq N$, the integral equals $N$, yielding $N/n$. The conditions $\widehat t_0\leq0$ and $\widehat t_0\geq N$ are respectively equivalent to $2cL^2\leq(i-1)N((i-1)/T)^{c\beta}$ and $2cL^2\geq iN(i/T)^{c\beta}$. These cases establish part (b), and the adjacent formulas agree at both boundaries. For $i=1$, the local training is the same as in part (a), while the future-task factor is $T^{c\beta}$. Substituting this factor into the analysis of part (a) gives the stated two-branch bound for the continuously decaying schedule.

\end{proof}

\section{Empirical Forgetting} \label{Sec: proof of EF}

\begin{theorem*}
    Assume that the loss function $f(\cdot;z)$ satisfies Assumptions~\ref{assump: Lipschitz} and~\ref{assump: smooth}. Suppose that SGD is run for $N$ steps on each task $\tau\in[T]$ with learning rates $\eta_{\tau,t}$. Then,\\
    (a) under constant learning rates $\eta_{\tau,t}=c$:
    \begin{align*}
        \tilde{\epsilon}_{T,i} \leq -\mathcal{G}_{T,i}+\frac{\beta c^2L^2}{2}N(T-i);
    \end{align*}
    (b) under intra-task decaying learning rates $\eta_{\tau,t}=\frac{c}{t+1}$:
    \begin{align*}
        \tilde{\epsilon}_{T,i} \leq -\mathcal{G}_{T,i}+\frac{\beta c^2L^2}{2}\left(2-\frac1N\right)(T-i);
    \end{align*}
    (c) under continuously decaying learning rates $\eta_{\tau,t}=\frac{c}{(\tau-1)N+t+1}$:
    \begin{align*}
        \tilde{\epsilon}_{T,i} \leq -\mathcal{G}_{T,i}+\frac{\beta c^2L^2}{2}\left(\frac1{iN}-\frac1{TN}\right).
    \end{align*}
\end{theorem*}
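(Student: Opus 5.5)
The proof is a descent-lemma argument applied to the empirical risk $R_{S_i}$ along the SGD trajectory of tasks $i+1,\dots,T$. We then take expectations over the sampled index and identify the first-order term with the cumulative gradient alignment $\mathcal{G}_{T,i}$.

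Since each $f(\cdot;z)$ is $\beta$-smooth, so is $R_{S_i}$. For any step $t$ of task $\tau>i$,
\[
R_{S_i}(w_{\tau,t+1})\le R_{S_i}(w_{\tau,t})-\eta_{\tau,t}\bigl\langle\nabla R_{S_i}(w_{\tau,t}),\nabla f(w_{\tau,t};z_{\tau,i_t})\bigr\rangle+\frac{\beta\eta_{\tau,t}^2}{2}\|\nabla f(w_{\tau,t};z_{\tau,i_t})\|^2 .
\]
The $L$-Lipschitz assumption gives $\|\nabla f\|\le L$, so the last term is at most $\beta\eta_{\tau,t}^2L^2/2$.

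Next, condition on the datasets and on the history up to $w_{\tau,t}$. The index $i_t$ is uniform on $[n]$ and independent of $w_{\tau,t}$, so $\mathbb{E}_{i_t}\nabla f(w_{\tau,t};z_{\tau,i_t})=\nabla R_{S_\tau}(w_{\tau,t})$. By the tower property, the expected inner product becomes $\mathbb{E}\langle\nabla R_{S_i}(w_{\tau,t}),\nabla R_{S_\tau}(w_{\tau,t})\rangle$. This is the step needing care: $S_i$ is fixed while the conditioning acts only on the fresh index. Telescoping over $t=0,\dots,N-1$ and $\tau=i+1,\dots,T$, and using $w_{\tau,0}=w_{\tau-1,N}$ and $w_{i}=w_{i+1,0}$, gives
\[
\tilde\epsilon_{T,i}\le-\mathcal{G}_{T,i}+\frac{\beta L^2}{2}\sum_{\tau=i+1}^{T}\sum_{t=0}^{N-1}\eta_{\tau,t}^2 .
\]
No convexity is used anywhere.

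It remains to bound $\sum\eta^2$ for each schedule.
\begin{itemize}
\item[(a)] For constant rates the sum is exactly $c^2N(T-i)$.
\item[(b)] For intra-task decay, each task contributes $c^2\sum_{k=1}^N k^{-2}\le c^2\bigl(1+\int_1^N u^{-2}du\bigr)=c^2(2-1/N)$, and there are $T-i$ tasks.
\item[(c)] For continuous decay, the indices run over $k=iN+1,\dots,TN$. The integral comparison $\sum_{k=iN+1}^{TN}k^{-2}\le\int_{iN}^{TN}u^{-2}du=\frac1{iN}-\frac1{TN}$ gives the stated bound.
\end{itemize}
These sums are routine. The main point to verify is the conditional-expectation identification in the cross term, which relies on sampling with replacement independent of the past.
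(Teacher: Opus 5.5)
Your proposal is correct and takes essentially the same route as the paper. The paper also applies the descent lemma to the $\beta$-smooth $R_{S_i}$ along tasks $i+1,\dots,T$, conditions on the history so the stochastic gradient averages to $\nabla R_{S_\tau}(w_{\tau,t})$, uses $\|\nabla f\|\le L$, telescopes to the general bound in $\sum\eta_{\tau,t}^2$, and applies the same integral comparisons for the three schedules. The only cosmetic difference is that you bound $\|\nabla f\|^2\le L^2$ pathwise before conditioning, whereas the paper conditions first and then bounds the averaged squared gradient norms.
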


\begin{proof}
We first consider the impact on task $i$ of performing one training step on task $\tau>i$, and then accumulate the empirical risk changes to obtain the final bound. Let $\mathcal{F}_{\tau,t}$ contain all training datasets, the initialization, and the sampling indices before the current update. Since the current index $i_t$ is sampled independently and uniformly from $[n]$, we have
\begin{align*}
&\mathbb{E}\!\left[\nabla f(w_{\tau,t};z_{\tau,i_t})\mid\mathcal{F}_{\tau,t}\right]
=\nabla R_{S_\tau}(w_{\tau,t}),\\
&\mathbb{E}\!\left[\|\nabla f(w_{\tau,t};z_{\tau,i_t})\|^2\mid\mathcal{F}_{\tau,t}\right]
=\frac1n\sum_{j=1}^n\|\nabla f(w_{\tau,t};z_{\tau,j})\|^2.
\end{align*}
The function $R_{S_i}$ is $\beta$-smooth. Applying the smoothness inequality to the SGD update and taking the conditional expectation gives
\begin{align*}
&\mathbb{E}\!\left[R_{S_i}(w_{\tau,t+1})-R_{S_i}(w_{\tau,t})\mid\mathcal{F}_{\tau,t}\right]\\
\leq&-\eta_{\tau,t}\left\langle\nabla R_{S_i}(w_{\tau,t}),\nabla R_{S_\tau}(w_{\tau,t})\right\rangle\\
&\quad+\frac{\beta\eta_{\tau,t}^2}{2n}\sum_{j=1}^n\|\nabla f(w_{\tau,t};z_{\tau,j})\|^2\\
\leq&-\eta_{\tau,t}\left\langle\nabla R_{S_i}(w_{\tau,t}),\nabla R_{S_\tau}(w_{\tau,t})\right\rangle+\frac{\beta L^2\eta_{\tau,t}^2}{2},
\end{align*}
where the last inequality uses the gradient bound implied by Assumption~\ref{assump: Lipschitz}. Taking the full expectation and accumulating from $t=0$ to $N-1$, we have
\begin{align*}
&\mathbb{E}\left[R_{S_i}(w_\tau)-R_{S_i}(w_{\tau-1})\right]\\
\leq&-\sum_{t=0}^{N-1}\eta_{\tau,t}\,\mathbb{E}\left\langle\nabla R_{S_i}(w_{\tau,t}),\nabla R_{S_\tau}(w_{\tau,t})\right\rangle+\frac{\beta L^2}{2}\sum_{t=0}^{N-1}\eta_{\tau,t}^2.
\end{align*}
Then, summing over the subsequent tasks gives
\begin{equation}\label{Eqn: EF general bound}
\tilde{\epsilon}_{T,i}
\leq-\mathcal{G}_{T,i}+\frac{\beta L^2}{2}\sum_{\tau=i+1}^T\sum_{t=0}^{N-1}\eta_{\tau,t}^2.
\end{equation}
We next discuss the different learning rates. Under constant learning rates $\eta_{\tau,t}=c$, we have
\begin{align*}
\tilde{\epsilon}_{T,i}\leq-\mathcal{G}_{T,i}+\frac{\beta c^2L^2}{2}N(T-i).
\end{align*}
Under intra-task decaying learning rates $\eta_{\tau,t}=c/(t+1)$, we have
\begin{align*}
\tilde{\epsilon}_{T,i}
&\leq-\mathcal{G}_{T,i}+\frac{\beta c^2L^2}{2}(T-i)\sum_{t=0}^{N-1}\frac1{(t+1)^2}\\
&\leq-\mathcal{G}_{T,i}+\frac{\beta c^2L^2}{2}(T-i)\left(1+\int_1^N\frac{dx}{x^2}\right)\\
&=-\mathcal{G}_{T,i}+\frac{\beta c^2L^2}{2}\left(2-\frac1N\right)(T-i).
\end{align*}
Under continuously decaying learning rates $\eta_{\tau,t}=c/((\tau-1)N+t+1)$, we have
\begin{align*}
\tilde{\epsilon}_{T,i}
&\leq-\mathcal{G}_{T,i}+\frac{\beta c^2L^2}{2}\sum_{k=iN+1}^{TN}\frac1{k^2}\\
&\leq-\mathcal{G}_{T,i}+\frac{\beta c^2L^2}{2}\int_{iN}^{TN}\frac{dx}{x^2}\\
&=-\mathcal{G}_{T,i}+\frac{\beta c^2L^2}{2}\left(\frac1{iN}-\frac1{TN}\right).
\end{align*}
This completes the proof.
\end{proof}

\section{Forgetting in Expectation} \label{Sec: Forgetting}
\subsection{Proof of Theorem~\ref{Thm: FT-convex}}
\begingroup
\allowdisplaybreaks[1]
\interlinepenalty=10000
\begin{proof}
By Equation~\eqref{Eqn: forgetting decomposed} and Theorems~\ref{Thm: IGG} and~\ref{Thm: CGG},
\[
F_T\leq\frac1{T-1}\sum_{i=1}^{T-1}\left(\epsilon_{i,i}+\epsilon_{T,i}+\tilde\epsilon_{T,i}\right).
\]
We apply the convex stability bounds and the EF bound under each learning-rate schedule.

(a) under constant learning rates $\eta_{\tau,t}=c$:
\begin{align*}
F_T\leq&\frac1{T-1}\sum_{i=1}^{T-1}(\epsilon_{i,i}+\epsilon_{T,i}+\tilde\epsilon_{T,i})\\
\leq&-\frac1{T-1}\sum_{i=1}^{T-1}\mathcal{G}_{T,i}\\
&+\frac1{T-1}\sum_{i=1}^{T-1}\left[\frac{\beta c^2L^2}{2}N(T-i)+\frac{4cL^2N}{n}\right]\\
=&-\frac1{T-1}\sum_{i=1}^{T-1}\mathcal{G}_{T,i}\\
&+\frac1{T-1}\left[\frac{\beta c^2L^2N}{2}\frac{T(T-1)}2+\frac{4cL^2N(T-1)}n\right]\\
=&-\frac1{T-1}\sum_{i=1}^{T-1}\mathcal{G}_{T,i}+\frac{\beta c^2L^2NT}{4}+\frac{4cL^2N}{n}.
\end{align*}

(b) under intra-task decaying learning rates $\eta_{\tau,t}=\frac{c}{t+1}$:
\begin{align*}
F_T\leq
&\frac1{T-1}\sum_{i=1}^{T-1}(\epsilon_{i,i}+\epsilon_{T,i}+\tilde\epsilon_{T,i})\\
\leq&-\frac1{T-1}\sum_{i=1}^{T-1}\mathcal{G}_{T,i}+\frac1{T-1}\sum_{i=1}^{T-1}\Biggl[\frac{\beta c^2L^2}{2}\left(2-\frac1N\right)\\
&\times(T-i)+\frac{4cL^2\ln(eN)}n\Biggr]\\
=&-\frac1{T-1}\sum_{i=1}^{T-1}\mathcal{G}_{T,i}+\frac1{T-1}\Biggl[\frac{\beta c^2L^2}{2}\left(2-\frac1N\right)\\
&\times\frac{T(T-1)}2+\frac{4cL^2\ln(eN)(T-1)}n\Biggr]\\
=&-\frac1{T-1}\sum_{i=1}^{T-1}\mathcal{G}_{T,i}+\frac{\beta c^2L^2T}{4}\left(2-\frac1N\right)\\
&+\frac{4cL^2\ln(eN)}{n}.
\end{align*}

For the continuous schedule, we use the finite harmonic-sum estimate
\begin{equation}\label{Eqn: FT harmonic estimate}
\begin{aligned}
\sum_{i=1}^{T-1}\frac1i-\frac{T-1}{T}
&\leq1+\int_1^{T-1}\frac{du}{u}-\frac{T-1}{T}\\
&=\ln(T-1)+\frac1T.
\end{aligned}
\end{equation}
This also holds for $T=2$, when the integral is zero.

(c) under continuously decaying learning rates $\eta_{\tau,t}=\frac{c}{(\tau-1)N+t+1}$:
\begin{align*}
&F_T\\
\leq&\frac1{T-1}\sum_{i=1}^{T-1}(\epsilon_{i,i}+\epsilon_{T,i}+\tilde\epsilon_{T,i})\\
\leq&-\frac1{T-1}\sum_{i=1}^{T-1}\mathcal{G}_{T,i}+\frac1{T-1}\Biggl[\sum_{i=1}^{T-1}\frac{\beta c^2L^2}{2N}\left(\frac1i-\frac1T\right)\\
&+\frac{4cL^2\ln(eN)}n+\sum_{i=2}^{T-1}\frac{4cL^2}n\ln\!\left(1+\frac1{i-1}\right)\Biggr]\\
=&-\frac1{T-1}\sum_{i=1}^{T-1}\mathcal{G}_{T,i}+\frac1{T-1}\Biggl[\frac{\beta c^2L^2}{2N}\left(\sum_{i=1}^{T-1}\frac1i-\frac{T-1}{T}\right)\\
&+\frac{4cL^2\ln(eN)}n+\frac{4cL^2}n\ln(T-1)\Biggr]\\
\leq&-\frac1{T-1}\sum_{i=1}^{T-1}\mathcal{G}_{T,i}+\frac{\beta c^2L^2}{2N(T-1)}\left(\ln(T-1)+\frac1T\right)\\
&+\frac{4cL^2}{n(T-1)}\ln\big(eN(T-1)\big).
\end{align*}
Here, the generalization terms use $\sum_{t=1}^N1/t\leq\ln(eN)$ for task $1$ and the telescoping identity
\[
\sum_{i=2}^{T-1}\ln\!\left(\frac{i}{i-1}\right)=\ln(T-1).
\]
The last inequality follows from Equation~\eqref{Eqn: FT harmonic estimate}. When $T=2$, the logarithmic sum is empty, so the same calculation applies. This completes the proof.
\end{proof}
\endgroup

\subsection{Proof of Theorem~\ref{Thm: FT-non-convex}}
\begin{proof}
By Equation~\eqref{Eqn: forgetting decomposed} and the generalization bounds,
\[
F_T\leq\frac1{T-1}\sum_{i=1}^{T-1}\left(\epsilon_{i,i}+\epsilon_{T,i}+\tilde\epsilon_{T,i}\right).
\]
We evaluate these three contributions under each learning-rate schedule.

(a) Under intra-task decaying learning rates $\eta_{\tau,t}=\frac{c}{t+1}$, Theorem~\ref{Thm: EF} gives
\[
\begin{aligned}
&\frac1{T-1}\sum_{i=1}^{T-1}\tilde\epsilon_{T,i}\\
\leq&-\frac1{T-1}\sum_{i=1}^{T-1}\mathcal{G}_{T,i}+\frac{\beta c^2L^2}{2(T-1)}\left(2-\frac1N\right)
\sum_{i=1}^{T-1}(T-i)\\
=&-\frac1{T-1}\sum_{i=1}^{T-1}\mathcal{G}_{T,i}
+\frac{\beta c^2L^2T}{4}\left(2-\frac1N\right).
\end{aligned}
\]
Under the condition $2cL^2(eN)^{c\beta(T-1)}<N$, for every $i\in[T-1]$ we have
\[
\begin{aligned}
2cL^2&\leq2cL^2(eN)^{c\beta(T-i)}\leq2cL^2(eN)^{c\beta(T-1)}<N.
\end{aligned}
\]
Thus, the interior expressions in Theorems~\ref{Thm: IGG bound non-convex} and~\ref{Thm: CGG bound non-convex} apply to all historical tasks. The IGG bound is the same for every task, so
\[
\begin{aligned}
\frac1{T-1}\sum_{i=1}^{T-1}\epsilon_{i,i}
\leq\frac{c\beta+1}{nc\beta}(2cL^2)^{\frac1{c\beta+1}}N^{\frac{c\beta}{c\beta+1}}-\frac{2L^2}{\beta n}.
\end{aligned}
\]
For CGG, setting $j=T-i$ and summing the interior expressions gives
\[
\begin{aligned}
\sum_{i=1}^{T-1}\epsilon_{T,i}
\leq{}&\frac{c\beta+1}{nc\beta}(2cL^2)^{\frac1{c\beta+1}}N^{\frac{c\beta}{c\beta+1}}\sum_{j=1}^{T-1}(eN)^{\frac{c\beta j}{c\beta+1}}\\
&-\frac{2L^2}{\beta n}\sum_{j=1}^{T-1}(eN)^{c\beta j}.
\end{aligned}
\]
The two geometric sums are
\[
\begin{aligned}
\sum_{j=1}^{T-1}(eN)^{\frac{c\beta j}{c\beta+1}}
&=\frac{(eN)^{\frac{c\beta(T-1)}{c\beta+1}}-1}{1-(eN)^{-\frac{c\beta}{c\beta+1}}},\\
\sum_{j=1}^{T-1}(eN)^{c\beta j}
&=\frac{(eN)^{c\beta(T-1)}-1}{1-(eN)^{-c\beta}}.
\end{aligned}
\]
Both denominators are positive since $eN>1$ and $c\beta>0$. Dividing the CGG sum by $T-1$, adding the IGG and EF contributions, and collecting the common coefficients gives Equation~\eqref{Eqn: FT ID explicit}. When $T=2$, each geometric sum contains one term, and the same identities apply.

(b) Under continuously decaying learning rates $\eta_{\tau,t}=\frac{c}{(\tau-1)N+t+1}$:

The unrolled CGG recursion in Appendix~\ref{Sec: proof of CGG bound non-convex} gives, for every integer $t_0\in\{0,\ldots,N\}$,
\[
\begin{aligned}
\epsilon_{T,i}\leq\frac1n\Biggl[t_0+2cL^2(TN)^{c\beta}
\sum_{k=t_0+1}^{N}\frac1{((i-1)N+k)^{c\beta+1}}\Biggr].
\end{aligned}
\]
The summand is nonnegative and nonincreasing in $k$. Applying Equation~\eqref{Eqn: integer-cutoff comparison} and substituting $v=(i-1)N+u$ therefore gives, for each $i\in[T-1]$,
\[
\epsilon_{T,i}\leq\frac1n\int_{(i-1)N}^{iN}
\min\left\{1,\frac{2cL^2(TN)^{c\beta}}{v^{c\beta+1}}\right\}\,dv.
\]
The integrand is the same for all tasks. Summing the integrals over their adjacent intervals yields
\[
\begin{aligned}
&\sum_{i=1}^{T-1}\epsilon_{T,i}\leq\frac1n\int_0^{(T-1)N}
\min\left\{1,\frac{2cL^2(TN)^{c\beta}}{v^{c\beta+1}}\right\}\,dv.
\end{aligned}
\]
The two arguments of the minimum are equal at
\[
\widehat v_0=(2cL^2)^{\frac1{c\beta+1}}(TN)^{\frac{c\beta}{c\beta+1}}.
\]
If $2cL^2T^{c\beta}<N(T-1)^{c\beta+1}$, then $\widehat v_0<(T-1)N$. Splitting the integral at $\widehat v_0$ gives
\[
\begin{aligned}
\sum_{i=1}^{T-1}\epsilon_{T,i}
\leq&\frac{\widehat v_0}n+\frac{2cL^2(TN)^{c\beta}}n
\int_{\widehat v_0}^{(T-1)N}v^{-c\beta-1}\,dv\\
=&\frac{\widehat v_0}n+\frac{2L^2(TN)^{c\beta}}{\beta n}
\left[\widehat v_0^{-c\beta}-((T-1)N)^{-c\beta}\right]\\
=&\frac{c\beta+1}{nc\beta}(2cL^2)^{\frac1{c\beta+1}}(TN)^{\frac{c\beta}{c\beta+1}}-\frac{2L^2}{\beta n}\left(\frac{T}{T-1}\right)^{c\beta}.
\end{aligned}
\]
If $2cL^2T^{c\beta}\geq N(T-1)^{c\beta+1}$, then $\widehat v_0\geq(T-1)N$, so the integrand equals $1$ throughout the interval and
\[
\sum_{i=1}^{T-1}\epsilon_{T,i}\leq\frac{(T-1)N}n.
\]

For IGG, the same integer-cutoff comparison gives
\[
\epsilon_{1,1}\leq\frac1n\int_0^N
\min\left\{1,\frac{2cL^2N^{c\beta}}{u^{c\beta+1}}\right\}\,du.
\]
If $2cL^2<N$, the crossing point is $\widehat u_0=(2cL^2)^{1/(c\beta+1)}N^{c\beta/(c\beta+1)}<N$. Splitting the integral at this point gives
\[
\begin{aligned}
\epsilon_{1,1}\leq{}&\frac{\widehat u_0}n+\frac{2cL^2N^{c\beta}}n
\int_{\widehat u_0}^{N}u^{-c\beta-1}\,du\\
={}&\frac{c\beta+1}{nc\beta}(2cL^2)^{\frac1{c\beta+1}}N^{\frac{c\beta}{c\beta+1}}-\frac{2L^2}{\beta n}.
\end{aligned}
\]
If $2cL^2\geq N$, the integrand equals $1$ on $[0,N]$, giving $\epsilon_{1,1}\leq N/n$. For every $i\geq2$, choosing the integer cutoff $t_0=0$ in Equation~\eqref{Eqn:bound-IGG-CD} gives
\[
\epsilon_{i,i}\leq\frac{2L^2}{\beta n}
\left[\left(1+\frac1{i-1}\right)^{c\beta}-1\right].
\]
For $0\leq x\leq\ln2$, convexity of $e^{c\beta x}$ implies
\[
e^{c\beta x}-1\leq\frac{2^{c\beta}-1}{\ln2}\,x.
\]
Taking $x=\ln(1+1/k)$ and summing over $k=1,\ldots,T-2$, we obtain
\[
\begin{aligned}
\sum_{i=2}^{T-1}\epsilon_{i,i}
&\leq\frac{2L^2}{\beta n}
\sum_{k=1}^{T-2}\left[\left(1+\frac1k\right)^{c\beta}-1\right]\\
&\leq\frac{2L^2}{\beta n}\frac{2^{c\beta}-1}{\ln2}
\sum_{k=1}^{T-2}\ln\!\left(\frac{k+1}{k}\right)\\
&=\frac{2L^2}{\beta n}\frac{2^{c\beta}-1}{\ln2}\ln(T-1).
\end{aligned}
\]
For $T=2$, these sums are empty and both sides are zero. Thus, when $2cL^2<N$,
\[
\begin{aligned}
\sum_{i=1}^{T-1}\epsilon_{i,i}
\leq{}&\frac{c\beta+1}{nc\beta}(2cL^2)^{\frac1{c\beta+1}}N^{\frac{c\beta}{c\beta+1}}\\
&+\frac{2L^2}{\beta n}
\left[\frac{2^{c\beta}-1}{\ln2}\ln(T-1)-1\right].
\end{aligned}
\]
When $2cL^2\geq N$, adding the first-task endpoint bound instead gives
\[
\sum_{i=1}^{T-1}\epsilon_{i,i}
\leq\frac{N}{n}+\frac{2L^2}{\beta n}\frac{2^{c\beta}-1}{\ln2}\ln(T-1).
\]
Finally, the finite harmonic sum satisfies
\[
\begin{aligned}
\sum_{i=1}^{T-1}\frac1i-\frac{T-1}{T}
&\leq1+\int_1^{T-1}\frac{du}{u}-\frac{T-1}{T}\\
&=\ln(T-1)+\frac1T.
\end{aligned}
\]
Combining this inequality with Theorem~\ref{Thm: EF} gives
\[
\begin{aligned}
&\frac1{T-1}\sum_{i=1}^{T-1}\tilde\epsilon_{T,i}\\
\leq &-\frac1{T-1}\sum_{i=1}^{T-1}\mathcal{G}_{T,i}+\frac{\beta c^2L^2}{2N(T-1)}
\left(\ln(T-1)+\frac1T\right).
\end{aligned}
\]
The two IGG cases and the two cumulative CGG cases give the four combinations in Equation~\eqref{Eqn: FT CD explicit} after division by $T-1$ and addition of the EF bound. In particular, when both cutoff points reach their endpoints, their combined contribution is
\[
\frac1{T-1}\left(\frac{N}{n}+\frac{(T-1)N}{n}\right)=\frac{NT}{n(T-1)}.
\]
At $2cL^2=N$, the first-task interior expression equals $N/n$. At $2cL^2T^{c\beta}=N(T-1)^{c\beta+1}$, the cumulative CGG interior expression equals $(T-1)N/n$. Thus the expressions agree on each boundary, including their intersection. The empty IGG tail at $T=2$ was handled above, so all stated cases are covered. This completes the proof.
\end{proof}

\section{Experiments on Split MiniImageNet with ResNet-18}\label{Appendix:experiments-on-miniimagenet}

\begin{figure*}[t]
    \centering
    \begin{subfigure}{1\textwidth}
        \centering
        \includegraphics[width=\linewidth]{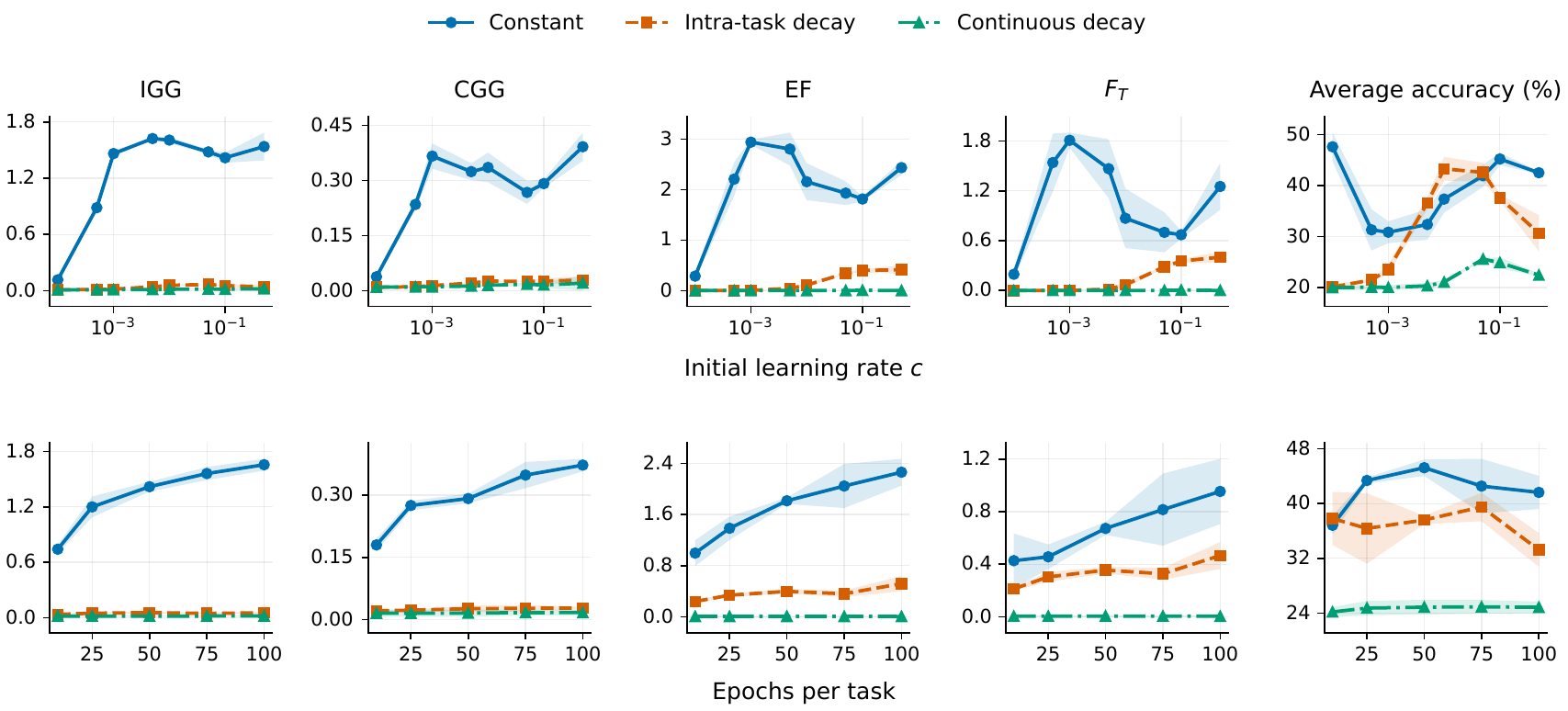}
    \end{subfigure}
    \caption{\textbf{IGG, CGG, EF, $F_T$, and ACC under different learning-rate schedules on the Split MiniImageNet benchmark.}
The top row shows results for different initial learning rates $c$, while the bottom row shows results for different numbers of training epochs per task. }
\label{fig:lr_epochs-mini}
\end{figure*}

\textbf{Experimental setting on Split MiniImageNet.  } The Split MiniImageNet dataset is constructed by dividing the 100 classes of MiniImageNet into 20 sequential tasks, each containing 5 classes. We use a ResNet-18 backbone, following a similar architecture to that used by \citet{lopez2017gradient}. We vary the number of epochs per task in $\{10,25,50,75,100\}$, the initial learning rate in $\{10^{-4},5\times10^{-4},10^{-3},5\times10^{-3},10^{-2},5\times10^{-2},10^{-1},5\times10^{-1}\}$, and the number of samples per task in $\{500,1000,1500,2000\}$. To examine the effect of the number of epochs per task, we fix $c=0.1$ and use the original training set setting. To examine the effect of the initial learning rate, we fix the number of epochs per task to 50 with the same training set setting. To examine the effect of the number of training samples per task, we fix $c=0.1$ and use a common budget of $N=1600$ updates per task across all sample sizes. Each experiment is repeated with random seeds 1, 2, and 3, and the figures report the mean and standard deviation across the three runs.  

\textbf{Measurement of cumulative gradient alignment.}  For each task, we construct a measurement buffer by uniformly sampling batches without replacement from its fixed training-set evaluation partition. We retain four batches on Split CIFAR-100 and two batches on Split MiniImageNet, with up to 64 samples per batch, giving maximum buffer sizes of 256 and 128 samples per task, respectively. At step $t$ of task $\tau$, let $g_{\tau,t}$ denote the gradient of the mean training loss on the mini-batch actually used for the current SGD update. For each previous task $i<\tau$, we uniformly sample one batch from its buffer to obtain a gradient estimate $\widehat{g}_{i,\tau,t}$ in evaluation mode using the corresponding task head. Both $\widehat{g}_{i,\tau,t}$ and the current training gradient $g_{\tau,t}$ are evaluated at the same parameters $w_{\tau,t}$ before the update. We then compute
\[
\widehat{\mathcal{G}}_{T,i}
=
\sum_{\tau=i+1}^{T}\sum_{t=0}^{N-1}
\eta_{\tau,t}
\left\langle \widehat{g}_{i,\tau,t},g_{\tau,t}\right\rangle.
\]
Each inner product is multiplied by the learning rate used for that update and accumulated over all subsequent updates, without normalization by gradient norms, the number of updates, or the number of tasks. The buffers are used exclusively for measurement. We plot $-\widehat{\mathcal{G}}_{T,i}$ alongside EF for each previous task $i$, reporting the mean and standard deviation across the three training seeds.

\begin{figure}[htbp]
    \centering
    \begin{subfigure}{0.485\textwidth}
        \centering
        \includegraphics[width=\linewidth]{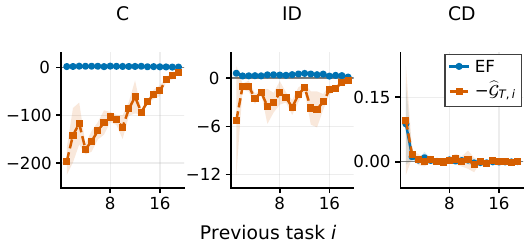}
    \end{subfigure}
    
    \caption{\textbf{EF and CGA under different learning-rate schedules on Split MiniImageNet.} C, ID, and CD denote constant, intra-task decaying, and continuously decaying learning rates, respectively. Curves show the means over three seeds, and shaded regions indicate one standard deviation.}
\label{fig:alignment-mini}
\end{figure}

\begin{figure}[htbp]
    \centering
    \begin{subfigure}[b]{0.485\textwidth}
        \centering
        \includegraphics[width=\linewidth]{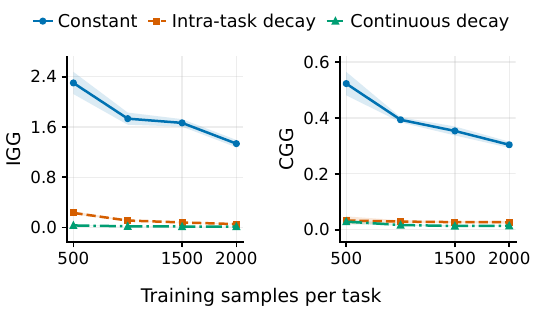}
    \end{subfigure}
   \caption{\textbf{IGG and CGG with different numbers of training samples per task on Split MiniImageNet.}}
\label{fig:samples-mini}
\end{figure}

\begin{figure}[htbp]
    \centering
    \begin{subfigure}[b]{0.485\textwidth}
        \centering
        \includegraphics[width=\linewidth]{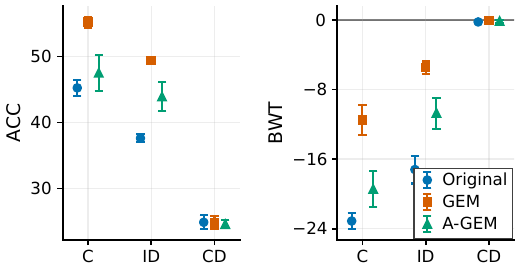}
    \end{subfigure}
    \caption{\textbf{ACC and BWT for original CL, GEM, and A-GEM under different learning-rate schedules on Split MiniImageNet.} Higher ACC indicates better predictive performance, while higher BWT indicates less forgetting.}
\label{fig:methods-mini}
\end{figure}

\end{document}